\newcommand*{\rom}[1]{\expandafter\@slowromancap\romannumeral #1@}
\def\btheta{\boldsymbol{\theta}}
\def\bxi{\boldsymbol{\xi}}
\def\bPhi{\boldsymbol{\Phi}}
\def\mbb{\mathbf{b}}
\def\mbn{\mathbf{n}}
\def\mbx{\mathbf{x}}
\def\mby{\mathbf{y}}
\def\mbz{\mathbf{z}}
\def\mbA{\mathbf{A}}
\def\mbB{\mathbf{B}}
\def\mbD{\mathbf{D}}
\def\mbI{\mathbf{I}}
\def\mbQ{\mathbf{Q}}
\def\mbR{\mathbf{R}}
\def\mbW{\mathbf{W}}
\def\mbX{\mathbf{X}}
\def\mbY{\mathbf{Y}}
\newtheorem{theorem}{Theorem}
\newtheorem{lemma}{Lemma}
\theoremstyle{definition}
\newtheorem{definition}{Definition}
\algnewcommand\algorithmicinput{\textbf{Input:}}
\algnewcommand\Input{\item[\algorithmicinput]}
\algnewcommand\algorithmicoutput{\textbf{Output:}}
\algnewcommand\Output{\item[\algorithmicoutput]}
\algnewcommand\algorithmicinit{\textbf{Initialize:}}
\algnewcommand\Init{\item[\algorithmicinit]}
\definecolor{pleasant_red}{HTML}{FFB6C1}  
\definecolor{golden_color}{HTML}{FFD700}   
\definecolor{spring_green}{HTML}{00FF7F}   
\title{Physics-Inspired Binary Neural Networks:\\ Interpretable Compression with Theoretical Guarantees}
\author{
  Arian Eamaz$~^{*}$\\
  Department of Electrical and Computer Engineering \\
  University of Illinois Chicago \\
  Chicago, IL, USA \\
  \texttt{aeamaz2@uic.edu} \\
  \And Farhang Yeganegi \thanks{The first two authors contributed equally to this work.} \\
  Department of Electrical and Computer Engineering \\
  University of Illinois Chicago \\
  Chicago, IL, USA \\
  \texttt{fyegan2@uic.edu} \\
  \And Mojtaba Soltanalian \\
  Department of Electrical and Computer Engineering \\
  University of Illinois Chicago \\
  Chicago, IL, USA \\
  \texttt{msol@uic.edu} \\
}
\begin{document}
\maketitle

\begin{abstract}
Why rely on dense neural networks and then blindly sparsify them when prior knowledge about the problem structure is already available? Many inverse problems admit algorithm-unrolled networks that naturally encode physics and sparsity. In this work, we propose a Physics-Inspired Binary Neural Network (PIBiNN) that combines two key components: (i) data-driven one-bit quantization with a single global scale, and (ii) problem-driven sparsity predefined by physics and requiring no updates during training. This design yields compression rates below one bit per weight by exploiting structural zeros, while preserving essential operator geometry. Unlike ternary or pruning-based schemes, our approach avoids ad-hoc sparsification, reduces metadata overhead, and aligns directly with the underlying task. Experiments suggest that PIBiNN achieves advantages in both memory efficiency and generalization compared to competitive baselines such as ternary and channel-wise quantization.
\end{abstract}


\section{Introduction}
\label{sec1}
Large neural networks, including Large Language Models (LLMs), face challenges related to their size, such as requiring vast amounts of memory for storage and retrieval, consuming substantial power, and being inefficient in communicating over networks or storage for inference. This highlights the need for efficient compression methods \cite{ghane2024one,guo2018survey,daliri2024unlocking}. Quantization involves using fewer bits to store each model parameter, while pruning involves setting some parameter values to zero \cite{guo2018survey,frantar2023qmoe,liang2021pruning}. One effective way to significantly compress a network through quantization is by applying coarse quantization, or one-bit quantization, which converts parameters into binary values \cite{nagel2021white,zhang2022quantization}. A recent paper demonstrates that an LLM with just $1.58$-bit or ternary weights (i.e., $\{-1,0,1\}$) in its linear layers can perform as well as a high-resolution model with $3\times$ more layers  \cite{ma2024era}. 
\begin{figure}[t]
	\centering
		{\includegraphics[width=.98\columnwidth]{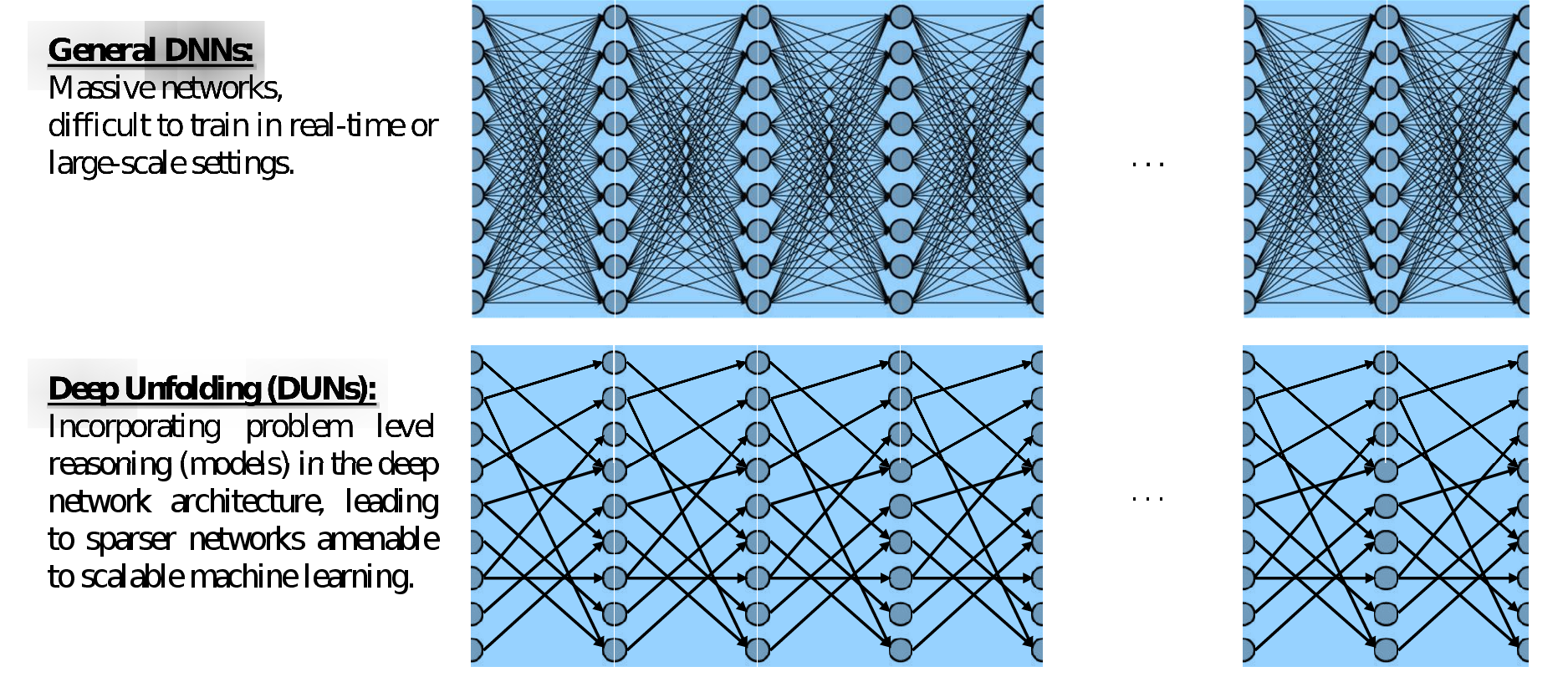}}
        \vspace{-.2cm}
	\caption{General DNNs vs DUNs. DUNs appear to be an excellent tool in large-scale  machine learning applications due to their inherent \emph{physics-inspired sparsity.}}
\vspace{-14pt}
\label{figure_0}
\end{figure}

Although pruning and quantization are popular sparsification techniques, in many tasks, we already have some form of a priori knowledge about the problem. Instead of blindly using a dense network and then compressing it with such methods, this knowledge can be embedded directly into the architecture. In fact, there exists a class of neural architectures in which the sparsity arises from the network topology and from the physics or mathematics of the underlying task, thereby avoiding manual zeroing of neurons \cite{zhang2023physics,le2015deep,gilton2021deep}. 

The canonical example is algorithm unrolling (also called deep unfolding): a classical iterative solver for the target problem is \textbf{unrolled} into a feed-forward network, with each iteration mapped to one layer and the forward pass emulating a fixed number of solver steps. Such Deep Unrolled Networks (DUNs) embed problem operators (e.g., sensing matrices, proximal maps, or linear transforms) and constraints directly into the architecture, yielding structured sparsity, far fewer trainable parameters, and strong sample efficiency compared with generic deep networks \cite{yang2018admm}. By leveraging domain knowledge, DUNs can solve optimization problems efficiently (see Fig.~\ref{figure_0}) and often train faster while generalizing better \cite{chen2022learning}. For instance, in sparse signal recovery, unrolled models have been shown to surpass classical solvers in accuracy at substantially lower inference cost.
 
For challenging problems, increasing the number of iterations is straightforward and computationally manageable in classical methods. However, in DUNs, adding iterations equates to adding layers and parameters, such as weight matrices, which can be inefficient for inference. Pruning the weights may not be a viable solution because each weight in a DUN plays a critical role in solving the primary objective, and removing them would be counterproductive. Therefore, a more practical approach to compressing the network is to employ weight quantization, especially for linear transformation weights. 
\vspace{-8pt}
\subsection{ Motivations and Contributions}
While one-bit quantization has shown great success in dense architectures like Transformers (e.g., BitNet proposed in \cite{wang2023bitnet}), applying it to DUNs presents unique challenges and opportunities. Our work is the first to explore this, demonstrating that the inherent sparsity from the underlying inverse problems generating the data allows for even greater compression than state-of-the-art methods and reduces training data requirements. One of the opportunities algorithm unrolling offers is interpretability: its algorithmic structure guides compression by assigning each weight and nonlinearity a clear task-level role rather than leaving them as opaque learned parameters. Moreover, this structure provides access to provable properties that naturally translate into theoretical discussions and explicit bounds on generalization ability. We refer to the proposed network as Physics-Inspired Binary Neural Network (\textbf{PIBiNN}).
Our main contributions in this paper are:\\
\textbf{1) Deepening a DUN is no longer a computational issue.}
We apply Quantization-Aware Training (QAT) to the unrolling algorithm, allowing the network to adapt to quantization noise during training \cite{nagel2021white}. While QAT typically requires costly pre-training \cite{nagel2020up}, this burden is greatly reduced for sparse architectures such as DUN, which need far less data and compute. Moreover, using one-bit weights makes it possible to efficiently increase the number of layers or iterations in the algorithm: our results show that a one-bit DUN with $4\times$ more layers surpasses the high-resolution model in both training and inference.\\
\textbf{2) Data-driven matrix-wise binarization across all layers.}
The choice of scale strongly affects quantization performance. Unlike prior work \cite{wang2023bitnet,ma2024era} that computes layer-wise scales in closed form by minimizing squared error between full-precision and binary weights, we adopt a data-driven matrix-wise approach. Our method learns a \emph{single} global scale shared across all layers, greatly reducing parameter overhead. Using one scale is also more efficient at inference, as it avoids per-layer rescaling and simplifies deployment. Interestingly, our numerical results show that, with careful design, this one-scale scheme achieves performance comparable to more complex channel-wise quantization.\\
\textbf{3) Embedding sparse structure from problem physics into the network.} By embedding the sparse structure of the problem directly into the network, we exploit both data and the problem’s inherent physics, making training more effective. For example, block-structured matrices naturally induce sparse, block-diagonal weights, so sparsity emerges from the unrolled architecture itself rather than from post-hoc pruning. This predefined sparsity reduces complexity, improves scalability, and requires no updates during training. Compared to other schemes, the PIBiNN learns a single scale and automatically obtains zeros from problem-driven sparsity, while ternary requires a third state ($0$) and per-block scales. This reduces metadata overhead and achieves fewer effective bits per link. To demonstrate this numerically, we compare both approaches on synthetic and real datasets, showing that sparsity arising from the structure can lead to better performance than sparsity applied during training epochs.\\
\textbf{4) Theoretical analysis and convergence guarantees of one-bit algorithm unrolling.}
Previous work \cite{chen2020understanding} analyzed the generalization ability of algorithm unrolling for quadratic programs, showing that the generalization gap depends on key algorithmic properties, such as the convergence rate. While the study examined features beyond simple gradient descent, it could not derive a convergence rate when a neural network was used for the update process. We extend this analysis to sparse quadratic programs, providing the convergence rate for our one-bit algorithm unrolling. A key distinction of our approach is that each algorithm iteration is transformed into a separate layer of the unrolled network, unlike \cite{chen2020understanding}, where the same network (with at least 10 layers) is reused for every update.

From a convergence perspective, prior work on Learned Iterative Shrinkage-Thresholding Algorithm (LISTA) \cite{chen2018theoretical} requires the weights of the DUN to belong to a specific ``good'' set, which is non-empty, to guarantee convergence. Ensuring such conditions for binary weights is challenging. We address this by modifying the proof framework to establish relaxed, achievable conditions for one-bit LISTA. Moreover, whereas previous analyses assume a fixed sensing matrix \cite{chen2018theoretical,chen2021hyperparameter,aberdam2021ada,yang2020learning}, our work also analyzes the robustness of the algorithm when the sensing matrix varies.\\
\textbf{5) Numerical ablation study for a large model.}
We conducted a detailed numerical analysis on a large-scale network where the data arises from an inverse problem with a block-sparse structure. Comparing a fully connected network to our PIBiNN, we show that the parameter count drops from 3 billion to just 100K one-bit parameters, achieving a $\mathbf{99.9\%}$ compression rate in bits. Despite this significant reduction, the model maintains competitive performance. 

\emph{Notation:} Throughout this paper, we use bold lowercase and bold uppercase letters for vectors and matrices, respectively. $\|\cdot\|$ denotes the second norm for both vector and matrix. The set $[n]$ is defined as $[n]=\left\{1,\cdots,n\right\}$. The covering number
$\mathcal{N}\left(\epsilon,\mathcal{F},\|\cdot\|_k\right)$ is defined as the cardinality of the smallest subset $\mathcal{F}^{\prime}$ of $\mathcal{F}$ for which every element of
$\mathcal{F}$ is within the $\epsilon$-neighborhood of some element of $\mathcal{F}^{\prime}$ with respect to the $k$-th norm $\|\cdot\|_k$. 
The hard-thresholding (HT) operator is defined as 
$\operatorname{HT}(x) = x \mathbb{I}_{x\geq 0}$,      
where $\mathbb{I}_{x\geq 0}$ is the indicator function. The soft-thresholding (ST) operator is represented by
$\operatorname{ST}_{\theta}(x) = \operatorname{sgn}(x) \left(|x|-\theta\right)$.
We define the operators $\operatorname{Ro}(\mbx_{1:u})$ and $\operatorname{Co}(\mbx_{1:u})$ as the row-wise and column-wise concatenations of the vectors $\{\mbx_i\}_{i=1}^u$, respectively. For a matrix $\mbB$, the operator $\operatorname{BlockDiag}_{u}(\mbB)$ constructs a block diagonal matrix by repeating the matrix $\mbB$, $u$ times along its diagonal entries. For a matrix $\mbQ\in\mathbb{R}^{n\times n}$ and an index set $\mathcal{S}$ of cardinality $s$, we define $\bar{\mbQ} = \mbQ[\mathcal{S}, :]$ as the $s \times n$ submatrix of $\mbQ$ consisting of the rows indexed by $\mathcal{S}$. Likewise, we define  
$\Tilde{\mbQ} = \mbQ[\mathcal{S}, \mathcal{S}]$ as the $s \times s$ submatrix formed by selecting both rows and columns indexed by $\mathcal{S}$. Finally, for a vector $\mbb$, we denote by $\bar{\mbb} = \mbb[\mathcal{S}]$ the $s$-dimensional subvector corresponding to the entries indexed by $\mathcal{S}$.
\vspace{-5pt}
\section{Algorithm Unrolling}
\label{sec2}
Assume we have the following optimization problem:
\begin{equation}
\label{eq1}
\mathcal{P}:~\underset{\mbx\in\mathcal{K}}{\textrm{minimize}}\quad f(\mbx)+\gamma \mathcal{R}(\mbx),
\end{equation}
where the target parameter $\mbx$ belongs to an arbitrary set $\mathcal{K}$ and the regularizer $\mathcal{R}(\cdot)$ is applied to $\mbx$
with a tuning parameter $\gamma$. 
If the objective is differentiable, one approach to solving the problem is to use the gradient descent method,
\begin{equation}
\label{eq2}
\mbx_{k+1}=\mbx_{k}-\alpha \nabla f(\mbx_{k})-\alpha\gamma \nabla \mathcal{R}(\mbx_{k}).
\end{equation}
The core idea of algorithm unrolling is to substitute the gradient term, specifically the gradient of the regularizer, with a neural network. In the proximal formulation, this substitution can be expressed as an operator $\mathcal{N}_{\btheta}$, where the weights $\btheta$ are learned from the training data:
\begin{equation}
\label{eq3}
\mbx_{k}=\mathcal{N}_{\btheta}\left(\mbx_{k-1}-\alpha \nabla f(\mbx_{k-1})\right).
\end{equation}
More generally, this problem can be addressed using any applicable algorithm that can be unrolled into a DUN:
\begin{equation}
\mbx_{k}=\operatorname{Alg}^{k}_{\btheta_k}\left(\mbx_{k-1},\mbb\right),~k\in[K],
\end{equation}
with the algorithm parameter set $\btheta_k$ 
and $\mbb$ represents the measurements.
After processing $K$ iterations, the learning problem is then to find the best model by minimizing the empirical loss function
\begin{equation}
\underset{\btheta}{\textrm{minimize}} \quad \mathbb{P}_N\ell_{\btheta},
\end{equation}
where $\btheta=\left\{\btheta_k\right\}^{K}_{k=1}$ are learnable parameters and 
$\mathbb{P}_{N}\ell_{\btheta}=\frac{1}{N}\sum^{N}_{i=1}\left\|\operatorname{Alg}^{K}_{\btheta}\left(\mbx_{K-1},\mbb_i\right)-\mbx^{\mathrm{opt}}_i\right\|$,
with $\mbx^{\mathrm{opt}}$ being the true labels. Using a neural network (typically with more than $10$ layers) for each iteration can lead to training instability due to the large parameter space and challenges in providing theoretical guarantees \cite{gilton2021deep}. Moreover, a network used as a proximal operator may struggle to accurately replicate the behavior of the actual operator for structured parameters, such as sparse signals or low-rank matrices.

Another approach is to design a network based on the algorithm's iterative process, where the number of layers in the network corresponds to the number of algorithmic iterations. In this framework, the activation function within each layer mimics the proximal operator rather than utilizing a multi-layer network at each iteration. One well-known example is the Ccompressed Ssensing (CS) problem, for which ISTA is a classic solver \cite{daubechies2004iterative}.
To unroll CS solvers, numerous DUNs have been proposed, among which the most widely known is LISTA:
\begin{equation}
\label{eq12}
\mathbf{x}_{k}=\operatorname{ST}_{\theta_{k}}\bigl(\mathbf{x}_{k-1}-\mathbf{W}_k^{\top}\bigl(\mathbf{A} \mathbf{x}_{k-1}-\mathbf{y}\bigr)\bigr),~k\in[K],
\end{equation}
where $\bigl\{\theta_{k},\mbW_{k}\bigr\}^{K}_{k=1}$ are the parameters to be trained, $\mbA$ is a sensing matrix, and $\mby$ is the measurement vector.
This model plays a central role in various applications, ranging from image denoising to ultrasonic image recovery \cite{chen2018theoretical,zhang2023physics,monga2021algorithm}.
\section{One-bit Unrolling: Training Process and Solvers}
\label{sec3}

Denote the weights or linear transformations within each layer by $\mbW^{(k)}=[W_{i,j}^{(k)}]\in\mathbb{R}^{n_1\times n_2}$ for $k\in[K]$. Our approach in QAT consists of two stages, discussed below.

\textbf{Stage I}: In this stage, we consider an scaled one-bit quantization of weights. 

Specifically, we investigate two approaches: (i) training with the straight-through gradient method or lazy projection, and (ii) conducting QAT as a regularized optimization process. The straight-through gradient method is equivalent to a projected gradient method, as follows:
\begin{equation}
\label{eq19}
\left\{\begin{array}{l}
\mbz^{(t)}=\lambda_{0}\operatorname{sign}\bigl(\btheta^{(t)}\bigr),\\
\btheta^{(t+1)}=\btheta^{(t)}-\eta_t \nabla \mathbb{P}_N\ell_{\btheta}|_{\btheta=\mbz^{(t)}},
\end{array}\right.
\end{equation}
where 
$\btheta^{(t)}$ represents the network's parameters at the $t$-th epoch.
Notably, in this work, we apply the quantization operator exclusively to the network weights $\mbW^{(k)}$ as
$\lambda_0\operatorname{sign}\bigl(W_{i,j}^{(k)}\bigr)= R_{i,j}^{(k)}$.
To help the solver escape local optima that may arise due to hard projections and to encourage progress toward better solutions, the effect of hard projections can be alleviated through regularization \cite{bai2018proxquant}:
\begin{equation}
\label{eq20}
\mathcal{P}^{(\text{QAT})}:~\underset{\btheta}{\operatorname{minimize}}~\mathbb{P}_N\ell_{\btheta}
+\beta \mathcal{R}\left(\btheta\right).
\end{equation}
For example, one may use $\ell_1$ regularization
$\mathcal{R}\left(\btheta\right) =\sum_{j} \min\left\{\left|\theta_{j}-\lambda_0\right|, \left|\theta_{j}+\lambda_0\right|\right\}$.
In this version, the update process in each epoch becomes
\begin{equation}
\label{eq22} 
\btheta^{(t+1)}=\operatorname{prox}_{\eta_t \beta \mathcal{R}}\bigl(\btheta^{(t)}-\eta_t\nabla \mathbb{P}_N\ell_{\btheta}|_{\btheta=\btheta^{(t)}}\bigr).
\end{equation}

\textbf{Stage II:} Previous research used $\frac{\sum_{i,j}|W_{i,j}|}{n_1n_2}$ as the per-layer scale for one-bit quantization \cite{wang2023bitnet}. However, we propose a data-driven method to determine this value more effectively. Additionally, we use a single scale for all layers of the network, which can introduce more efficiency to the trained model during the inference phase.

Let $\widehat{\btheta}$ represent the solution obtained after \textbf{Stage I}, where the weights of the network are given by 
$\widehat{W}_{i,j}^{(k)}=\widehat{R}_{i,j}^{(k)},~\widehat{R}_{i,j}^{(k)}\in\{-\lambda_0,\lambda_0\},(i,j)\in[n_1]\times[n_2],~ k\in[K]$
or more compactly as $\widehat{\mbW}^{(k)}=\widehat{\mbR}^{(k)}$ for $k\in[K]$. In this stage, we change $\lambda_0$ to $\lambda_0\lambda$, where the goal is to optimize the following objective with respect to $\lambda$ given the training dataset:
\begin{equation}
\label{eq26}
\mathcal{P}^{(\lambda)}:~\underset{\lambda}{\operatorname{minimize}}~\mathbb{P}_N\ell_{\lambda},
\end{equation}
where 
$\mathbb{P}_N\ell_{\lambda}=\frac{1}{N}\sum_{i=1}^{N}\left\|\operatorname{Alg}^{K}_{\lambda \widehat{\btheta}}\bigl(\mbx_{K-1},\mbb_i\bigr)-\mbx^{\mathrm{opt}}_i\right\|$
Note that $\lambda$ only affects the weights of the network and not the parameters of the activation function within $\lambda\widehat{\btheta}$.
\section{Towards Sparse Network with Algorithm Unrolling}
\label{sec4}
Although algorithm unrolling reduces the number of parameters compared to a Fully Connected Network (FCN), it still preserves full connectivity between the input and output at each layer. \emph{Note that even a DUN only emulates the solver without reflecting the structure of the sensing matrix, effectively remaining a dense network. The core idea of the PIBiNN is to impose the problem’s structure onto the dense DUN.} If the original inverse problem involves a sparse, block-structured matrix, we can partition the inputs, where each partition of the output is generated by only a corresponding subset of the input rather than all input elements. This structure results in a sparse, block-structured matrix that the iterative algorithms must respect, reflecting the original problem's sparsity pattern.



For example, consider the following linear matrix equation as the original inverse problem, where the goal is to recover 
$\mbX=\operatorname{Ro}(\mbx_{1:u})\in\mathbb{R}^{p\times u}$
from the measurement matrix 
$\mbY=\operatorname{Ro}(\mby_{1:u})\in\mathbb{R}^{v\times u}$,
using the sensing matrix 
$\mbA\in\mathbb{R}^{v \times p}$,
$\mbY = \mbA\mbX$.
This equation can be reformulated as 
$\operatorname{vec}\left(\mbY\right) = \mbA^{\prime}\operatorname{vec}\left(\mbX\right)$, where $\mbA^{\prime}=\left(\mbI_{u}\otimes \mbA\right)$.
Therefore, we have a block-structured matrix whose diagonal parts are $\mbA$. In this equation, each column of $\mbX$ is assigned to the corresponding column in the measurement matrix $\mbY$, and the column vectors of the input matrix do not influence the output of other columns. To solve this set of linear equations, we can either apply proximal solver to each equation individually or treat the concatenation of all equations as a single update process, as described in the following model for $k\in[K]$:
\vspace{-5pt}
\begin{equation}
\begin{aligned}
\operatorname{Co}(\mbx_{1:u,k})=\mathcal{H}_{\theta_k}\bigl(\operatorname{Co}(\mbx_{1:u,k-1})-\widehat{\mbW}_k^{\top}\left(\mbA^{\prime}\operatorname{Co}(\mbx_{1:u,k-1})-\operatorname{Co}(\mby_{1:u})\right)\bigr).
\end{aligned}
\end{equation} 
where $\mathcal{H}_{\theta}$ is a proximal operator. Since each input partition is independent of the others, and each corresponding output partition is influenced solely by its respective input, we can construct the following weight matrix that is sparse and blocky, mirroring the structure of the sensing matrix:
$\widehat{\mbW}_k=\operatorname{BlockDiag}_{u}(\mbW_k)\in\{-\lambda,0,\lambda\}^{vu\times pu},~k\in[K]$, where $\mbW_k\in\{-\lambda,\lambda\}^{v\times p}$ for $k\in[K]$ are learnable parameters. This block structure ensures that each part of the model interacts only with the relevant input, maintaining the sparsity of the system.
In some problems, the sensing matrix $\mbA$ itself is sparse. In such cases, even certain elements within the partitions are independent of one another, generating their outputs based solely on these corresponding elements. 
\vspace{-5pt}
\section{Generalization Ability}
\label{sec5}
Although the scheme proposed in this paper can be applied to a variety of problems using algorithm unrolling solvers, the theoretical guarantees are established for a hybrid architecture where the neural module is an energy function of the form $E_{\phi}((\bxi,\mbb),\mbx)=\frac{1}{2}\mbx^{\top}\mbQ_{\phi}(\bxi)\mbx-\mbb^{\top}\mbx$, with $\mbQ_{\phi}$ a varying sensing matrix or a neural network that maps $\bxi$ to a matrix, and $\mbx\in\mathcal{K}_{\mathcal{S}}$ where $\mathcal{K}_{\mathcal{S}}$ is defined as $\mathcal{K}_{\mathcal{S}}\triangleq\{\mbx=[x_i]\in\mathbb{R}^n:x_{i}=0, \forall i\in[n]\setminus\mathcal{S},|\mathcal{S}|=s\}.$ This problem is equivalent to the sparse quadratic program. Note that we consider the set $\mathcal{S}$ fixed but unknown in our settings. Extending such guarantees to other problems would require different DUN architectures and, consequently, distinct theoretical analyses, which lie beyond the scope of this paper.

Each energy $E_{\phi}$ can be uniquely represented by $(\mbQ_{\phi}(\bxi),\mbb)$, so we can write the overall optimization problem as 
\begin{equation}
\label{a1}
\begin{aligned}
\mathcal{C}:\quad\underset{\mbx\in\mathcal{K}_{\mathcal{S}}}{\textrm{minimize}} \quad E_{\phi}((\bxi,\mbb),\mbx).
\end{aligned}
\end{equation}
Given $N$ i.i.d. training samples $\{\mbx^{\mathrm{opt}}(\mbQ^{\mathrm{opt}}(\bxi_i),\mbb_i)\}$, the learning task is then to find the best model by performing the optimization 
$\left(\underset{\btheta,\phi}{\textrm{minimize}} \quad \mathbb{P}_N\ell_{\btheta,\phi}\right)$, 
where 
\begin{equation}
\label{cost_1}
\mathbb{P}_{N}\ell_{\btheta,\phi}=\frac{1}{N}\sum^{N}_{i=1}\left\|\operatorname{Alg}^{K}_{\btheta}\left(\mbQ_{\phi}(\bxi_i),\mbb_i\right)-\mbx^{\mathrm{opt}}(\mbQ^{\mathrm{opt}}(\bxi_i),\mbb_i)\right\|.
\end{equation}
The algorithm we consider here is the following unrolled iterative procedure for $\delta \in (0,1]$ (for notational simplicity, we replace $\mbQ_{\phi}(\bxi)$ with $\mbQ$):
\begin{equation}
\label{update_sqp_1}
\mbx_{k} = \mathcal{H}_{\theta_k}\left(\delta\mbx_{k-1}-\mbW^{\top}_{k}\left(\bar{\mbQ}\mbx_{k-1}-\bar{\mbb}\right)\right),~k\in[K],   
\end{equation}
where $\mathcal{H}_{\theta}(\cdot)$ can be either ST or HT operator, and $\left\{\theta_k,\mbW_{k}\in\{-\lambda,\lambda\}^{s\times n}\right\}^{K}_{k=1}$ are trainable parameters. 

To present the generalization result of a given algorithm, we first define three important algorithmic properties as

    
    \textbf{Convergence:} The convergence rate of an algorithm indicates how quickly the optimization error diminishes as $k$ increases,
    \begin{equation}
    \begin{aligned}
    \label{cvg_eq}\left\|\operatorname{Alg}^{k}_{\btheta}\left(\bar{\mbQ},\bar{\mbb}\right)-\mbx^{\mathrm{opt}}\left(\mbQ,\mbb\right)\right\|\leq\operatorname{Cvg}(k)\left\|\mbx_0-\mbx^{\mathrm{opt}}\left(\mbQ,\mbb\right)\right\|.
    \end{aligned}
    \end{equation}
    \textbf{Stability:} The robustness of an algorithm to small perturbations in the optimization
    objective, which corresponds to the perturbation of $\mbQ$ and $\mbb$ in the quadratic case is demonstrated as,
    \begin{equation}
    \begin{aligned}
    \label{stab_eq}\left\|\operatorname{Alg}^{k}_{\btheta}\left(\bar\mbQ,\bar\mbb\right)-\operatorname{Alg}^{k}_{\btheta}\left(\bar\mbQ^{\prime},\bar\mbb^{\prime}\right)\right\|\leq\operatorname{Stab}^{\bar\mbQ}(k)\left\|\Tilde\mbQ-\Tilde\mbQ^{\prime}\right\|+\operatorname{Stab}^{\bar\mbb}(k)\left\|\bar\mbb-\bar\mbb^{\prime}\right\|.
    \end{aligned}
    \end{equation}
    \textbf{Sensitivity:} The robustness to small perturbations in the algorithm parameters $\btheta$ is given by,
    \begin{equation}
    \label{sens_eq}\left\|\operatorname{Alg}^{k}_{\btheta}\left(\bar\mbQ,\bar\mbb\right)- \operatorname{Alg}^{k}_{\btheta^{\prime}}\left(\bar\mbQ,\bar\mbb\right)\right\|\leq \operatorname{Sens}^{\btheta}(k)\left\|\btheta-\btheta^{\prime}\right\|.
    \end{equation}
As we will show in Theorem~\ref{theorem_1}, these algorithmic properties play a critical role in determining the generalization ability of a given algorithm.
As demonstrated in \cite{chen2018theoretical}, to guarantee the convergence of the ST algorithm (e.g., LISTA) for the sparse quadratic program, the network weights must belong to a specific set defined as follows:
\begin{definition}
\label{def_1}
Given $\mbQ\in\mathbb{R}^{n\times n}$ and $\mathcal{I}=\{-\lambda,\lambda\}^{s\times n}$, a weight matrix is ``good'' if it belongs to $\mathcal{X}_{\mbW}(\mbQ)$  defined as
\begin{equation}
\label{cvg1}
\begin{aligned}
\arg\min_{\mbW\in\mathcal{I}}\left\{\max|W_{i,j}|:\mbW_i^{\top}\bar\mbQ_i=1,\max_{i\neq j}|\mbW_{i}^{\top}\bar\mbQ_j|=\mu_Q\right\}.
\end{aligned}
\end{equation}
\end{definition}
Selecting the weight matrices from the set of ``good'' weights imposes a significant constraint within the class of binary weights. This restriction suggests that satisfying the first term of \eqref{cvg1} with a binary weight may be particularly difficult. Even under the assumption that the set in Definition~\ref{def_1} is non-empty, identifying the minimizer of \eqref{cvg1} remains a highly challenging task. It is also important to note that the set of ``good'' weights in Definition~\ref{def_1} only guarantees convergence. However, as we will see in Theorem~\ref{theorem_1}, stability and sensitivity are also key factors, alongside convergence, in determining the generalization ability of an algorithm. Therefore, our objective is to develop an alternative strategy that not only ensures uniform convergence across all training samples but also guarantees generalization for binary models.

In this paper, our strategy centers on analyzing the term 
$\left\|\delta\mbI - \mbW_{\mathcal{S},k}^{\top} \Tilde\mbQ\right\|$
for all $k\in[K]$, where $\mbW_{\mathcal{S},k}$ denotes the submatrix of $\mbW_k$ formed by the columns indexed by $\mathcal{S}$. As shown in Appendices~\ref{App_B}, \ref{App_C}, \ref{App_D}, and \ref{App_E}, this term plays a pivotal role in determining the algorithm properties. Our analysis reveals that this spectral norm must have a certain bound for all $k\in[K]$ to ensure a guaranteed generalizability. 

$\bullet$ \textbf{ST algorithm:} The algorithmic properties of the ST update process can be summarized in the following lemma:
\begin{table}[t]
\caption{Algorithm properties of the ST update process in \eqref{update_sqp_1}.
}
\centering
\resizebox{0.6\columnwidth}{!}{
\begin{tabular}{ c || c|| c|| c|| c
}
\hline
$\operatorname{Cvg}_\theta(k)$ &  $\operatorname{Stab}_\theta^{\bar\mbQ}(k)$ &  $\operatorname{Stab}_\theta^{\bar\mbb}(k)$&  $\operatorname{Sens}_\theta^{\mbW_i}(k)$&  $\operatorname{Sens}_\theta^{\theta_i}(k)$\\[0.5 ex]
\hline
$\mathcal{O}\left(\alpha_{\theta}^{k}\right)$ &  $\mathcal{O}\left(1-\alpha^{k}_{\theta}\right) $ &  $\mathcal{O}\left(1-\alpha^{k}_{\theta}\right)$&  $\mathcal{O}\left(\alpha^{k-i}_{\theta}\right)$&  $\mathcal{O}(\alpha^{k-i}_{\theta})$ 
\\[0.5 ex]
\hline
\end{tabular}
}
\label{table_3}
\vspace{-10pt}
\end{table}
\begin{lemma}
\label{lem_1}
The convergence rate, stability, and sensitivity of the ST algorithm in \eqref{update_sqp_1} are summarized in Table~\ref{table_3}, where 
\begin{equation}
\alpha_\theta=\sup_{k\in[K]}\sup_{\mbQ}\left(\left\|\delta\mbI - \mbW_{\mathcal{S},k}^{\top} \Tilde\mbQ\right\|+\mu_Q^{W_k} s\right).
\end{equation}
\end{lemma}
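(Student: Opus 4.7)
The plan is to reduce all three algorithmic properties to a single per-iteration contraction governed by $\alpha_\theta$, in the style of the LISTA analysis of \cite{chen2018theoretical} but adapted to binary $\mbW_k$. First I would show that, with each $\theta_k$ chosen to dominate the off-support contribution to the argument of $\operatorname{ST}_{\theta_k}$, the iterate $\mbx_k$ stays supported on $\mathcal{S}$ by induction; on $\mathcal{S}^c$ the pre-threshold vector reduces to $-\mbW_{\mathcal{S}^c,k}^{\top}\bar\mbQ(\mbx_{k-1}-\mbx^{\mathrm{opt}})$, whose magnitude is bounded by the mutual-coherence-like quantity $\mu_Q^{W_k}$ summed over the $s$-dimensional support error, which explains the $\mu_Q^{W_k} s$ contribution appearing in $\alpha_\theta$.

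Restricted to the support and using the non-expansiveness of $\operatorname{ST}_{\theta_k}$, the error recursion takes the form $(\mbx_k-\mbx^{\mathrm{opt}})_{\mathcal{S}} = (\delta\mbI - \mbW_{\mathcal{S},k}^{\top}\tilde\mbQ)(\mbx_{k-1}-\mbx^{\mathrm{opt}})_{\mathcal{S}} + (\text{off-support cross term})$, which, once the cross term is absorbed into the coherence bound, yields $\|\mbx_k-\mbx^{\mathrm{opt}}\|\leq\alpha_\theta\|\mbx_{k-1}-\mbx^{\mathrm{opt}}\|$ and hence $\operatorname{Cvg}_\theta(k)=\mathcal{O}(\alpha_\theta^k)$. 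For stability I would write the analogous recursion for two instances with different $(\bar\mbQ,\bar\mbb)$, giving $\|\mbd_k\|\leq\alpha_\theta\|\mbd_{k-1}\| + c_1\|\tilde\mbQ-\tilde\mbQ'\| + c_2\|\bar\mbb-\bar\mbb'\|$; the geometric sum $\sum_{j=0}^{k-1}\alpha_\theta^j=(1-\alpha_\theta^k)/(1-\alpha_\theta)$ then produces the $\mathcal{O}(1-\alpha_\theta^k)$ coefficients for both perturbation terms. For sensitivity with respect to $\mbW_i$ or $\theta_i$, the two trajectories agree exactly for $k<i$, a bounded single-layer jump occurs at layer $i$, and the $\alpha_\theta$-contraction then propagates this jump across the remaining $k-i$ layers, delivering $\mathcal{O}(\alpha_\theta^{k-i})$.

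The main obstacle, as flagged in the paper, is that the rigid equality $\mbW_i^{\top}\bar\mbQ_i=1$ in Definition~\ref{def_1} is generically infeasible within $\{-\lambda,\lambda\}^{s\times n}$, so the classical LISTA proof does not apply verbatim. The key relaxation I would adopt is to abandon this equality and instead impose only the softer spectral requirement $\|\delta\mbI - \mbW_{\mathcal{S},k}^{\top}\tilde\mbQ\| + \mu_Q^{W_k}s<1$, which is precisely the quantity recorded by $\alpha_\theta$ and is achievable by binary $\mbW_k$ once the global scale $\lambda$ is tuned. A secondary subtlety is that the supremum in $\alpha_\theta$ runs over $\mbQ$, so the contraction must hold uniformly over the sampling of $\mbQ_\phi(\bxi)$; this is absorbed directly into the definition of $\alpha_\theta$ by taking the worst case over the admissible $\bxi$, and is compatible with the later stability analysis because the same supremum bound is what propagates through the perturbation recursion.
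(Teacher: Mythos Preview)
Your overall strategy---support preservation by a coherence-based choice of $\theta_k$, then a one-step contraction governed by $\|\delta\mbI-\mbW_{\mathcal{S},k}^{\top}\Tilde\mbQ\|+\mu_Q^{W_k}s$, then geometric sums for stability and propagation of a single jump for sensitivity---is exactly the skeleton the paper uses in Appendices~\ref{App_B}, \ref{App_D1}, and \ref{App_E1}. Two points, however, need correction.

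First, the error recursion you write down is missing a term. Because the update is $\delta\mbx_{k-1}-\mbW_k^{\top}(\bar\mbQ\mbx_{k-1}-\bar\mbb)$ with $\delta\in(0,1]$, the optimum $\mbx^{\mathrm{opt}}$ is \emph{not} a fixed point of the pre-threshold map when $\delta<1$: plugging $\mbx^{\mathrm{opt}}$ in gives $\delta\mbx^{\mathrm{opt}}$, not $\mbx^{\mathrm{opt}}$. Consequently the on-support recursion carries an extra additive $-(1-\delta)\mbx^{\mathrm{opt}}_{\mathcal{S}}$, and the paper's actual one-step bound is $\|\mbx_k-\mbx^{\mathrm{opt}}\|\le\alpha_\theta\|\mbx_{k-1}-\mbx^{\mathrm{opt}}\|+(1-\delta)B$, giving $\operatorname{Cvg}_\theta(k)=\alpha_\theta^kB+\tfrac{1-\delta}{1-\alpha_\theta}B$. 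The $\mathcal{O}(\alpha_\theta^k)$ entry in Table~\ref{table_3} is only the decaying part; your clean contraction $\|\mbx_k-\mbx^{\mathrm{opt}}\|\le\alpha_\theta\|\mbx_{k-1}-\mbx^{\mathrm{opt}}\|$ is false for $\delta<1$. The same additive floor reappears in stability and sensitivity through the quantity $\zeta_\theta(k)$ and the residual terms the paper labels $\text{Term}(I)$, $\text{Term}(a)$, $\text{Term}(b)$.

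Second, the mechanism by which $\mu_Q^{W_k}s$ enters is not quite what you describe. Once support is established there is no ``off-support cross term'' in the on-support error equation; rather, the paper does \emph{not} invoke non-expansiveness of $\operatorname{ST}_{\theta_k}$ at all. It writes $\operatorname{ST}_{\theta_k}(\mathcal{F}_{k-1})=\mathcal{F}_{k-1}-\theta_k\operatorname{sign}(\mathcal{F}_{k-1})$ on $\mathcal{S}$ explicitly, bounds the sign term by $\theta_k\sqrt{s}$, and then substitutes the support-selecting choice $\theta_k=\mu^{W_k}\sup\|\mbx_{k-1}-\mbx^{\mathrm{opt}}\|_1\le\mu^{W_k}\sqrt{s}\sup\|\mbx_{k-1}-\mbx^{\mathrm{opt}}\|$, which is what produces the $\mu_Q^{W_k}s$ summand inside $\alpha_\theta$. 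A bare non-expansiveness step would not deliver this factor in the contraction constant; you would instead get $\|\mbz_{k-1}-\mbx^{\mathrm{opt}}\|$ with the coherence term sitting \emph{outside} the contraction, which does not match the definition of $\alpha_\theta$.
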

In Appendices~\ref{App_B}, \ref{App_D1}, and \ref{App_E1}, we present a thorough analysis and detailed proof of Lemma~\ref{lem_1}. As shown in Table~\ref{table_3}, if $\alpha_\theta\in(0,1)$, the ST update process exhibits bounded algorithmic properties and, consequently, guarantees generalization.

Unlike the set of ``good'' weights in Definition~\ref{def_1}, it is relatively straightforward to demonstrate that a binary network weight exists such that $\alpha_{\theta}\in(0,1)$. To illustrate this, based on our quantization scheme in Section~\ref{sec3}, we can express all weights as $\mbW_k=\lambda\mbB_k$, where $\mbB_k\in\{-1,1\}^{s\times n}$ for all $k\in[K]$. Then, we can write
$\left\|\delta\mbI-\mbW_{\mathcal{S},k}^{\top}\Tilde\mbQ\right\|\leq\delta+\lambda\|\mbB_{\mathcal{S},k}\|
\|\Tilde\mbQ\|$.
Define 
$h=\sup_{k\in[K]}\sup_{\mbQ}\|\mbB_{\mathcal{S},k}\|\|\Tilde\mbQ\|$ 
and 
$\mu_\theta=\sup_{k\in[K]}\sup_{\mbQ}\mu_Q^{W_k}$.
By choosing $\delta$ appropriately and setting $\lambda = \mathcal{O}(h^{-1})$, we can guarantee that $\delta+\lambda h+\mu_\theta s<1$ for sufficiently small value of sparsity level $s$. It is important to note that this choice of $\delta$ and $\lambda$ is specifically designed for the worst-case scenario. In Appendix~\ref{App_F4}, we present an ablation study on the impact of $\delta$ on the convergence and generalization of the ST update process.

$\bullet$ \textbf{HT algorithm:} 
In Appendices~\ref{App_C}, \ref{App_D2}, and \ref{App_E2}, we show that the parameter 
$\alpha_\theta=\sup_{k\in[K]}\sup_{\mbQ}\|\delta\mbI-\mbW^{\top}_{\mathcal{S},k}\Tilde\mbQ\|$ 
plays a crucial role in all algorithmic properties of the HT update process. In terms of order complexity, the impact of $\alpha_\theta$ on all algorithm properties in this case is equivalent to Table~\ref{table_3}. Thus, $\alpha_\theta \in (0,1)$ provides a sufficient condition for the generalization of the HT update process. In Appendix~\ref{App_F4}, we present an ablation study on the impact of $\delta$ on the convergence and generalization of the HT update process.


We will now demonstrate how these algorithm properties affect the generalization gap of algorithm unrolling. It is generally known \cite{chen2020understanding,bartlett2005local} that the standard Rademacher complexity provides global estimates of the complexity of the model space, which ignores the fact that the training process will likely pick models with small errors. Therefore, similar to \cite{chen2020understanding}, we resort to utilizing the local Rademacher complexity, which is defined as follows:
\begin{definition}
\label{local_rad_com}
The local Rademacher complexity of $\ell_{\mathcal{F}}$ at level $r$ is defined as $\mathbb{E} R_N \ell_{\mathcal{F}}^{\text {loc }}(r)$, where 
\begin{equation}
\vspace{-3pt}
\ell_{\mathcal{F}}^{\text {loc }}(r)\triangleq\left\{\ell_{\btheta,\phi}: \phi \in \Phi, \btheta \in \Theta, \mathbb{P}_N\ell_{\btheta,\phi}^2 \leq r\right\},
\vspace{-3pt}
\end{equation}
and $R_N \ell_{\mathcal{F}}^{\mathrm{loc}}(r)$ denotes the empirical local Rademacher complexity.
\vspace{-10pt}
\end{definition}
Based on Definition~\ref{local_rad_com} and our assumptions (see Appendix~\ref{assum}), the following theorem provides the generalization ability of an algorithm unrolling:
\begin{theorem}
\label{theorem_1}
The generalization ability of an algorithm unrolling utilized to solve the program $\mathcal{C}$ in \eqref{a1} for any $t>0$ is obtained as 
\begin{equation}
\begin{aligned}
\mathbb{E} R_n \ell_{\mathcal{F}}^{\text {loc }}(r)\leq\operatorname{Sens}(k)B_{\theta}+\widehat{\operatorname{Stab}}(k)\left(\sqrt{(\operatorname{Cvg}(k)+\sqrt{r})^2 C_1(N)+C_2(N, t)}+C_3(N, t)+4\right),
\end{aligned}
\end{equation}
where 
$\widehat{\operatorname{Stab}}(k)=\sqrt{2} s N^{-\frac{1}{2}}\operatorname{Stab}^{\bar\mbQ}(k)$
and 
$\operatorname{Cvg}(k)$
are worst-case stability and convergence, 
$B_{\btheta}=\frac{1}{2}\sup_{\btheta, \btheta^{\prime} \in \Theta}\left\|\btheta-\btheta^{\prime}\right\|$,
$C_1(N)=\mathcal{O}(\log \Gamma(N))$, $C_3(N, t)=\mathcal{O}\left(\frac{\log \Gamma(N)}{\sqrt{N}}+\frac{\sqrt{\log \Gamma(N)}}{e^t}\right)$, $C_2(N, t)=\mathcal{O}\left(\frac{t \log \Gamma(N)}{N}+\left(C_3(N, t)+1\right) \frac{\log \Gamma(N)}{\sqrt{N}}\right)$,
$\Gamma(N)=\mathcal{N}\left(\frac{1}{\sqrt{N}}, \ell_{\mathcal{Q}}, L_{\infty}\right)$.
\end{theorem}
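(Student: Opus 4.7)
The plan is to follow the local Rademacher complexity route of \cite{chen2020understanding}, adapted to our sparse quadratic setting with a varying sensing matrix $\mbQ_\phi(\bxi)$ and the three algorithmic properties (convergence, stability, sensitivity) established in Lemma~\ref{lem_1} and in Appendices~\ref{App_B}--\ref{App_E2}. The overall structure is a three-step decomposition of the loss $\ell_{\btheta,\phi}$ on the local class $\ell_{\mathcal{F}}^{\mathrm{loc}}(r)$: a sensitivity term absorbing variation in $\btheta$, a convergence term relating the unrolled iterate to the unique optimum $\mbx^{\mathrm{opt}}$, and a stability term converting variation in $\mbQ_\phi$ into variation in the output. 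Symmetrization then reduces the empirical process to a Rademacher average over a class of matrices, which is controlled by the covering number $\Gamma(N)=\mathcal{N}(1/\sqrt{N},\ell_{\mathcal{Q}},L_\infty)$.

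First I would freeze $\btheta$ and $\phi$ at an arbitrary pair in the local class and, by the sensitivity inequality \eqref{sens_eq} applied to some fixed reference $\btheta_0$, upper bound the contribution of $\btheta$-variation by $\operatorname{Sens}(k)\,B_\theta$, where $B_\theta=\tfrac12\sup_{\btheta,\btheta'\in\Theta}\|\btheta-\btheta'\|$. This yields the additive $\operatorname{Sens}(k)B_\theta$ term in the statement and reduces the problem to controlling the Rademacher average of the residual process, which depends only on $\phi$ through $\mbQ_\phi$. Second, I would write
\begin{equation*}
\bigl\|\operatorname{Alg}^{K}_{\btheta}(\mbQ_\phi(\bxi_i),\mbb_i)-\mbx^{\mathrm{opt}}(\mbQ^{\mathrm{opt}}(\bxi_i),\mbb_i)\bigr\|
\end{equation*}
as the sum of an ``algorithmic error''—bounded via the convergence property \eqref{cvg_eq} by $\operatorname{Cvg}(k)\|\mbx_0-\mbx^{\mathrm{opt}}\|$—and a ``matrix-mismatch error'' handled via the stability property \eqref{stab_eq}, which contributes $\operatorname{Stab}^{\bar\mbQ}(k)\|\Tilde{\mbQ}-\Tilde{\mbQ}'\|+\operatorname{Stab}^{\bar\mbb}(k)\|\bar\mbb-\bar\mbb'\|$ for any perturbed pair $(\mbQ',\mbb')$ on the same training sample. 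The factor $\sqrt{2}sN^{-1/2}$ in $\widehat{\operatorname{Stab}}(k)$ enters because we restrict to an $s$-sparse support and normalize by $\sqrt{N}$ after symmetrization.

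Third I would apply the standard symmetrization inequality to the empirical process indexed by the local class and construct an $\epsilon$-cover of $\ell_{\mathcal{Q}}$ at scale $\epsilon=1/\sqrt{N}$, of cardinality $\Gamma(N)$. Using stability to lift this cover to a cover of the loss class, a chaining argument (Dudley-type entropy integral) produces terms of the form $\sqrt{r\log\Gamma(N)/N}$ and $\log\Gamma(N)/\sqrt{N}$. Together with the local-radius constraint $\mathbb{P}_N\ell_{\btheta,\phi}^2\le r$ and Bousquet's concentration inequality for suprema of empirical processes (as invoked in \cite{chen2020understanding,bartlett2005local}), the radius $r$ combines additively with $\operatorname{Cvg}(k)$ under the square root, which is precisely the $(\operatorname{Cvg}(k)+\sqrt{r})^2 C_1(N)$ shape appearing in the bound. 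The residual lower-order deviation terms at confidence parameter $t$ aggregate into $C_2(N,t)$ and $C_3(N,t)$ with the stated orders.

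Finally I would collect the three contributions: the sensitivity-induced additive $\operatorname{Sens}(k)B_\theta$, and the stability-scaled chaining term $\widehat{\operatorname{Stab}}(k)(\cdots+4)$ in which the additive constant $4$ is the usual numerical constant from the Rademacher contraction / symmetrization step. The main obstacle will be the third step: I have to be careful that the $\epsilon$-cover of $\mbQ_\phi$ is transported to the output of $K$ unrolled iterations with only a $\operatorname{Stab}^{\bar\mbQ}(k)$-blow-up rather than an exponential-in-$K$ blow-up. This is only possible because Lemma~\ref{lem_1} (and its HT analogue in Appendix~\ref{App_C}) guarantees $\operatorname{Stab}^{\bar\mbQ}(k)=\mathcal{O}(1-\alpha_\theta^k)$ uniformly over $\mbQ$ when $\alpha_\theta\in(0,1)$; without this uniform bound the chaining step would diverge with depth. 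Once this uniform stability is in hand, the combination with Bousquet's inequality and the definition of $\Gamma(N)$ yields the stated bound.
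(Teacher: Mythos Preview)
Your three-part decomposition (sensitivity in $\btheta$, convergence, stability in $\mbQ$) and the downstream machinery (symmetrization, covering at scale $1/\sqrt N$, Dudley/Bousquet) match the paper's route, which explicitly defers to Theorem~C.2 of \cite{chen2020understanding}. However, you are missing the one step the paper actually proves: the \emph{reverse} implication that membership in the local class $\{\mathbb{P}_N\ell_{\btheta,\phi}^2\le r\}$ forces $\Tilde\mbQ_\phi$ to lie in a small Frobenius ball around $\Tilde\mbQ^{\mathrm{opt}}$. Concretely, the paper's key lemma shows
\[
\mathbb{E}\bigl\|\Tilde\mbQ_\phi-\Tilde\mbQ^{\mathrm{opt}}\bigr\|_{\mathrm F}^2
\;\le\;\sigma_b^{-2}L^4\bigl(\sqrt{\mathbb{E}\ell_{\btheta,\phi}^2}+\operatorname{Cvg}_\theta(k)\bigr)^2,
\]
and \emph{this} is the mechanism by which the $(\operatorname{Cvg}(k)+\sqrt r)^2$ shape appears inside the chaining bound. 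Your stability argument only runs forward (perturbation in $\mbQ$ $\Rightarrow$ perturbation in the output); it cannot by itself convert a small loss into a small $\|\Tilde\mbQ_\phi-\Tilde\mbQ^{\mathrm{opt}}\|$, so your sentence ``the radius $r$ combines additively with $\operatorname{Cvg}(k)$ under the square root'' has no supporting step.

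The proof of that lemma uses two ingredients you never invoke: Assumption~I (isotropy $\mathbb{E}\,\mbb\mbb^\top=\sigma_b^2\mbI$, which turns $\mathbb{E}_{\bar\mbb}\|\Tilde\mbQ_\phi^{-1}\bar\mbb-\Tilde\mbQ^{\mathrm{opt}\,-1}\bar\mbb\|^2$ into $\sigma_b^2\|\Tilde\mbQ_\phi^{-1}-\Tilde\mbQ^{\mathrm{opt}\,-1}\|_{\mathrm F}^2$) and Assumption~II (the closed-form $\mbx^{\mathrm{opt}}_{\mathcal S}=\Tilde\mbQ^{-1}\bar\mbb$ together with the spectral bounds $r\mbI\preceq\Tilde\mbQ\preceq L\mbI$, which let you pass from $\|\Tilde\mbQ_\phi^{-1}-\Tilde\mbQ^{\mathrm{opt}\,-1}\|_{\mathrm F}$ to $\|\Tilde\mbQ_\phi-\Tilde\mbQ^{\mathrm{opt}}\|_{\mathrm F}$). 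Insert this lemma between your second and third steps---use the triangle inequality $\ell_{\btheta,\phi}\ge\|\mbx^{\mathrm{opt}}(\mbQ_\phi,\mbb)-\mbx^{\mathrm{opt}}(\mbQ^{\mathrm{opt}},\mbb)\|-\operatorname{Cvg}_\theta(k)$, then average over $\bar\mbb$---and the rest of your plan goes through as written.
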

Appendix~\ref{App_E} shows that the algorithm's sensitivity is directly tied to its convergence rate. The proof of Theorem~\ref{theorem_1} can be found in Appendix~\ref{app_rad}. 
\vspace{-5pt}



\section{Numerical Illustrations}
\label{sec6}
\vspace{-5pt}
In this section, we evaluate the effectiveness of our proposed physics-inspired binarization.\\ 
$\bullet$ \textbf{Synthetic dataset:} We set $m=50$ and $n=100$. The entries of the matrix $\mbA=[A_{i,j}]\in\mathbb{R}^{m\times n}$ are sampled i.i.d. from 
$A_{i,j}\sim\mathcal{N}(0,1/m)$.
To generate sparse vectors $\mbx^{\mathrm{opt}}$, we consider each of its entries to be non-zero following the Bernoulli distribution with $p=0.05$. The values of the non-zero entries are sampled from the standard Gaussian distribution, $\mathcal{N}(0,1)$. We generate a training set of $4000$ samples and a test set of $1000$ samples. 
During the first stage of training with the forward model in \eqref{eq12} using high-resolution weights, the learning rate is kept constant at $\eta_t=1e-03$ across all epochs $t$. In the \textbf{Stage I} of our proposed one-bit algorithm unrolling scheme, we initialize the learning rate at $\eta_0=1e-03$ and adjust it after every $10$ epochs with a decay factor of $0.9$. In \textbf{Stage II}, the learning rate remains fixed at $\eta_t=1e-03$ for all epochs.

Define the relative reconstruction error as 
$\mathrm{NMSE}\triangleq\frac{1}{N} \sum^{N}_{i=1}\frac{\left\|\mathbf{x}_{K}\left(\widehat{\btheta}, \mathbf{y}_i\right)-\mbx^{\mathrm{opt}}_i\right\|^2}{\left\|\mbx^{\mathrm{opt}}_i\right\|^2}$, 
where $\widehat{\btheta}$ represents the estimated parameters of the network. 
\begin{figure}[t]
	\centering
		{\includegraphics[width=0.4\columnwidth]{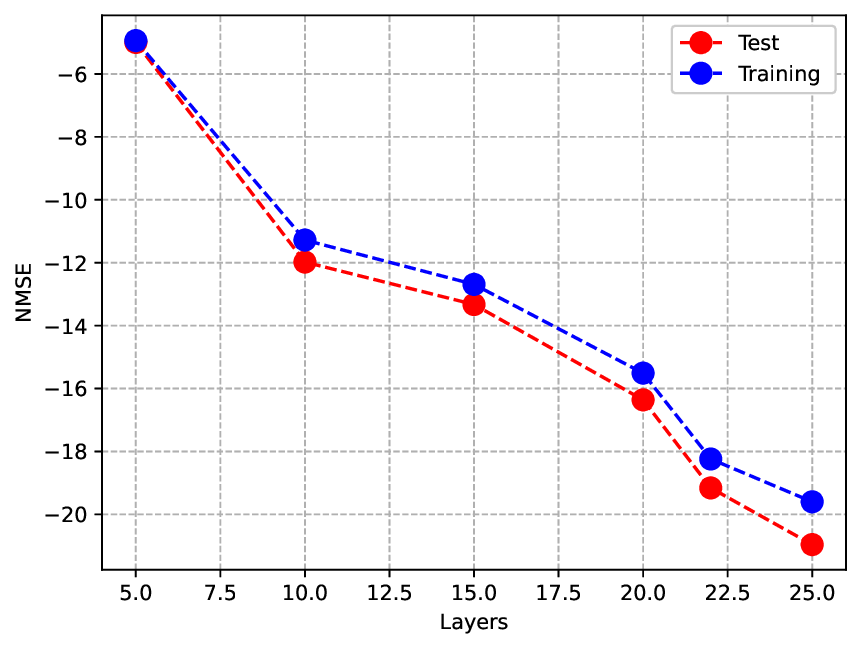}}
        \vspace{-.2cm}
	\caption{The impact of the number of layers on ``Train'' and ``Test'' NMSE in one-bit DUN.
    }
\label{figure_1}
\vspace{-13pt}
\end{figure}
Fig.~\ref{figure_1}(a) displays both the training and test results for one-bit DUN across different layer counts, specifically $K\in\{5, 10, 15, 20, 22, 25\}$. The results show that as the number of layers increases, the training and test NMSE (dB) improves. Table~\ref{table_1} provides the exact training and test errors corresponding to Fig.~\ref{figure_1}(a), offering a more detailed view of our findings. Additionally, Table~\ref{table_1} compares the training and test NMSE across three configurations: (i) a $5$-layer FCN with ReLU and ST activation functions, (ii) a $5$-layer DUN, and (iii) one-bit DUN with varying layer counts. Since the FCN does not leverage domain-specific knowledge from the model, we design $K$ layers with the configurations $\mbW_{1}\in\mathbb{R}^{m \times n}$ and $\{\mbW_{k}\in \mathbb{R}^{n\times n}\}_{k=2}^{K}$, leading to a higher bit count compared to the DUN with the same number of layers. The bit count for each model listed in Table~\ref{table_1} is provided in Appendix~\ref{App_F}.

\begin{table}[ht]
\caption{Comparison of training and test NMSE for three configurations: (i) a 5-layer FCN with ReLU and ST activation functions, (ii) a 5-layer DUN, and (iii) one-bit DUN with varying layer numbers.}
\centering
\resizebox{0.7\columnwidth}{!}{
\begin{tabular}{  c || c | c | c }
\hline
& Training NMSE (dB) & Test NMSE (dB) & $\#$ Bits \\[0.5 ex]
\hline
FCN with ReLU & $0.98$ & $2.34$ & $1440000$ \\
FCN with ST & $-4.44$ & $-1.34$ & $1440160$ \\
DUN with $5$ layers & $-19.20$ & $-16.40$ & $800160$ \\
One-Bit DUN with $5$ layers & $-5.00$ & $-4.94$ & $25160$ \\
One-Bit DUN with $10$ layers & $-11.98$ & $-11.28$ & $50320$ \\
One-Bit DUN with $15$ layers & $-13.33$ & $-12.69$ & $75480$ \\
One-Bit DUN with $20$ layers & $-19.04$ & $-17.42$ & $100640$ \\
One-Bit DUN with $22$ layers & $-19.33$ & $-18.24$ & $110704$ \\
One-Bit DUN with $25$ layers & $-20.96$ & $-19.30$ & $125800$ \\
\hline
\end{tabular}
}
\label{table_1}
\end{table}

As observed, the $5$-layer DUN outperforms the $5$-layer FCN with both ReLU and ST activation functions while requiring fewer stored bits. An intriguing observation is that, in training, we used $4000$ samples for the FCN, while for the DUN, we reduced the training set so that both models achieved the same level of test accuracy. Our numerical experiments show that the DUN requires approximately 10 times fewer training samples (about 400 samples) to achieve the same test accuracy as the FCN. This is a particularly significant finding, especially when dealing with very large datasets, where the cost of acquiring and processing training data can be substantial.
Furthermore, the $5$-layer FCN with an ST activation function achieves better training and test NMSE compared to its ReLU-based counterpart. This is likely due to the sparse nature of $\mbx^{\mathrm{opt}}$, where the ST operator can better mimic the sparsity behavior compared to the ReLU function. However, incorporating threshold parameters in ST slightly increases the bit count required to store the FCN model's parameters compared to the FCN with the ReLU activation function. For more details, see Table~\ref{table_1} and the bit count analysis provided in Appendix~\ref{App_F}. By comparing the one-bit DUN with the DUN in Table~\ref{table_1}, it is evident that when the one-bit DUN reaches $20$ layers, its test performance surpasses that of the DUN with $5$ layers. For example, the one-bit DUN with $20$ layers outperforms the $5$-layer DUN in test NMSE while achieving an $86\%$ compression ratio. This highlights the efficiency of the one-bit DUN architecture for deeper networks. While loss is generally expected to decrease monotonically with each layer in an unrolled algorithm, inconsistencies can appear in the convergence. Despite this issue, our one-bit DUN approximates monotonic behavior more closely than the high-resolution network (see Appendix~\ref{App_F2} for further details). The computational time required to run the different stages of the proposed one-bit DUN across various layer configurations is reported in Appendix~\ref{App_F3}. The numerical investigation of the impact of the scale design process is presented in Appendix~\ref{App_F40}, covering various fixed $\lambda$ values alongside the optimal value.

$\bullet$ \textbf{Large-scale dataset:} We now move beyond the one-bit DUN itself and, by leveraging a priori knowledge of problem structures, numerically scrutinize the performance of the proposed PIBiNN, which achieves further sparsification (as discussed in Section~\ref{sec4}). We generate sub-matrices $\mbA\in\mathbb{R}^{50\times 100}$ using the same settings as before and construct the block matrix $\mbA^{\prime}$ using $100$ such sub-matrices (see the sparse structure settings in Section~\ref{sec4}). In the PIBiNN, the weight matrices follow the sparse structure of the sensing matrices. We report the training and test results for networks with $10$ and $20$ layers, including the number of parameters for both the FCN and PIBiNN, as summarized in Table~\ref{table_4}(left panel). \emph{By leveraging the domain knowledge of the system, the number of parameters in the PIBiNN is drastically reduced from $3$ billion in the FCN to just $100$ thousand, representing a significant reduction in model complexity.} This yields a $99\%$ compression rate in both parameter reduction and bit ratio compared to the dense network. The training and test results demonstrate the successful performance of PIBiNN, even for large networks.

We now assess the individual contributions of the two core components of PIBiNN: the incorporation of \textbf{problem-driven sparsity} and \textbf{one-bit quantization}. Using the large-scale dataset setting with $20$ layers, we evaluate how each component affects performance and generalization. Note that in this comparison, we evaluate the effect of each component only on the dense DUN. As shown in Table~\ref{table_16} (right panel), applying only the problem’s sparsity without quantization slightly degrades training performance but improves test accuracy, indicating stronger generalization. An interesting observation from this experiment is that, despite a nearly $99\%$ reduction in the number of parameters, the training performance experiences only a slight drop, while test accuracy and generalization gap improve. \emph{When both sparsity and quantization are applied, the generalization performance improves even further, highlighting the complementary benefits of combining structure with extreme compression.}

\begin{table*}[ht]
\centering
\caption{(Left) Illustration of the impact of incorporating the physics of the problem into learning on the number of parameters, as well as the training and test results for the PIBiN. (Right) Effect of structured sparsity and one-bit quantization on DUN accuracy for large networks.}
\begin{minipage}{0.5\textwidth}
\centering
\resizebox{\columnwidth}{!}{
\begin{tabular}{ c | c | c| c | c }
\hline
$\#$ Layers & $\#$Params of FCN & $\#$ Params of PIBiNN & Training NMSE (dB) & Test NMSE (dB)\\ [0.5 ex]
\hline \hline
$10$ & $1.5~\mathrm{B}$ & $50~\mathrm{K}$ & $-15.66$ & $-14.90$ \\
$20$ & $3~\mathrm{B}$ & $100~\mathrm{K}$ & $-19.66$ & $-18.22$ \\
\hline
\end{tabular}
}
\label{table_4}
\end{minipage}
\hfill
\begin{minipage}{0.4\textwidth}
\centering
\resizebox{\columnwidth}{!}{
\begin{tabular}{c | c| c | c }
\hline
Sparse Structure & Quantization & Training NMSE (dB) & Test NMSE (dB)\\ [0.5 ex]
\hline \hline
\textcolor{red}{\ding{55}}  & \textcolor{red}{\ding{55}} & $-22.16$ & $-17.06$ \\
\textcolor{green}{\ding{51}}& \textcolor{red}{\ding{55}}  & $-21.67$ & $-18.35$ \\
\textcolor{red}{\ding{55}} & \textcolor{green}{\ding{51}} & $-19.90$ & $-15.14$ \\
\textcolor{green}{\ding{51}} & \textcolor{green}{\ding{51}}& $-19.66$ & $-18.22$ \\
\hline
\end{tabular}
}
\label{table_16}
\end{minipage}
\vspace{-10pt}
\end{table*}
\begin{table}[h!] 
\centering 
\small
\caption{Test NMSE (dB) on the BSD500 and EEGdenoiseNet datasets across varying CS ratios using the one-bit DUN.} 

\resizebox{0.5\textwidth}{!}{ \begin{tabular}{c||c|c|c|c||c|c|c} \hline \multicolumn{1}{c||}{\textbf{Activation}} & \multicolumn{3}{c|}{\textbf{BSD500}} & \multicolumn{1}{c||}{} & \multicolumn{3}{c}{\textbf{EEGdenoiseNet}} \\ \cline{2-4} \cline{6-8} & $25\%$ & $50\%$ & $75\%$ & & $40\%$ & $50\%$ & $75\%$ \\ \hline\hline 
ST & $-8.76$ & $-15.31$ & $-17.47$ & & $-10.62$ & $-12.64$ & $-18.60$ \\ 
HT & $-11.81$ & $-13.59$ & $-14.79$ & & $-6.31$ & $-7.13$ & $-10.89$ \\ 
\hline 
\end{tabular} }
\label{table_14} 
\vspace{-10pt}
\end{table}
\begin{table}[h!]
\centering
\small
\caption{Training and Test NMSE comparison for large dataset across different schemes.}
\resizebox{0.5\textwidth}{!}{\begin{tabular}{lcc}
\toprule
Scheme & Training NMSE (dB) & Test NMSE (dB) \\
\midrule
Channel-wise & $-21.91$ & $-16.30$ \\
Ternary      & $-19.01$ & $-17.01$ \\
PIBiNN   & $-19.66$ & $-18.22$ \\
\bottomrule
\end{tabular} }
\label{tab:nmse_comparison}
\vspace{-10pt}
\end{table}
\begin{table}[h!]
\centering
\caption{Training and Test NMSE on BSD500 for different schemes at a $50\%$ CS rate using the ST activation.}
\resizebox{0.5\textwidth}{!}{\begin{tabular}{lcc}
\toprule
Scheme & Training NMSE (dB) & Test NMSE (dB) \\
\midrule
Channel-wise & $-17.21$ & $-15.46$ \\
Ternary      & $-16.44$ & $-15.96$ \\
PIBiNN   & $-16.64$ & $-16.29$ \\
\bottomrule
\end{tabular}}
\vspace{-10pt}
\label{tab:nmse_comparisonbsd}
\end{table}
$\bullet$ \textbf{BSD500 dataset:} We performed a CS experiment on natural image patches using the BSD500 dataset~\cite{martin2001database}, selecting $300$ images for training. From each image, we extracted $25$ random $8 \times 8$ patches, resulting in $6000$ training patches with zero-mean normalization and $1500$ test patches. Gaussian noise with standard deviation $0.05$ (assuming pixel values are normalized to the range $[0, 1]$) was added to the patches. The Discrete Cosine Transform (DCT) was then applied using a matrix \(\mbD \in \mathbb{R}^{64 \times 64}\), followed by applying a Gaussian sensing matrix \(\boldsymbol{\Phi} \in \mathbb{R}^{m \times 64}\) to the obtained DCT coefficients of the noisy patches. The resulting compressed measurements were passed to the one-bit DUN for image reconstruction.\\
$\bullet$ \textbf{EEGdenoiseNet dataset:} This dataset includes $4514$ clean EEG segments, $3400$ pure electrooculography segments, and $5598$ pure electromyography artifact segments, enabling the synthesis of contaminated EEG signals with known ground-truth clean components \cite{zhang2021eegdenoisenet}. We generated contaminated EEG signals using the model $\mbx_n=\mbx+\kappa \mbn$, where $\mbx$ is the clean EEG signal, $\mbn$ denotes ocular or myogenic artifacts, $\mbx_n$ is the resulting contaminated signal, and $\kappa$ is a hyperparameter to control the SNR. We applied the DCT with $\mbD \in \mathbb{R}^{512 \times 512}$, followed by Gaussian sensing, as in the BSD500 setup.
According to the results in Table~\ref{table_14}, the one-bit DUN successfully reconstructs images and EEG signals across various CS ratios ($m/64$ for images $m/512$), with the ST activation function consistently yielding better performance. Notably, the one-bit DUN achieves nearly the same reconstruction quality as the high-resolution DUN using the same number of layers, with only about a $1$ dB difference in NMSE.

$\bullet$ \textbf{Comparison results:} In Tables~\ref{tab:nmse_comparison} and \ref{tab:nmse_comparisonbsd}, we compare three compression methods: channel-wise binarization, ternary (1.58-bit) quantization, and the PIBiNN, reporting the final training and test results on both the large-model dataset and the BSD500 dataset. For the large-model case, we evaluate networks with $20$ layers and for Table~\ref{tab:nmse_comparisonbsd}, all settings follow those of Table~\ref{table_14}, except that the sensing matrix is chosen as a block-sparse matrix of the form $[\bPhi_1 \ \mathbf{0}; \ \mathbf{0} \ \bPhi_2] \in \mathbb{R}^{32 \times 64}$, where $\bPhi_1 $ and $\bPhi_2$ are Gaussian matrices of size $16 \times 32$. Setting the diagonal matrices equal would yield higher compression, but here we choose them differently to distinguish this experiment from the previous one.

In our scheme, sparsity arises directly from the problem physics, yielding sparse block-structured weights for the DUN, whereas ternary quantization imposes sparsification on the dense DUN via its own operator, and channel-wise binarization ignores sparsity entirely. From Tables~\ref{tab:nmse_comparison} and \ref{tab:nmse_comparisonbsd}, channel-wise quantization shows the best training accuracy, but our method generalizes better than both ternary and channel-wise binarization. The key reason is that our approach preserves all signs, maintaining the solver’s spectral geometry, while zeros emerge naturally from the architecture. In contrast, ternary sets weights in $(-0.5, 0.5)$ to zero at each epoch, potentially discarding \emph{small but essential} couplings that help LISTA-style layers coordinate corrections. By keeping those weak but informative links alive and only removing connections irrelevant by physics, our method achieves superior performance. Moreover, it does so with $~2\times$ lower memory than ternary (1 bit vs. ~2 bits/weight, with no mask or index overhead). The PIBiNN achieves a $99\%$ parameter reduction for the large model, compared to $78\%$ for ternary. On the BSD dataset (Table~\ref{tab:nmse_comparisonbsd}), the reductions are $50\%$ and $56\%$, respectively. The overlap between ternary-induced sparsity and the problem-driven sparse structure is reported in Appendix~\ref{sss}. Importantly, our approach uses a single global scale across all layers, whereas ternary requires a separate scale for each channel.

To evaluate the impact of $\delta$ on generalization and training/test performance, we conduct extensive experiments (Appendix~\ref{App_F4}). These results inform our discussion of practical considerations for selecting $\delta$. For completeness, we provide the limitations of our work in Appendix~\ref{I}.

\bibliographystyle{unsrt}
\bibliography{references}

\newpage
\appendix
\onecolumn

\section{Assumptions for Theorem~\ref{theorem_1}}
\label{assum}
The result presented in Theorem~\ref{theorem_1} is established under the following assumptions:
    
    \textbf{Assumption~I:} The input $(\bxi,\mbb)$ is a pair of random variables where $\bxi\in\mathcal{Y}\subseteq\mathbb{R}^d$ and $\mbb\in\mathcal{B}\subseteq\mathbb{R}^n$. Assume $\mbb$ satisfies $\mathbb{E}\mbb\mbb^{\top}=\sigma_b^2\mbI$. Assume $\bxi$ and $\mbb$ are independent, and their joint distribution follows a probability measure $\mathbb{P}$.
    
    \textbf{Assumption~II:} In our settings, we assume that both matrices $\mbQ_{\phi}$ and $\mbQ^{\mathrm{opt}}$ are symmetric and either positive semi-definite or positive definite, with a positive definite structure on the support $\mathcal{S}$. Specifically, $\Tilde\mbQ_{\phi},\Tilde\mbQ^{\mathrm{opt}}\in\mathcal{H}^{s\times s}_{r,L}$, the space of symmetric positive definite matrices whose smallest and largest singular values are bounded by $r$ and $L$, respectively, with $r,L>0$. Based on this assumption, the exact minimizer of \eqref{a1} for each sample follows a $s$-sparse structure, where the $s$ non-zero values are obtained as
    $\mbx^{\mathrm{opt}}_{\mathcal{S}}(\mbQ_i^{\mathrm{opt}},\mbb_i)=
    \left(\Tilde\mbQ_i^{\mathrm{opt}}\right)^{-1}\bar\mbb_{i},~i\in[N]$.
    A similar argument holds for 
    $\mbQ_{\phi}$, yielding: $\mbx^{\mathrm{opt}}_{\mathcal{S}}(\mbQ_{i,\phi},\mbb_i)=
    \left(\Tilde\mbQ_{i,\phi}\right)^{-1}\bar\mbb_{i},~i\in[N]$.

\section{Proof of Theorem~\ref{theorem_1}}
\label{app_rad}
Inspired by Lemma~4.2 of~\cite{chen2020understanding}, we extend this result to the sparse quadratic program. The key step of our argument is the following lemma, while the remainder of the proof follows the same reasoning as in Theorem~C.2. of~\cite{chen2020understanding}.
\begin{lemma}
\label{lem:4.2}
For every $\phi\in\Phi$ and $\mbQ^{\mathrm{opt}}$ it holds that
\[
\mathbb{E}\|\Tilde\mbQ_{\phi} - \Tilde\mbQ^{\mathrm{opt}}\|_{\mathrm{F}}^2 \;\le\; \sigma_b^{-2} L^4\Big(\sqrt{\mathbb{E}\ell_{\theta,\phi}^2} + \mathrm{Cvg}_{\theta}(k)\Big)^2,
\]
where $\ell_{\theta,\phi}(\bxi,\mbb)=\| \mathrm{Alg}_\theta^k(\mbQ_{\phi}(\bxi),\mbb) - \mbx^{\mathrm{opt}}(\mbQ^{\mathrm{opt}}(\bxi),\mbb)\|$, and $\mathrm{Cvg}_{\theta}(k)$ denotes the convergence factor of the algorithm layer.
\end{lemma}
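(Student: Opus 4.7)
The plan is to push the Frobenius norm on $\bDelta=\Tilde\mbQ_{\phi}-\Tilde\mbQ^{\mathrm{opt}}$ through the action of $\bDelta$ on $\mbx^{\mathrm{opt}}(\mbQ^{\mathrm{opt}},\mbb)$ (exploiting the isotropy of $\mbb$ from Assumption~I) and then translate that action back into a distance between the algorithm's output and the true minimizer, which is what $\ell_{\theta,\phi}$ already measures. Concretely, let $\mbx^{\star}=\mbx^{\mathrm{opt}}_{\mathcal{S}}(\mbQ^{\mathrm{opt}},\mbb)=(\Tilde\mbQ^{\mathrm{opt}})^{-1}\bar\mbb$ and $\mbx^{(\phi)}=\mbx^{\mathrm{opt}}_{\mathcal{S}}(\mbQ_{\phi},\mbb)=\Tilde\mbQ_{\phi}^{-1}\bar\mbb$ (Assumption~II). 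The normal equations $\Tilde\mbQ_{\phi}\mbx^{(\phi)}=\bar\mbb=\Tilde\mbQ^{\mathrm{opt}}\mbx^{\star}$ give the key identity $\bDelta\mbx^{\star}=\Tilde\mbQ_{\phi}(\mbx^{\star}-\mbx^{(\phi)})$.

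First, I would bound $\|\bDelta\|_{\mathrm{F}}^{2}$ by $\mathbb{E}\|\bDelta\mbx^{\star}\|^{2}$. Conditioning on $\bxi$ (so $\bDelta$ and $\Tilde\mbQ^{\mathrm{opt}}$ are deterministic while $\mbb$ is independent of $\bxi$ with $\mathbb{E}\bar\mbb\bar\mbb^{\top}=\sigma_{b}^{2}\mbI$), one gets
\[
\mathbb{E}\bigl[\|\bDelta\mbx^{\star}\|^{2}\mid\bxi\bigr]=\sigma_{b}^{2}\,\mathrm{Tr}\bigl(\bDelta(\Tilde\mbQ^{\mathrm{opt}})^{-2}\bDelta^{\top}\bigr)\ge\sigma_{b}^{2}L^{-2}\|\bDelta\|_{\mathrm{F}}^{2},
\]
because $(\Tilde\mbQ^{\mathrm{opt}})^{-2}\succeq L^{-2}\mbI$ by Assumption~II. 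Taking total expectation yields $\mathbb{E}\|\bDelta\|_{\mathrm{F}}^{2}\le\sigma_{b}^{-2}L^{2}\,\mathbb{E}\|\bDelta\mbx^{\star}\|^{2}$. Second, the identity above together with $\|\Tilde\mbQ_{\phi}\|\le L$ gives $\|\bDelta\mbx^{\star}\|\le L\|\mbx^{\star}-\mbx^{(\phi)}\|$, producing the factor $L^{4}$.

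Third, the triangle inequality splits $\|\mbx^{\star}-\mbx^{(\phi)}\|$ into $\|\mbx^{\star}-\mathrm{Alg}_{\theta}^{k}(\mbQ_{\phi},\mbb)\|=\ell_{\theta,\phi}$ and $\|\mathrm{Alg}_{\theta}^{k}(\mbQ_{\phi},\mbb)-\mbx^{(\phi)}\|$, where the second term is controlled by the convergence property \eqref{cvg_eq} applied to the algorithm's own objective, giving the bound $\mathrm{Cvg}_{\theta}(k)\|\mbx_{0}-\mbx^{(\phi)}\|$. Squaring, taking expectation, and applying Minkowski's inequality in $L^{2}$ then yields
\[
\sqrt{\mathbb{E}\|\mbx^{\star}-\mbx^{(\phi)}\|^{2}}\le\sqrt{\mathbb{E}\ell_{\theta,\phi}^{2}}+\mathrm{Cvg}_{\theta}(k),
\]
so chaining the three inequalities and squaring gives the claim. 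The main obstacle is the last display: the raw convergence factor is $\mathrm{Cvg}_{\theta}(k)\|\mbx_{0}-\mbx^{(\phi)}\|$, but the statement writes $\mathrm{Cvg}_{\theta}(k)$ bare. This is reconciled by the paper's convention that $\mbx_{0}=\bzero$ and that $\|\mbx^{(\phi)}\|$ is uniformly bounded through $\|\mbx^{(\phi)}\|\le r^{-1}\|\bar\mbb\|$ from Assumption~II, whose $L^{2}$ moment $\sigma_{b}\sqrt{s}\,r^{-1}$ can be absorbed into the definition of $\mathrm{Cvg}_{\theta}(k)$; keeping track of this absorption, and ensuring that the conditional covariance computation in Step~1 is legitimate (it hinges squarely on $\bxi\perp\mbb$), are the two places where the argument is most delicate.
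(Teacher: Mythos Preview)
Your proposal is correct and follows essentially the same route as the paper: both use the isotropy $\mathbb{E}\bar\mbb\bar\mbb^{\top}=\sigma_b^{2}\mbI$ to convert the Frobenius norm of $\Tilde\mbQ_{\phi}-\Tilde\mbQ^{\mathrm{opt}}$ into the $L^{2}$-norm of $\mbx^{(\phi)}-\mbx^{\star}$ (the paper does this in one Frobenius identity, you split it via the intermediate $\bDelta\mbx^{\star}$), and both then bound $\|\mbx^{(\phi)}-\mbx^{\star}\|$ by $\ell_{\theta,\phi}+\mathrm{Cvg}_{\theta}(k)$ via the triangle inequality and finish with Jensen/Minkowski. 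Your remark about absorbing $\|\mbx_{0}-\mbx^{(\phi)}\|$ into $\mathrm{Cvg}_{\theta}(k)$ matches the paper's convention, where $\mathrm{Cvg}_{\theta}(k)$ is already defined as a uniform bound containing the factor $B$.
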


\begin{proof}
Let $\varepsilon := \mathbb{E}\ell_{\theta,\phi}^2$. For any $(\bxi,\mbb)$ we have
\begin{align}
\ell_{\theta,\phi}(\bxi,\mbb)
&\ge \|\mbx^{\mathrm{opt}}(\mbQ_{\phi}(\bxi),\mbb)-\mbx^{\mathrm{opt}}(\mbQ^{\mathrm{opt}}(\bxi),\mbb)\|
      - \|\mathrm{Alg}_\theta^k(\mbQ_{\phi}(\bxi),\mbb)-\mbx^{\mathrm{opt}}(\mbQ_{\phi}(\bxi),\mbb)\|,\\
&\ge \|\Tilde\mbQ_{\phi}(\bxi)^{-1}\bar\mbb - \Tilde\mbQ^{\mathrm{opt}}(\bxi)^{-1}\bar\mbb\| - \mathrm{Cvg}_{\theta}(k).
\end{align}
Rearranging gives
\begin{equation}\label{eq:inv-diff-bound}
\|\Tilde\mbQ_{\phi}(\bxi)^{-1}\bar\mbb - \Tilde\mbQ^{\mathrm{opt}}(\bxi)^{-1}\bar\mbb\| \le \ell_{\theta,\phi}(\bxi,\mbb) + \mathrm{Cvg}_{\theta}(k).
\end{equation}

Since $\mathbb{E}_{\mbb}\mbb \mbb^\top=\sigma_b^2 \mbI$, we also have $\mathbb{E}_{\bar\mbb}\bar\mbb \bar\mbb^\top=\sigma_b^2 \mbI$ which together with
\[
\mathbb{E}_{\bar\mbb}\|\Tilde\mbQ_{\phi}(\bxi)^{-1}\bar\mbb - \Tilde\mbQ^{\mathrm{opt}}(\bxi)^{-1}\bar\mbb\|
= \sigma_b^2\|\Tilde\mbQ_{\phi}(\bxi)^{-1}(\Tilde\mbQ_{\phi}(\bxi)-\Tilde\mbQ^{\mathrm{opt}}(\bxi))\Tilde\mbQ^{\mathrm{opt}}(\bxi)^{-1}\|_\mathrm{F}^2,
\]
and the spectral bounds $r \mbI \preceq \Tilde\mbQ_{\phi}(\bxi),\Tilde\mbQ^\mathrm{opt}(\bxi)\preceq L \mbI$, we obtain
\begin{align}
\|\Tilde\mbQ_{\phi}(\bxi)-\Tilde\mbQ^{\mathrm{opt}}(\bxi)\|_\mathrm{F}^2
&\le \sigma_b^{-2} L^4 \; \mathbb{E}_{\bar\mbb}\|\Tilde\mbQ_{\phi}(\bxi)^{-1}\bar\mbb - \Tilde\mbQ^{\mathrm{opt}}(\bxi)^{-1}\bar\mbb\|^2.
\end{align}

Combine this inequality with \eqref{eq:inv-diff-bound}, integrate (expectation over $(\bxi,\mbb)$),
and use $(\mathbb{E}\ell_{\theta,\phi})^2 \le \mathbb{E}\ell_{\theta,\phi}^2$ to get
\[
\mathbb{E}\|\Tilde\mbQ_{\phi} - \Tilde\mbQ^{\mathrm{opt}}\|_\mathrm{F}^2
\le \sigma_b^{-2} L^4\Big(\sqrt{\mathbb{E}\ell_{\theta,\phi}^2} + \mathrm{Cvg}_{\theta}(k)\Big)^2,
\]
which is the claimed bound.
\end{proof}

\section{Convergence Analysis Based on the Set of Good Weights in Definition~\ref{def_1}}
\label{App_A}
We consider $\delta=1$ in the update process of \eqref{update_sqp_1}.
To present the convergence result, at first, we should find a proper value for $\theta_k$ to guarantee that $\operatorname{supp}(\mbx_{k})=\mathcal{S}$ when $\mbW_k$ is chosen from $\mathcal{X}_{\mbW_k}(\mbQ)$. Let $\mbx_0=0$ and by induction assume that $\operatorname{supp}(\mbx_{k-1})=\mathcal{S}$. Then $\forall i\notin\mathcal{S}$, we can write
\begin{equation}
\label{cvg2}
\begin{aligned}
x_{i,k}&=\operatorname{ST}_{\theta_k}\left(x_{i,k-1}-\mbW_{i,k}^{\top}(\bar\mbQ\mbx_{k-1}-\bar\mbb)\right),\\&=\operatorname{ST}_{\theta_k}\left(-\mbW_{i,k}^{\top}(\bar\mbQ\mbx_{k-1}-\bar\mbb)\right).
\end{aligned}
\end{equation}
Following Appendix~\ref{assum},
we can rewrite \eqref{cvg2} as
\begin{equation}
\label{cvg3}
\begin{aligned}
x_{i,k}&=\operatorname{ST}_{\theta_k}\left(-\mbW_{i,k}^{\top}\left(\Tilde\mbQ\mbx_{\mathcal{S},k-1}-\Tilde\mbQ\mbx^{\mathrm{opt}}_{\mathcal{S}}(\mbQ,\mbb)\right)\right),\\&=\operatorname{ST}_{\theta_k}\left(-\sum_{j\in\mathcal{S}}\mbW_{i,k}^{\top}\Tilde\mbQ_j\left(x_{j,k-1}-x_j^{\mathrm{opt}}(\mbQ,\mbb)\right)\right).
\end{aligned}
\end{equation}
For $\mbW_k\in\mathcal{X}_{\mbW_k}(\mbQ)$, we set $\theta_k=\mu\sup_{\mbx^{\mathrm{opt}}(\mbQ,\mbb)}\|\mbx_{k-1}-\mbx^{\mathrm{opt}}(\mbQ,\mbb)\|_1$, where $\mu=\sup_{\mbQ}\mu_Q$. Then, we can write
\begin{equation}
\label{cvg4}
\theta_k\geq\mu\left\|\mbx_{k-1}-\mbx^{\mathrm{opt}}(\mbQ,\mbb)\right\|_1\geq\left|-\sum_{j\in\mathcal{S}}\mbW_{i,k}^{\top}\Tilde\mbQ_j\left(x_{j,k-1}-x_j^{\mathrm{opt}}(\mbQ,\mbb)\right)\right|,
\end{equation}
which implies $x_{i,k}=0$, $\forall i\notin\mathcal{S}$ by the definition of $\operatorname{ST}_{\theta_k}$.

Next, for $i\in\mathcal{S}$ we can write
\begin{equation}
\label{cvg5}
\begin{aligned}
x_{i,k}&=\operatorname{ST}_{\theta_k}\left(x_{i,k-1}-\mbW_{i,k}^{\top}\Tilde\mbQ(\mbx_{\mathcal{S},k-1}-\mbx_{\mathcal{S}}^{\mathrm{opt}}(\mbQ,\mbb))\right),\\&\in x_{i,k-1}-\mbW_{i,k}^{\top}\Tilde\mbQ(\mbx_{\mathcal{S},k-1}-\mbx_{\mathcal{S}}^{\mathrm{opt}}(\mbQ,\mbb))-\theta_k\partial\ell_1(z_{i,k-1}),
\end{aligned}
\end{equation}
where $z_{i,k-1}=x_{i,k-1}-\mbW_{i,k}^{\top}\Tilde\mbQ(\mbx_{\mathcal{S},k-1}-\mbx_{\mathcal{S}}^{\mathrm{opt}}(\mbQ,\mbb))$ and $\partial\ell_1(x)$ is the sub-gradient of $\ell_1$ norm. Since $\mbW_{i,k}^{\top}\Tilde\mbQ_i=1$, we have
\begin{equation}
\label{cvg6}
\begin{aligned}
x_{i,k-1}-\mbW_{i,k}^{\top}\Tilde\mbQ(\mbx_{\mathcal{S},k-1}-\mbx_{\mathcal{S}}^{\mathrm{opt}}(\mbQ,\mbb))&=x_{i,k-1}-\sum_{j\in\mathcal{S},j\neq i}\mbW_{i,k}^{\top}\Tilde\mbQ_j(x_{j,k-1}-x_{j}^{\mathrm{opt}}(\mbQ,\mbb))\\&-(x_{i,k-1}-x_i^{\mathrm{opt}}(\mbQ,\mbb)),\\&=x_i^{\mathrm{opt}}(\mbQ,\mbb)-\sum_{j\in\mathcal{S},j\neq i}\mbW_{i,k}^{\top}\Tilde\mbQ_j(x_{j,k-1}-x_{j}^{\mathrm{opt}}(\mbQ,\mbb)).
\end{aligned}
\end{equation}
Then,
\begin{equation}
\label{cvg7}
x_{i,k}-x_i^{\mathrm{opt}}(\mbQ,\mbb)\in -\sum_{j\in\mathcal{S},j\neq i}\mbW_{i,k}^{\top}\Tilde\mbQ_j(x_{j,k-1}-x_{j}^{\mathrm{opt}}(\mbQ,\mbb))-\theta_k\partial\ell_1(z_{i,k-1}),~\forall i\in\mathcal{S}.
\end{equation}
Note that every element in $\partial\ell_1(x)$ has a magnitude less than or equal to $1$. Therefore, for all $i\in\mathcal{S}$,
\begin{equation}
\label{cvg8}
\begin{aligned}
\left|x_{i,k}-x^{\mathrm{opt}}_i\right|&\leq\sum_{j\in\mathcal{S},j\neq i}\left|\mbW_{i,k}^{\top}\Tilde\mbQ_j\right|\left|x_{j,k-1}-x_j^{\mathrm{opt}}(\mbQ,\mbb)\right|+\theta_k,\\&\leq\mu_Q\sum_{j\in\mathcal{S},j\neq i}\left|x_{j,k-1}-x_j^{\mathrm{opt}}(\mbQ,\mbb)\right|+\theta_k.
\end{aligned}
\end{equation}
Based on the chosen value of $\theta_k$, we can write $\|\mbx_k-\mbx^{\mathrm{opt}}(\mbQ,\mbb)\|_1=\|\mbx_{\mathcal{S},k}-\mbx_{\mathcal{S}}^{\mathrm{opt}}(\mbQ,\mbb)\|_1$ for all $k$. Then
\begin{equation}
\label{cvg9}
\begin{aligned}
\|\mbx_{k}-\mbx^{\mathrm{opt}}(\mbQ,\mbb)\|_1&=\sum_{i\in\mathcal{S}}\left|x_{i,k}-x_i^{\mathrm{opt}}(\mbQ,\mbb)\right|,\\&\leq\sum_{i\in\mathcal{S}}\left(\mu_Q\sum_{j\in\mathcal{S},j\neq i}\left|x_{j,k-1}-x_j^{\mathrm{opt}}(\mbQ,\mbb)\right|+\theta_k\right),\\&=\mu_Q(|\mathcal{S}|-1)\sum_{i\in\mathcal{S}}\left|x_{i,k-1}-x_i^{\mathrm{opt}}(\mbQ,\mbb)\right|+\theta_k|\mathcal{S}|,\\&\leq\mu_Q(|\mathcal{S}|-1)\|\mbx_{k-1}-\mbx^{\mathrm{opt}}(\mbQ,\mbb)\|_{1}+\theta_k|\mathcal{S}|.
\end{aligned}
\end{equation}
To generalize \eqref{cvg9} for the whole dataset, we take the supremum over $\mbx^{\mathrm{opt}}(\mbQ,\mbb)$, and by $|\mathcal{S}|=s$ we can write
\begin{equation}
\label{cvg10}
\sup_{\mbx^{\mathrm{opt}}(\mbQ,\mbb)}\|\mbx_{k}-\mbx^{\mathrm{opt}}(\mbQ,\mbb)\|_1\leq\mu(s-1)\sup_{\mbx^{\mathrm{opt}}(\mbQ,\mbb)}\|\mbx_{k-1}-\mbx^{\mathrm{opt}}(\mbQ,\mbb)\|_{1}+s\theta_k,
\end{equation}
where by our choice of $\theta_k$, we have
\begin{equation}
\label{cvg11}
\sup_{\mbx^{\mathrm{opt}}(\mbQ,\mbb)}\|\mbx_{k}-\mbx^{\mathrm{opt}}(\mbQ,\mbb)\|_1\leq(2\mu s-\mu)\sup_{\mbx^{\mathrm{opt}}(\mbQ,\mbb)}\|\mbx_{k-1}-\mbx^{\mathrm{opt}}(\mbQ,\mbb)\|_{1}.
\end{equation}
By recursively applying the bound \eqref{cvg11}, we can obtain
\begin{equation}
\label{cvg12}
\begin{aligned}
\sup_{\mbx^{\mathrm{opt}}(\mbQ,\mbb)}\|\mbx_{k}-\mbx^{\mathrm{opt}}(\mbQ,\mbb)\|_1&\leq(2\mu s-\mu)^{k}\sup_{\mbx^{\mathrm{opt}}(\mbQ,\mbb)}\|\mbx_0-\mbx^{\mathrm{opt}}(\mbQ,\mbb)\|_{1}\\&=(2\mu s-\mu)^{k}\sup_{\mbx^{\mathrm{opt}}(\mbQ,\mbb)}\|\mbx^{\mathrm{opt}}(\mbQ,\mbb)\|_{1},\\&\leq(2\mu s-\mu)^{k}B.
\end{aligned}
\end{equation}
Since $\|\mbx\|_2\leq\|\mbx\|_1$ for any $\mbx\in\mathbb{R}^n$, we can rewrite \eqref{cvg12} as
\begin{equation}
\label{cvg13}
\sup_{\mbx^{\mathrm{opt}}(\mbQ,\mbb)}\|\mbx_{k}-\mbx^{\mathrm{opt}}(\mbQ,\mbb)\|\leq(2\mu s-\mu)^{k}B.
\end{equation}
As long as $s<\frac{1}{2}+\frac{1}{2\mu}$, the convergence bound \eqref{cvg13} holds uniformly for all $\mbx^{\mathrm{opt}}(\mbQ,\mbb)$. Therefore, the convergence parameter is
\begin{equation}
\label{cvg14}
\mathrm{Cvg}(k)=(2\mu s-\mu)^{k}B.
\end{equation}
$\bullet$ \textbf{Limitation:} As discussed in Section~\ref{sec5}, the limitation of this argument stems from the constraint $\mbW_i^{\top}\bar\mbQ_i = 1$ for all $i \in [n]$. In particular, it is not clear that binary weights can satisfy this condition, and no straightforward proof appears to exist.
\section{Convergence Analysis For Soft-Thresholding Operator}
\label{App_B}
In this section, we provide the convergence analysis of the ST operator for any $\delta\in(0,1]$. For each layer $k\in[K]$, we select the weight matrix $\mbW_k$ according to the construction detailed in Section~\ref{sec5}. Specifically, the weight matrices $\{\mbW_k\}_{k=1}^K$ are chosen to satisfy:
\begin{equation}
\label{to10}
\mbW_{k}\in\mathcal{ST}_{\mbW}(\mbQ;\delta)\triangleq\left\{\mbW\in\{-\lambda,\lambda\}^{s\times n}:\left\|\delta\mbI-\mbW_{\mathcal{S}}^{\top}\Tilde\mbQ\right\|+\mu_Q^{W} s\in(0,1)\right\},~k\in[K],~\delta\in(0,1],
\end{equation}
where similar to Definition~\ref{def_1}, $\mu_Q^W$ is defined as
\begin{equation}
\label{def_mu}
\mu_Q^W=\sup_{i,j\in[n],i\neq j}\left|\mbW_i^{\top}\bar\mbQ_j\right|.
\end{equation}
Similar to Appendix~\ref{App_A}, in order to guarantee that $\operatorname{supp}(\mbx_{k}) = \mathcal{S}$, we choose $\theta_k$ as
\begin{equation}
\label{to7}
\theta_k=\mu^{W_k}\sup_{\mbx^{\mathrm{opt}}(\mbQ,\mbb)}\|\mbx_{k-1}-\mbx^{\mathrm{opt}}(\mbQ,\mbb)\|_1,
\end{equation}
where $\mu^{W_k}=\sup_{\mbQ}\mu_Q^{W_k}$. We can then write the convergence result as follows:
\begin{equation}
\label{to8}
\begin{aligned}
\left\|\mbx_k-\mbx^{\mathrm{opt}}(\mbQ,\mbb)\right\|&=\left\|\operatorname{ST}_{\theta_k}\left(\delta\mbx_{k-1}-\mbW^{\top}_k\left(\bar\mbQ\mbx_{k-1}-\bar\mbb\right)\right)-\mbx^{\mathrm{opt}}(\mbQ,\mbb)\right\|,\\&=\left\|\delta\mbx_{\mathcal{S},k-1}-\mbW^{\top}_{\mathcal{S},k}\left(\Tilde\mbQ\mbx_{\mathcal{S},k-1}-\bar\mbb\right)-\mbx^{\mathrm{opt}}_{\mathcal{S}}(\mbQ,\mbb)-\theta_k\operatorname{sign}(\mathcal{F}_{k-1})\right\|,~k\in[K],
\end{aligned}
\end{equation}
where the second step follows from the selection of $\theta_k$ in \eqref{to7} and
\begin{equation}
\label{boz}
\mathcal{F}_{k-1}=\delta\mbx_{\mathcal{S},k-1}-\mbW^{\top}_{\mathcal{S},k}\left(\Tilde\mbQ\mbx_{\mathcal{S},k-1}-\bar\mbb\right),~k\in[K].
\end{equation}
We can expand \eqref{to8} as \par\noindent\small
\begin{equation}
\label{to1}
\begin{aligned}
\left\|\mbx_k-\mbx^{\mathrm{opt}}(\mbQ,\mbb)\right\|&=\left\|\delta\mbx_{\mathcal{S},k-1}-\mbW^{\top}_{\mathcal{S},k}\left(\Tilde\mbQ\mbx_{\mathcal{S},k-1}-\bar\mbb\right)-\mbx^{\mathrm{opt}}_{\mathcal{S}}(\mbQ,\mbb)-\theta_k\operatorname{sign}(\mathcal{F}_{k-1})\right\|,\\&=\left\|\delta\mbx_{\mathcal{S},k-1}-\mbW^{\top}_{\mathcal{S},k}\Tilde\mbQ\left(\mbx_{\mathcal{S},k-1}-\mbx_{\mathcal{S}}^{\mathrm{opt}}(\mbQ,\mbb)\right)-\delta\mbx_{\mathcal{S}}^{\mathrm{opt}}(\mbQ,\mbb)-(1-\delta)\mbx_{\mathcal{S}}^{\mathrm{opt}}(\mbQ,\mbb)-\theta_k\operatorname{sign}(\mathcal{F}_{k-1})\right\|,\\&\leq\left\|\left(\delta\mbI-\mbW^{\top}_{\mathcal{S},k}\Tilde\mbQ\right)
\left(\mbx_{\mathcal{S},k-1}-\mbx_{\mathcal{S}}^{\mathrm{opt}}(\mbQ,\mbb)\right)\right\|+(1-\delta)\left\|\mbx^{\mathrm{opt}}(\mbQ,\mbb)\right\|+\theta_k\sqrt{s},\\&\leq\left\|\left(\delta\mbI-\mbW^{\top}_{\mathcal{S},k}\Tilde\mbQ\right)
\left(\mbx_{\mathcal{S},k-1}-\mbx_{\mathcal{S}}^{\mathrm{opt}}(\mbQ,\mbb)\right)\right\|+(1-\delta)B+\theta_k\sqrt{s},~k\in[K].
\end{aligned}
\end{equation}\normalsize
Based on the selection of $\theta_k$ in \eqref{to7}, we have
\begin{equation}
\begin{aligned}
\label{to3}
\left\|\mbx_k-\mbx^{\mathrm{opt}}(\mbQ,\mbb)\right\|&\leq\left\|\delta\mbI-\mbW^{\top}_{\mathcal{S},k}\Tilde\mbQ\right\|\left\|\mbx_{k-1}-\mbx^{\mathrm{opt}}(\mbQ,\mbb)\right\|+(1-\delta)B\\&+\mu^{W_k}\sqrt{s}\sup_{\mbx^{\mathrm{opt}}(\mbQ,\mbb)}\|\mbx_{k-1}-\mbx^{\mathrm{opt}}(\mbQ,\mbb)\|_1,\\&\leq\left\|\delta\mbI-\mbW^{\top}_{\mathcal{S},k}\Tilde\mbQ\right\|\left\|\mbx_{k-1}-\mbx^{\mathrm{opt}}(\mbQ,\mbb)\right\|+(1-\delta)B\\&+\mu^{W_k} s\sup_{\mbx^{\mathrm{opt}}(\mbQ,\mbb)}\|\mbx_{k-1}-\mbx^{\mathrm{opt}}(\mbQ,\mbb)\|,~k\in[K].
\end{aligned}
\end{equation}
To generalize \eqref{to3} for the whole dataset, we take the supremum over $\mbx^{\mathrm{opt}}(\mbQ,\mbb)$:
\noindent\small
\begin{equation}
\begin{aligned}
\label{to3_new}
\sup_{\mbx^{\mathrm{opt}}(\mbQ,\mbb)}\left\|\mbx_k-\mbx^{\mathrm{opt}}(\mbQ,\mbb)\right\|&\leq\sup_{\Tilde\mbQ\in\mathcal{H}^{s\times s}_{r,L}}\left\|\delta\mbI-\mbW^{\top}_{\mathcal{S},k}\Tilde\mbQ\right\|\sup_{\mbx^{\mathrm{opt}}(\mbQ,\mbb)}\left\|\mbx_{k-1}-\mbx^{\mathrm{opt}}(\mbQ,\mbb)\right\|+(1-\delta)B\\&+\mu^{W_k} s\sup_{\mbx^{\mathrm{opt}}(\mbQ,\mbb)}\|\mbx_{k-1}-\mbx^{\mathrm{opt}}(\mbQ,\mbb)\|,\\&=\left[\sup_{\Tilde\mbQ\in\mathcal{H}^{s\times s}_{r,L}}\left\|\delta\mbI-\mbW^{\top}_{\mathcal{S},k}\Tilde\mbQ\right\|+\mu^{W_k} s\right]\sup_{\mbx^{\mathrm{opt}}(\mbQ,\mbb)}\left\|\mbx_{k-1}-\mbx^{\mathrm{opt}}(\mbQ,\mbb)\right\|+(1-\delta)B,~k\in[K].
\end{aligned}
\end{equation} \normalsize
By recursively applying the bound \eqref{to3_new}, we can obtain
\begin{equation}
\label{to4}
\begin{aligned}
\sup_{\mbx^{\mathrm{opt}}(\mbQ,\mbb)}\left\|\mbx_k-\mbx^{\mathrm{opt}}(\mbQ,\mbb)\right\|&\leq\Pi_{i=1}^{k}\left[ \sup_{\Tilde\mbQ\in\mathcal{H}^{s\times s}_{r,L}}\left\|\delta\mbI-\mbW^{\top}_{\mathcal{S},i}\Tilde\mbQ\right\|+\mu^{W_i} s\right]\sup_{\mbx^{\mathrm{opt}}(\mbQ,\mbb)}\left\|\mbx_{0}-\mbx^{\mathrm{opt}}(\mbQ,\mbb)\right\|\\&+(1-\delta)B+\sum_{i=1}^{k-1}(1-\delta)B\Pi_{j=1}^{i}\left[\sup_{\Tilde\mbQ\in\mathcal{H}^{s\times s}_{r,L}}\left\|\delta\mbI-\mbW^{\top}_{\mathcal{S},k-i}\Tilde\mbQ\right\|+\mu^{W_i} s\right],~k\in[K].
\end{aligned}
\end{equation}
Define $\alpha_{\theta}\triangleq\sup_{i\in[k]}\left[ \sup_{\Tilde\mbQ\in\mathcal{H}^{s\times s}_{r,L}}\left\|\delta\mbI-\mbW^{\top}_{\mathcal{S},i}\Tilde\mbQ\right\|+\mu^{W_i} s\right]$. With this definition, we can rewrite \eqref{to4} as
\begin{equation}
\label{to5}
\begin{aligned}
\sup_{\mbx^{\mathrm{opt}}(\mbQ,\mbb)}\left\|\mbx_k-\mbx^{\mathrm{opt}}(\mbQ,\mbb)\right\|&\leq\alpha_{\theta}^kB+\sum_{i=0}^{k-1}(1-\delta)B\alpha_{\theta}^i\\&\leq\mathrm{Cvg}_{\theta}(k),~k\in[K],
\end{aligned}
\end{equation}
where
\begin{equation}
\label{to6}
\mathrm{Cvg}_{\theta}(k)=\alpha_{\theta}^kB+\underbrace{\frac{1-\delta}{1-\alpha_{\theta}}B}_{u(\delta)},~k\in[K],
\end{equation}
where the worst-case convergence is $\mathrm{Cvg}(k)=\sup_{\theta}\mathrm{Cvg}_{\theta}(k)$.\\
$\bullet$ \textbf{Discussion on the effect of $\delta$:}
In \eqref{to6}, $u(\delta)$ attains its minimum at $\delta=1$, where $u(\delta=1)=0$. Thus, we expect binary network convergence to degrade as $\delta$ decreases, with the best convergence at $\delta=1$. In Section~\ref{sec5}, we established the non-emptiness of $\mathcal{ST}_{\mbW}(\mbQ;\delta)$ in \eqref{to10} by selecting $(\delta,\lambda)$ such that the upper bound of $\left[\left\|\delta\mbI-\mbW_{\mathcal{S}}^{\top}\Tilde\mbQ\right\|+\mu_Q^W s\right]$ is strictly less than $1$ for sufficiently small sparsity level $s$. This proof, however, does not cover the case $\delta=1$. In Section~\ref{App_F4}, we provide numerical evidence that optimization methods such as Adam generally do not produce weights lying in the set \eqref{to10} when $\delta=1$. In contrast, when $\delta\in(0,1)$ is chosen appropriately, the condition in \eqref{to10} is satisfied, ensuring the generalization guarantees discussed in Section~\ref{sec5}, albeit with degraded convergence. Nonetheless, for sufficiently small sparsity $s$, we also observe that Adam can still generate weights that belong to \eqref{to10} even in the case $\delta=1$. As part of future work, we aim to tighten this bound so that it remains valid for larger sparsity levels $s$ in the case $\delta=1$.
\section{Convergence Analysis For Hard-Thresholding Operator}
\label{App_C}
In this section, we present the convergence analysis of the HT operator for any $\delta\in(0,1]$. For each layer $k\in[K]$, the weight matrices $\{\mbW_k\}_{k=1}^K$ are chosen to satisfy:
\begin{equation}
\label{ht1}
\mbW_{k}\in\mathcal{HT}_{\mbW}(\mbQ;\delta)\triangleq\left\{\mbW\in\{-\lambda,\lambda\}^{s\times n}:\left\|\delta\mbI-\mbW_{\mathcal{S}}^{\top}\Tilde\mbQ\right\|\in(0,1)\right\},~k\in[K],~\delta\in(0,1].
\end{equation} 
Similar to Appendix~\ref{App_A}, in order to guarantee that $\operatorname{supp}(\mbx_{k}) = \mathcal{S}$, we choose $\theta_k$ as
\begin{equation}
\label{s_teta}
\theta_k=\mu^{W_k}\sup_{\mbx^{\mathrm{opt}}(\mbQ,\mbb)}\|\mbx_{k-1}-\mbx^{\mathrm{opt}}(\mbQ,\mbb)\|_1,
\end{equation}
where $\mu^{W_k}$ is defined as in Appendix~\ref{App_B}.
We can then write the convergence result as follows:
\begin{equation}
\label{ht2}
\begin{aligned}
\left\|\mbx_k-\mbx^{\mathrm{opt}}(\mbQ,\mbb)\right\|&=\left\|\operatorname{HT}_{\theta_k}\left(\delta\mbx_{k-1}-\mbW^{\top}_k\left(\bar\mbQ\mbx_{k-1}-\bar\mbb\right)\right)-\mbx^{\mathrm{opt}}(\mbQ,\mbb)\right\|,\\&=\left\|\delta\mbx_{\mathcal{S},k-1}-\mbW_{\mathcal{S},k}^{\top}\left(\Tilde\mbQ\mbx_{\mathcal{S},k-1}-\bar\mbb\right)-\mbx^{\mathrm{opt}}_{\mathcal{S}}(\mbQ,\mbb)\right\|,
\end{aligned}
\end{equation}
where the second step follows from the selection of $\theta_k$ in \eqref{s_teta}. We can expand \eqref{ht2} as
\begin{equation}
\label{ht2_new}
\begin{aligned}
\left\|\mbx_k-\mbx^{\mathrm{opt}}(\mbQ,\mbb)\right\|&=\left\|\delta\mbx_{\mathcal{S},k-1}-\mbW_{\mathcal{S},k}^{\top}\left(\Tilde\mbQ\mbx_{\mathcal{S},k-1}-\bar\mbb\right)-\mbx^{\mathrm{opt}}_{\mathcal{S}}(\mbQ,\mbb)\right\|,\\&=\left\|\delta\mbx_{\mathcal{S},k-1}-\mbW^{\top}_{\mathcal{S},k}\Tilde\mbQ\left(\mbx_{\mathcal{S},k-1}-\mbx_{\mathcal{S}}^{\mathrm{opt}}(\mbQ,\mbb)\right)-\delta\mbx_{\mathcal{S}}^{\mathrm{opt}}(\mbQ,\mbb)-(1-\delta)\mbx_{\mathcal{S}}^{\mathrm{opt}}(\mbQ,\mbb)\right\|,\\&\leq\left\|\left(\delta\mbI-\mbW^{\top}_{\mathcal{S},k}\Tilde\mbQ\right)
\left(\mbx_{\mathcal{S},k-1}-\mbx_{\mathcal{S}}^{\mathrm{opt}}(\mbQ,\mbb)\right)\right\|+(1-\delta)\left\|\mbx^{\mathrm{opt}}(\mbQ,\mbb)\right\|,\\&\leq\left\|\left(\delta\mbI-\mbW^{\top}_{\mathcal{S},k}\Tilde\mbQ\right)
\left(\mbx_{\mathcal{S},k-1}-\mbx_{\mathcal{S}}^{\mathrm{opt}}(\mbQ,\mbb)\right)\right\|+(1-\delta)B,~k\in[K].
\end{aligned}
\end{equation}
To generalize \eqref{ht2_new} for the whole dataset, we take the supremum over $\mbx^{\mathrm{opt}}(\mbQ,\mbb)$:
\begin{equation}
\label{gen}
\sup_{\mbx^{\mathrm{opt}}(\mbQ,\mbb)}\left\|\mbx_k-\mbx^{\mathrm{opt}}(\mbQ,\mbb)\right\|\leq\sup_{\Tilde\mbQ\in\mathcal{H}^{s\times s}_{r,L}}\left\|\delta\mbI-\mbW^{\top}_{\mathcal{S},k}\Tilde\mbQ\right\|\sup_{\mbx^{\mathrm{opt}}(\mbQ,\mbb)}\left\|\mbx_{k-1}-\mbx^{\mathrm{opt}}(\mbQ,\mbb)\right\|+(1-\delta)B,~k\in[K].
\end{equation}
By recursively applying the bound \eqref{gen}, we can obtain
\begin{equation}
\label{ht3}
\begin{aligned}
\sup_{\mbx^{\mathrm{opt}}(\mbQ,\mbb)}\left\|\mbx_k-\mbx^{\mathrm{opt}}(\mbQ,\mbb)\right\|&\leq\Pi_{i=1}^{k}\left[\sup_{\Tilde\mbQ\in\mathcal{H}^{s\times s}_{r,L}}\left\|\delta\mbI-\mbW^{\top}_{\mathcal{S},i}\Tilde\mbQ\right\|\right]\sup_{\mbx^{\mathrm{opt}}(\mbQ,\mbb)}\left\|\mbx_{0}-\mbx^{\mathrm{opt}}(\mbQ,\mbb)\right\|\\&+(1-\delta)B+\sum_{i=1}^{k-1}(1-\delta)B\Pi_{j=1}^{i}\left[\sup_{\Tilde\mbQ\in\mathcal{H}^{s\times s}_{r,L}}\left\|\delta\mbI-\mbW^{\top}_{\mathcal{S},k-i}\Tilde\mbQ\right\|\right],~k\in[K].
\end{aligned}
\end{equation}
Define $\alpha_{\theta}\triangleq\sup_{i\in[k]}\left[\sup_{\Tilde\mbQ\in\mathcal{H}^{s\times s}_{r,L}}\left\|\delta\mbI-\mbW^{\top}_{\mathcal{S},i}\Tilde\mbQ\right\|\right]$. With this definition, we can rewrite \eqref{ht3} as
\begin{equation}
\label{ht4}
\begin{aligned}
\sup_{\mbx^{\mathrm{opt}}(\mbQ,\mbb)}\left\|\mbx_k-\mbx^{\mathrm{opt}}(\mbQ,\mbb)\right\|&\leq\alpha_{\theta}^kB+\sum_{i=0}^{k-1}(1-\delta)B\alpha_{\theta}^i\\&\leq\mathrm{Cvg}_{\theta}(k),~k\in[K],
\end{aligned}
\end{equation}
where
\begin{equation}
\label{ht5}
\mathrm{Cvg}_{\theta}(k)=\alpha_{\theta}^kB+\underbrace{\frac{1-\delta}{1-\alpha_{\theta}}B}_{v(\delta)},~k\in[K],
\end{equation}
where the worst-case convergence is $\mathrm{Cvg}(k)=\sup_{\theta}\mathrm{Cvg}_{\theta}(k)$.\\
$\bullet$ \textbf{Discussion on the effect of $\delta$:} 
Similar to the ST update, for the HT update the function $v(\delta)$ attains its minimum at $\delta=1$, where $v(\delta=1)=0$. Thus, we expect the convergence of binary networks under the HT update to degrade as $\delta$ decreases. For the non-emptiness of $\mathcal{HT}_{\mbW}(\mbQ;\delta)$ in \eqref{ht1}, see the argument in Section~\ref{sec5}, where suitable choices of $\delta$ and $\lambda$ were made; however, that argument does not cover the case $\delta=1$. To establish non-emptiness in the specific case $\delta=1$, we can impose a sparse structure on the weight matrices $\mbW_{\mathcal{S},k}$. Specifically, let $\mbW_{\mathcal{S},k}$ be diagonal binary matrices with diagonal entries $\lambda$ or $-\lambda$. If $\mbW_k=\operatorname{diag}([\lambda\cdots\lambda])$ and $\lambda<\frac{1}{L}$, then $\mathcal{HT}_{\mbW}(\mbQ;1)$ is non-empty since $\Tilde\mbQ\in\mathcal{H}^{s\times s}_{r,L}$. Under this construction, the learning process can provably converge with the rate $\mathrm{Cvg}_{\theta}(k)=\alpha_{\theta}^kB$.
\section{Stability Analysis For (Soft/Hard)-Thresholding Operators}
\label{App_D}
In this section, we analyze and derive the stability parameters for both ST and HT update processes.
\subsection{Soft-Thresholding}
\label{App_D1}
By considering the parameters $\left(\mbQ^{\prime},\mbb^{\prime}\right)$, we define $\mbx^{\prime}_k$ as
\begin{equation}
\label{stab_n}
\mbx^{\prime}_k = \operatorname{ST}_{\theta_k}\left(\delta\mbx^{\prime}_{k-1}-\mbW_k^{\top}\left(\bar\mbQ^{\prime}\mbx^{\prime}_{k-1}-\bar\mbb^{\prime}\right)\right),~k\in[K],
\end{equation}
where $\left(\mbQ^{\prime},\mbb^{\prime}\right)$ represents a perturbation of $(\mbQ, \mbb)$ in the dataset.
We assume that $\mbW_k \in \mathcal{ST}_{\mbW}(\mbQ;\delta)$ for all $k \in [K]$ and for all $\mbQ$ in the dataset. 
Furthermore, we consider a perturbation pair $\left(\mbQ^{\prime}, \mbb^{\prime}\right)$ that satisfies the following two conditions: (i) for all $k \in [K]$, the matrices $\mbW_k$ also belong to $\mathcal{ST}_{\mbW}(\mbQ^{\prime};\delta)$ such that
\begin{equation}
\begin{aligned}
\left\|\delta\mbI-\mbW_{\mathcal{S},k}^{\top}\Tilde\mbQ^{\prime}\right\|+\mu_{Q^{\prime}}^{W_k}s\leq\sup_{\Tilde\mbQ\in\mathcal{H}^{s\times s}_{r,L}}\left\|\delta\mbI-\mbW_{\mathcal{S},k}^{\top}\Tilde\mbQ\right\|+\mu^{W_k} s\in(0,1),
\end{aligned}
\end{equation}
and (ii) the threshold parameter $\theta_k$ in \eqref{to7} determines the support $\mathcal{S}$ of $\mbx^{\prime}_{k}$ at each iteration $k \in [K]$.
By the definition of $\mbx_k$ in \eqref{update_sqp_1}, 
we can write the second norm of $\mbx_k$ for $\delta\in(0,1]$ as
\begin{equation}
\label{stab0}
\begin{aligned}
\|\mbx_k\|&=\left\|\operatorname{ST}_{\theta_k}\left(\delta\mbx_{k-1}-\mbW_k^{\top}\left(\bar\mbQ\mbx_{k-1}-\bar\mbb\right)\right)\right\|,~k\in[K],
\end{aligned}
\end{equation}
where by choosing $\theta_k$ as \eqref{to7}, we can write
\begin{equation}
\label{goraz_1}
\left\|\mbx_k\right\|=\left\|\delta\mbx_{\mathcal{S},k-1}-\mbW^{\top}_{\mathcal{S},k}\left(\Tilde\mbQ\mbx_{\mathcal{S},k-1}-\bar\mbb\right)-\theta_k\operatorname{sign}(\mathcal{F}_{k-1})\right\|,~k\in[K],
\end{equation}
where $\mathcal{F}_{k-1}$ is defined as \eqref{boz}. We can further simplify \eqref{goraz_1} as
\begin{equation}
\label{goraz_2}
\begin{aligned}
\left\|\mbx_k\right\|&=\left\|\delta\mbx_{\mathcal{S},k-1}-\mbW^{\top}_{\mathcal{S},k}\left(\Tilde\mbQ\mbx_{\mathcal{S},k-1}-\bar\mbb\right)-\theta_k\operatorname{sign}(\mathcal{F}_{k-1})\right\|,\\&=\left\|\delta\mbx_{\mathcal{S},k-1}-\mbW^{\top}_{\mathcal{S},k}\Tilde\mbQ\left(\mbx_{\mathcal{S},k-1}-\mbx_{\mathcal{S}}^{\mathrm{opt}}\left(\mbQ,\mbb\right)\right)-\delta\mbx_{\mathcal{S}}^{\mathrm{opt}}\left(\mbQ,\mbb\right)+\delta\mbx_{\mathcal{S}}^{\mathrm{opt}}\left(\mbQ,\mbb\right)-\theta_k\operatorname{sign}(\mathcal{F}_{k-1})\right\|,\\&\leq\left\|\delta\mbI-\mbW^{\top}_{\mathcal{S},k}\Tilde\mbQ\right\|\left\|\mbx_{k-1}-\mbx^{\mathrm{opt}}\left(\mbQ,\mbb\right)\right\|+\delta B+\theta_k\sqrt{s},~k\in[K].
\end{aligned}
\end{equation}
To generalize \eqref{goraz_2} for the whole dataset, we take the supremum over $\mbx^{\mathrm{opt}}(\mbQ,\mbb)$:
\begin{equation}
\label{goraz_3}
\begin{aligned}
\sup_{\mbx^{\mathrm{opt}}(\mbQ,\mbb)}\left\|\mbx_k\right\|&\leq\left[\sup_{\Tilde\mbQ\in\mathcal{H}^{s\times s}_{r,L}}\left\|\delta\mbI-\mbW^{\top}_{\mathcal{S},k}\Tilde\mbQ\right\|+\mu^{W_k} s\right]\sup_{\mbx^{\mathrm{opt}}(\mbQ,\mbb)}\left\|\mbx_{k-1}-\mbx^{\mathrm{opt}}(\mbQ,\mbb)\right\|+\delta B,\\&\leq\underbrace{\left[\sup_{\Tilde\mbQ\in\mathcal{H}^{s\times s}_{r,L}}\left\|\delta\mbI-\mbW^{\top}_{\mathcal{S},k}\Tilde\mbQ\right\|+\mu^{W_k} s\right]\mathrm{Cvg}_{\theta}(k-1)+\delta B}_{\zeta_{\theta}(k)},~k\in[K].
\end{aligned}
\end{equation}
In Appendix~\ref{App_B}, we have shown that $\operatorname{Cvg}_{\theta}(k) = \alpha^{k}_{\theta}+\mathrm{const}_{(1)}$. Thus,
\begin{equation}
\zeta_{\theta}(k)=\left(\alpha^{k-1}_{\theta}+\mathrm{const}_{(1)}\right)\alpha_{\theta}+\mathrm{const}_{(2)}=\alpha^{k}_{\theta}+\mathrm{const}_{(1)}\alpha_{\theta}+\mathrm{const}_{(2)}.
\end{equation}

Define $\mathcal{F}^{\prime}_{k-1}$ analogously to $\mathcal{F}_{k-1}$ in \eqref{boz}. Similarly, let $\mbz_{k-1}$ and $\mbz^{\prime}_{k-1}$ be defined in the same manner as $\mathcal{F}_{k-1}$ and $\mathcal{F}^{\prime}_{k-1}$, but without restricting to the support $\mathcal{S}$. Under the stated assumptions, we have
\begin{equation}
\label{stab1}
\begin{aligned}
\left\|\mbx_k-\mbx^{\prime}_k\right\|&=\left\|\operatorname{ST}_{\theta_k}\left(\mbz_{k-1}\right)-\operatorname{ST}_{\theta_k}\left(\mbz^{\prime}_{k-1}\right)\right\|,\\&=\left\|\mathcal{F}_{k-1}-\theta_k\operatorname{sign}\left(\mathcal{F}_{k-1}\right)-\mathcal{F}^{\prime}_{k-1}+\theta_k\operatorname{sign}\left(\mathcal{F}^{\prime}_{k-1}\right)\right\|,\\&\leq\underbrace{\left\|\mathcal{F}_{k-1}-\mathcal{F}^{\prime}_{k-1}\right\|}_{\text{Term}\star}+\underbrace{\theta_k\left\|\operatorname{sign}\left(\mathcal{F}_{k-1}\right)-\operatorname{sign}\left(\mathcal{F}^{\prime}_{k-1}\right)\right\|}_{\text{Term}\star\star}.
\end{aligned}
\end{equation}
$\text{Term}\star\star$ can be bounded as
\begin{equation}
\label{goraz_4}
\begin{aligned}
\text{Term}\star\star&\leq2\sqrt{s}\theta_k,\\&\leq2\mu^{W_k} s\sup_{\mbx^{\mathrm{opt}}(\mbQ,\mbb)}\left\|\mbx_{k-1}-\mbx^{\mathrm{opt}}(\mbQ,\mbb)\right\|,\\&\leq2\mu^{W_k} s\mathrm{Cvg}_{\theta}(k-1).
\end{aligned}
\end{equation}
For $\text{Term}\star$, we obtain
\par\noindent\small
\begin{equation}
\label{goraz_5}
\begin{aligned}
\text{Term}\star&=\left\|\delta\left(\mbx_{\mathcal{S},k-1}-\mbx^{\prime}_{\mathcal{S},k-1}\right)-\mbW^{\top}_{\mathcal{S},k}\left(\Tilde\mbQ\mbx_{\mathcal{S},k-1}-\bar\mbb\right)+\mbW^{\top}_{\mathcal{S},k}\left(\Tilde\mbQ^{\prime}\mbx_{\mathcal{S},k-1}^{\prime}-\bar\mbb^{\prime}\right)\right\|,\\&=\left\|\delta\left(\mbx_{\mathcal{S},k-1}-\mbx^{\prime}_{\mathcal{S},k-1}\right)-\mbW^{\top}_{\mathcal{S},k}\left(\Tilde\mbQ\mbx_{\mathcal{S},k-1}-\bar\mbb\right)+\mbW^{\top}_{\mathcal{S},k}\left(\Tilde\mbQ^{\prime}\mbx_{\mathcal{S},k-1}^{\prime}-\bar\mbb^{\prime}\right)+\mbW^{\top}_{\mathcal{S},k}\Tilde\mbQ^{\prime}\mbx_{\mathcal{S},k-1}-\mbW^{\top}_{\mathcal{S},k}\Tilde\mbQ^{\prime}\mbx_{\mathcal{S},k-1}\right\|,\\&\leq\left\|\delta\mbI-\mbW^{\top}_{\mathcal{S},k}\Tilde\mbQ^{\prime}\right\|\left\|\mbx_{\mathcal{S},k-1}-\mbx^{\prime}_{\mathcal{S},k-1}\right\|+\left\|\mbW_{\mathcal{S},k}\right\|\left\|\bar\mbb-\bar\mbb^{\prime}\right\|+\left\|\mbW_{\mathcal{S},k}\right\|\left\|\Tilde\mbQ-\Tilde\mbQ^{\prime}\right\|\zeta_{\theta}(k-1).
\end{aligned}
\end{equation} \normalsize
By combining \eqref{goraz_4} and \eqref{goraz_5}, and under the assumption that $\mbx_0=\mbx^{\prime}_0$, we obtain
\begin{equation}
\label{stab2}
\begin{aligned}
&\|\mbx_{k}-\mbx^{\prime}_k\|\leq \left(\|\mbW_{\mathcal{S},k}\|+\sum^{k-1}_{i=1}\|\mbW_{\mathcal{S},k-i}\|\Pi^{i}_{j=1}\left\|\delta\mbI-\mbW^{\top}_{\mathcal{S},k-j+1}\Tilde\mbQ^{\prime}\right\|\right)\|\bar\mbb-\bar\mbb^{\prime}\|\\&+\left(\|\mbW_{\mathcal{S},k}\|\zeta_{\theta}(k-1)+\sum^{k-1}_{i=1}\|\mbW_{\mathcal{S},k-i}\|\zeta_{\theta}(k-i-1)\Pi^{i}_{j=1}\left\|\delta\mbI-\mbW^{\top}_{\mathcal{S},k-j+1}\Tilde\mbQ^{\prime}\right\|\right)\left\|\Tilde\mbQ-\Tilde\mbQ^{\prime}\right\|\\&+\underbrace{2\mu^{W_k} s\mathrm{Cvg}_{\theta}(k-1)+2s\sum_{i=1}^{k-1}\mu^{W_{k-i}}\mathrm{Cvg}_{\theta}(k-i-1)\Pi^{i}_{j=1}\left\|\delta\mbI-\mbW^{\top}_{\mathcal{S},k-j+1}\Tilde\mbQ^{\prime}\right\|}_{\text{Term}(I)}.
\end{aligned}
\end{equation}
Thus, the stability parameter for $\bar\mbQ$ is
\begin{equation}
\mathrm{Stab}^{\bar\mbQ}_{\theta}(k)
=\|\mbW_{\mathcal{S},k}\|\zeta_{\theta}(k-1)+\sum^{k-1}_{i=1}\|\mbW_{\mathcal{S},k-i}\|\zeta_{\theta}(k-i-1)\Pi^{i}_{j=1}\left\|\delta\mbI-\mbW^{\top}_{\mathcal{S},k-j+1}\Tilde\mbQ^{\prime}\right\|.
\end{equation}
To extend the stability parameter to all pairs in the dataset, we express it in terms of $\alpha_\theta$, as defined in Lemma~\ref{lem_1}:
\begin{equation}
\begin{aligned}
\mathrm{Stab}^{\bar\mbQ}_{\theta}(k) &= \mathcal{O}\left(\alpha^{k-1}_{\theta}+\mathrm{const}_{(1)}\alpha_{\theta}+\mathrm{const}_{(2)}+\sum^{k-1}_{i=1}\left(\alpha^{k-i-1}_{\theta}+\mathrm{const}_{(1)}\alpha_{\theta}+\mathrm{const}_{(2)}\right)\alpha^{i}_{\theta}\right),\\
&= \mathcal{O}\left(\alpha^{k-1}_{\theta}+(k-1)\alpha^{k-1}_{\theta}+\left(\alpha_{\theta}\mathrm{const}_{(1)}\sum^{k-1}_{i=0}\alpha^{i}_{\theta}\right)+\left(\mathrm{const}_{(2)}\sum^{k-1}_{i=0}\alpha^{i}_{\theta}\right)\right),\\
&= \mathcal{O}\left(k\alpha^{k-1}_{\theta}+\frac{\alpha_{\theta}-\alpha^{k+1}_{\theta}}{1-\alpha_{\theta}}+\frac{1-\alpha^{k}_{\theta}}{1-\alpha_{\theta}}\right),\\
&= \mathcal{O}\left(1-\alpha^{k}_{\theta}\right).
\end{aligned}
\end{equation}
The worst-case stability parameter is then defined as $\mathrm{Stab}^{\bar\mbQ}(k)=\sup_{\theta}\mathrm{Stab}^{\bar\mbQ}_{\theta}(k)$.
The stability parameter for $\bar\mbb$ is
\begin{equation}
\mathrm{Stab}^{\bar\mbb}_{\theta}(k)
=\|\mbW_{\mathcal{S},k}\|+\sum^{k-1}_{i=1}\|\mbW_{\mathcal{S},k-i}\|\Pi^{i}_{j=1}\left\|\delta\mbI-\mbW^{\top}_{\mathcal{S},k-j+1}\Tilde\mbQ^{\prime}\right\|,
\end{equation}
where for all datasets:
\begin{equation}
\begin{aligned}
\mathrm{Stab}^{\bar\mbb}_{\theta}(k)=\mathcal{O}\left(\sum^{k-1}_{i=0}\alpha^i_{\theta}\right)
=\mathcal{O}\left(1-\alpha^{k}_{\theta}\right). 
\end{aligned}   
\end{equation}
$\bullet$ \textbf{Note:} There exists a slight discrepancy between the bound presented in \eqref{stab2} and that in \eqref{stab_eq}. This difference arises solely from the existence of the term $\text{Term}(I)$. It is worth emphasizing that the effect of this term, expressed as $\sup_{\theta}\text{Term}(I)$, will explicitly appear in the generalization bound formulated in Theorem~\ref{theorem_1}.
\subsection{Hard-Thresholding}
\label{App_D2}
Similar to the stability analysis for the ST operator, by choosing $\theta_k$ as outlined in Appendix~\ref{App_C}, we can obtain the stability parameter for $\bar\mbQ$ as
\begin{equation}
\begin{aligned}
\mathrm{Stab}^{\bar\mbQ}_{\theta}(k)
=\|\mbW_{\mathcal{S},k}\|\zeta_{\theta}(k-1)+\sum^{k-1}_{i=1}\|\mbW_{\mathcal{S},k-i}\|\zeta_{\theta}(k-i-1)\Pi^{i}_{j=1}\left\|\delta\mbI-\mbW^{\top}_{\mathcal{S},k-j+1}\Tilde\mbQ^{\prime}\right\|,
\end{aligned}
\end{equation}
where the parameter $\zeta_{\theta}(k)$ is
\begin{equation}
\zeta_{\theta}(k)=\mathrm{Cvg}_{\theta}(k-1)\sup_{\Tilde\mbQ\in\mathcal{H}^{s\times s}_{r,L}}\left\|\delta\mbI-\mbW_{\mathcal{S},k}^{\mathrm{\top}}\Tilde\mbQ\right\|+\delta B,~\delta\in(0,1].
\end{equation}
Similarly, the stability parameter for $\bar\mbb$ is given by
\begin{equation}
\mathrm{Stab}^{\bar\mbb}_{\theta}(k)
=\|\mbW_{\mathcal{S},k}\|+\sum^{k-1}_{i=1}\|\mbW_{\mathcal{S},k-i}\|\Pi^{i}_{j=1}\left\|\delta\mbI-\mbW^{\top}_{\mathcal{S},k-j+1}\Tilde\mbQ^{\prime}\right\|.
\end{equation}
\section{Sensitivity Analysis For (Soft/Hard)-Thresholding Operators}
\label{App_E}
In this section, we analyze and derive the sensitivity parameters for both ST and HT update processes.
\subsection{Soft-Thresholding}
\label{App_E1}
By considering the algorithmic parameters $\{\mbW^{\prime}_i,\theta^{\prime}_i\}_{i=1}^{k}$, we define $\mbx^{\prime}_k$ as
\begin{equation}
\label{sens0}
\mbx^{\prime}_k=\operatorname{ST}_{\theta^{\prime}_k}\left(\delta\mbx^{\prime}_{k-1}-\mbW^{\prime\top}_{k}\left(\bar\mbQ\mbx^{\prime}_{k-1}-\bar\mbb\right)\right).
\end{equation}
To derive the sensitivity parameter for the ST update process, we impose the following two assumptions:
(i) for all $i \in [k]$ and for every $\mbQ$ in the dataset, both $\mbW_i$ and $\mbW_i^{\prime}$ belong to $\mathcal{ST}_{\mbW}(\mbQ;\delta)$; i.e.,
\begin{equation}
\begin{aligned}
&\sup_{\Tilde\mbQ\in\mathcal{H}^{s\times s}_{r,L}}\left\|\delta\mbI-\mbW_{\mathcal{S},i}^{\top}\Tilde\mbQ\right\|+\mu^{W_i} s\in(0,1),\\&\sup_{\Tilde\mbQ\in\mathcal{H}^{s\times s}_{r,L}}\left\|\delta\mbI-\mbW_{\mathcal{S},i}^{\prime\top}\Tilde\mbQ\right\|+\mu^{W^{\prime}_i} s\in(0,1).
\end{aligned}
\end{equation}
(ii) The threshold parameters $\{\theta_i\}$ are chosen according to \eqref{to7}, ensuring that each $\theta_i$ enables accurate support recovery of $\mbx_i$ over the set $\mathcal{S}$ for all $i\in[k]$. Similarly, the parameters $\theta_i^{\prime}$ are selected based on $\mbx^{\prime}_i$ to guarantee support recovery of $\mbx^{\prime}_i$ over the same support set $\mathcal{S}$ for all $i\in[k]$.
Define $\mathcal{F}_{k-1}$ as in \eqref{boz}, and let $\mathcal{F}_{k-1}^{\prime}$ be defined analogously.
Similarly, define $\mbz_{k-1}$ and $\mbz^{\prime}_{k-1}$ in the same way as $\mathcal{F}_{k-1}$ and $\mathcal{F}^{\prime}_{k-1}$, respectively, but without restricting them to the support set $\mathcal{S}$. We can then write
\begin{equation}
\label{sens1}
\begin{aligned}
\|\mbx_{k}-\mbx^{\prime}_k\|&= \left\|\operatorname{ST}_{\theta_k}(\mbz_{k-1})-\operatorname{ST}_{\theta_k^{\prime}}(\mbz^{\prime}_{k-1})\right\|,\\&=\left\|\mathcal{F}_{k-1}-\theta_k\operatorname{sign}\left(\mathcal{F}_{k-1}\right)-\mathcal{F}^{\prime}_{k-1}+\theta_k^{\prime}\operatorname{sign}\left(\mathcal{F}^{\prime}_{k-1}\right)\right\|,\\&=\left\|\left(\mathcal{F}_{k-1}-\mathcal{F}^{\prime}_{k-1}\right)+\left(\theta_k^{\prime}-\theta_k\right)\operatorname{sign}\left(\mathcal{F}^{\prime}_{k-1}\right)+\theta_k\left(\operatorname{sign}\left(\mathcal{F}^{\prime}_{k-1}\right)-\operatorname{sign}\left(\mathcal{F}_{k-1}\right)\right)\right\|,\\&\leq\underbrace{\left\|\mathcal{F}_{k-1}-\mathcal{F}^{\prime}_{k-1}\right\|}_{\text{Term}\star}+\underbrace{\left\|\left(\theta_k-\theta_k^{\prime}\right)\operatorname{sign}\left(\mathcal{F}^{\prime}_{k-1}\right)\right\|}_{\text{Term}\star\star}+\underbrace{\theta_k\left\|\operatorname{sign}\left(\mathcal{F}^{\prime}_{k-1}\right)-\operatorname{sign}\left(\mathcal{F}_{k-1}\right)\right\|}_{\text{Term}\star\star\star}.
\end{aligned}
\end{equation}
The second term, $\text{Term}\star\star$, can be upper bounded as
\begin{equation}
\label{ghoor_1}
\begin{aligned}
\text{Term}\star\star\leq\sqrt{s}\left|\theta_k-\theta_k^{\prime}\right|.
\end{aligned}
\end{equation}
Similarly, the third term, $\text{Term}\star\star\star$, admits the following bound:
\begin{equation}
\label{ghoor}
\begin{aligned}
\text{Term}\star\star\star&\leq2\sqrt{s}\theta_k,\\&\leq2\mu^{W_k}s\sup_{\mbx^{\mathrm{opt}}(\mbQ,\mbb)}\|\mbx_{k-1}-\mbx^{\mathrm{opt}}(\mbQ,\mbb)\|,\\&\leq2\mu^{W_k}s\mathrm{Cvg}_{\theta}(k-1).
\end{aligned}
\end{equation}
Also, $\text{Term}\star$ has the following bound
\begin{equation}
\label{ghoor_2}
\begin{aligned}
\text{Term}\star
&=\left\|\delta\mbx_{\mathcal{S},k-1}-\mbW_{\mathcal{S},k}^{\top}\left(\Tilde\mbQ\mbx_{\mathcal{S},k-1}-\Tilde\mbQ\mbx^{\mathrm{opt}}_{\mathcal{S}}\left(\mbQ,\mbb\right)\right)-\delta\mbx^{\prime}_{\mathcal{S},k-1}+\mbW_{\mathcal{S},k}^{\prime\top}\left(\Tilde\mbQ\mbx^{\prime}_{\mathcal{S},k-1}-\Tilde\mbQ\mbx^{\mathrm{opt}}_{\mathcal{S}}\left(\mbQ,\mbb\right)\right)\right\|,\\&\leq\left\|\delta\mbI-\mbW_{\mathcal{S},k}^{\top}\Tilde\mbQ\right\|\left\|\mbx_{k-1}-\mbx^{\prime}_{k-1}\right\|+\left\|\Tilde\mbQ\right\|\left\|\mbx^{\prime}_{k-1}-\mbx^{\mathrm{opt}}\left(\mbQ,\mbb\right)\right\|\left\|\mbW_{\mathcal{S},k}-\mbW_{\mathcal{S},k}^{\prime}\right\|,\\&\leq\left\|\delta\mbI-\mbW_{\mathcal{S},k}^{\top}\Tilde\mbQ\right\|\left\|\mbx_{k-1}-\mbx^{\prime}_{k-1}\right\|+\left\|\Tilde\mbQ\right\|\underbrace{\left\|\mbx^{\prime}_{k-1}-\mbx^{\mathrm{opt}}\left(\mbQ,\mbb\right)\right\|}_{\text{Term}\square}\left\|\mbW_{k}-\mbW_{k}^{\prime}\right\|.
\end{aligned}
\end{equation}
In \eqref{ghoor_2}, $\text{Term}\square$ can be bounded by $\mathrm{Cvg}_{\theta^{\prime}}(k-1)$. 
By combining \eqref{ghoor_1}, \eqref{ghoor}, and \eqref{ghoor_2}, and under the assumption that $\mbx_0=\mbx^{\prime}_0$, we can expand \eqref{sens1} for all $k$ layers and write
\begin{equation}
\label{sens3}
\begin{aligned}
\left\|\mbx_{k}-\mbx^{\prime}_k\right\|&\leq \left\|\Tilde\mbQ\right\|\mathrm{Cvg}_{\theta^{\prime}}(k-1)
\left\|\mbW_{k}-\mbW^{\prime}_{k}\right\|+\sqrt{s}|\theta_k-\theta^{\prime}_k|+2\mu^{W_k} s\mathrm{Cvg}_{\theta}(k-1)\\&+\sum_{i=1}^{k-1}\left\|\mbW_{k-i}-\mbW^{\prime}_{k-i}\right\|\left\|\Tilde\mbQ\right\|\mathrm{Cvg}_{\theta^{\prime}}(k-i-1)\Pi^{i}_{j=1}\left\|\delta\mbI-\mbW^{\top}_{\mathcal{S},k-j+1}\Tilde\mbQ\right\|\\&+\sum_{i=1}^{k-1}\left|\theta_{k-i}-\theta^{\prime}_{k-i}\right|\sqrt{s}\Pi^{i}_{j=1}\left\|\delta\mbI-\mbW^{\top}_{\mathcal{S},k-j+1}\Tilde\mbQ\right\|\\&+2s\sum_{i=1}^{k-1}\mu^{W_{k-i}}\mathrm{Cvg}_{\theta}(k-i-1)\Pi^{i}_{j=1}\left\|\delta\mbI-\mbW^{\top}_{\mathcal{S},k-j+1}\Tilde\mbQ\right\|.
\end{aligned}
\end{equation}
The bound in \eqref{sens3} holds only for a single data pair $\left(\mbQ,\mbb\right)$. To extend this result to the entire dataset, we write
\begin{equation}
\label{sens3_new}
\begin{aligned}
\left\|\mbx_{k}-\mbx^{\prime}_k\right\|&\leq \left[\sup_{\Tilde\mbQ\in\mathcal{H}^{s\times s}_{r,L}}\left\|\Tilde\mbQ\right\|\right]\mathrm{Cvg}_{\theta^{\prime}}(k-1)
\left\|\mbW_{k}-\mbW^{\prime}_{k}\right\|+\sqrt{s}|\theta_k-\theta^{\prime}_k|+\underbrace{2\mu^{W_k} s\mathrm{Cvg}_{\theta}(k-1)}_{\text{Term}(a)}\\&+\sum_{i=1}^{k-1}\left\|\mbW_{k-i}-\mbW^{\prime}_{k-i}\right\|\left[\sup_{\Tilde\mbQ\in\mathcal{H}^{s\times s}_{r,L}}\left\|\Tilde\mbQ\right\|\right]\mathrm{Cvg}_{\theta^{\prime}}(k-i-1)\Pi^{i}_{j=1}\left[\sup_{\Tilde\mbQ\in\mathcal{H}^{s\times s}_{r,L}}\left\|\delta\mbI-\mbW^{\top}_{\mathcal{S},k-j+1}\Tilde\mbQ\right\|\right]\\&+\sum_{i=1}^{k-1}\left|\theta_{k-i}-\theta^{\prime}_{k-i}\right|\sqrt{s}\Pi^{i}_{j=1}\left[\sup_{\Tilde\mbQ\in\mathcal{H}^{s\times s}_{r,L}}\left\|\delta\mbI-\mbW^{\top}_{\mathcal{S},k-j+1}\Tilde\mbQ\right\|\right]\\&+\underbrace{2s\sum_{i=1}^{k-1}\mu^{W_{k-i}}\mathrm{Cvg}_{\theta}(k-i-1)\Pi^{i}_{j=1}\left[\sup_{\Tilde\mbQ\in\mathcal{H}^{s\times s}_{r,L}}\left\|\delta\mbI-\mbW^{\top}_{\mathcal{S},k-j+1}\Tilde\mbQ\right\|\right]}_{\text{Term}(b)}.
\end{aligned}
\end{equation}
Without loss of generality, assume that $\mathrm{Cvg}_{\theta}(i)\geq\mathrm{Cvg}_{\theta^{\prime}}(i)$ for all $i\in[k]$.
The corresponding sensitivity parameter associated with $\{\mbW_i,\theta_i\}_{i=1}^{k}$ can then be expressed as
\begin{equation}
\label{sens4}
\begin{aligned}
\mathrm{Sens}^{\mbW_k}_\theta(k)&= \left[\sup_{\Tilde\mbQ\in\mathcal{H}^{s\times s}_{r,L}}\left\|\Tilde\mbQ\right\|\right]\mathrm{Cvg}_{\theta}(k-1),\\\mathrm{Sens}^{\mbW_{k-i}}_\theta(k)&=\left[\sup_{\Tilde\mbQ\in\mathcal{H}^{s\times s}_{r,L}}\left\|\Tilde\mbQ\right\|\right]\mathrm{Cvg}_{\theta}(k-i-1)\Pi^{i}_{j=1}\left[\sup_{\Tilde\mbQ\in\mathcal{H}^{s\times s}_{r,L}}\left\|\delta\mbI-\mbW^{\top}_{\mathcal{S},k-j+1}\Tilde\mbQ\right\|\right],~i\in[k-1],\\\mathrm{Sens}^{\theta_k}_\theta(k)&=\sqrt{s},\\\mathrm{Sens}^{\theta_{k-i}}_\theta(k)&=\sqrt{s}\Pi^{i}_{j=1}\left[\sup_{\Tilde\mbQ\in\mathcal{H}^{s\times s}_{r,L}}\left\|\delta\mbI-\mbW^{\top}_{\mathcal{S},k-j+1}\Tilde\mbQ\right\|\right],~i\in[k-1].
\end{aligned}
\end{equation}
$\bullet$ \textbf{Note:} There exists a slight discrepancy between the bound presented in \eqref{sens3_new} and that in \eqref{sens_eq}. This difference arises solely from the existence of the terms $\text{Term}(a)$ and $\text{Term}(b)$. It is worth emphasizing that the effect of these two terms, expressed as $\sup_{\theta}\left[\text{Term}(a) + \text{Term}(b)\right]$, will explicitly appear in the generalization bound formulated in Theorem~\ref{theorem_1}.
\subsection{Hard-Thresholding}
\label{App_E2}
In the sensitivity analysis of the HT operator, the threshold parameter $\theta_k$ does not appear in the resulting upper bound.
Following a similar reasoning to that used for the ST operator, the sensitivity parameter corresponding to $\{\mbW_i\}_{i=1}^{k}$ can be written as
\begin{equation}
\begin{aligned}
\mathrm{Sens}^{\mbW_k}_\theta(k)&= \left[\sup_{\Tilde\mbQ\in\mathcal{H}^{s\times s}_{r,L}}\left\|\Tilde\mbQ\right\|\right]\mathrm{Cvg}_{\theta}(k-1),\\\mathrm{Sens}^{\mbW_{k-i}}_\theta(k)&=\left[\sup_{\Tilde\mbQ\in\mathcal{H}^{s\times s}_{r,L}}\left\|\Tilde\mbQ\right\|\right]\mathrm{Cvg}_{\theta}(k-i-1)\Pi^{i}_{j=1}\left[\sup_{\Tilde\mbQ\in\mathcal{H}^{s\times s}_{r,L}}\left\|\delta\mbI-\mbW^{\top}_{\mathcal{S},k-j+1}\Tilde\mbQ\right\|\right],~i\in[k-1].
\end{aligned}
\end{equation}
\section{Further Numerical Analysis}
\label{App_F}
In this section, we present a comprehensive numerical analysis and ablation study of the proposed scheme and its key parameters. We begin by introducing the bit count formula used to report the number of bits. Next, we demonstrate how one-bit quantization can smooth out inconsistencies in the algorithm’s layer-wise behavior, which is ideally expected to be monotonic. We then analyze the computational time required for each stage of the algorithm to assess its practical usability. Finally, we conduct an in-depth investigation of two critical parameters: the scale quantization parameter $\lambda$ and the modification parameter $\delta$, which is introduced to support the theoretical guarantees. We evaluate the impact of these parameters and discuss their selection strategies across both synthetic and real datasets.
\subsection{Bit Count Analysis}
\label{App_F1}
The calculation of the number of bits required for the models in Table~\ref{table_1} is summarized in Table~\ref{table_numbits}.

\subsection{Convergence Inconsistencies For DUN}
\label{App_F2}
In Fig.~\ref{figure_2}, we compare the convergence of DUN with high-resolution weights against one-bit weights, presenting both training and test results. One expectation from algorithm unrolling is that it should mimic the monotonic decreasing behavior of classical optimization solvers across iterations, corresponding to layers in the unrolled network. However, as shown in Fig.~\ref{figure_2}, the DUN deviates from this trend in some layer instances, exhibiting significant inconsistencies, a phenomenon thoroughly analyzed in \cite{heaton2023safeguarded}. In our empirical findings, we observe that binarizing the linear weights can improve the convergence of the unrolled algorithm in the sense of reducing these inconsistencies, as illustrated in Fig.~\ref{figure_2}(b)-(c). In particular, the convergence inconsistency in DUN results in a significantly high NMSE of $4$ dB in the $21$st layer. In contrast, the one-bit DUN maintains a much lower error, with NMSEs of $-2.5$ dB using Lazy projection and $-5$ dB with the $\ell_1$ regularization method. These findings highlight the effectiveness of binarization in reducing convergence inconsistencies. In high-resolution DUNs, the variability in weight magnitudes and directions across iterations can
lead to inconsistent convergence behaviors, including potential increases in the objective function.
Binarization standardizes the weight magnitudes, leading to more uniform influence of each weight on the updates.
Moreover, the discrete nature of binarized weights reduces the variance in the updates. Since the
weights can only take on two values, the updates become more predictable, and the optimization
process is less susceptible to the erratic behaviors caused by fluctuating weight values. Additionally, Fig.~\ref{figure_2}(b) and (c) compare one-bit DUN using lazy projection versus $\ell_1$-regularization. As shown, the latter achieves superior performance.
\begin{figure*}[t]
	\centering
	\subfloat[]
		{\includegraphics[width=0.3\columnwidth]{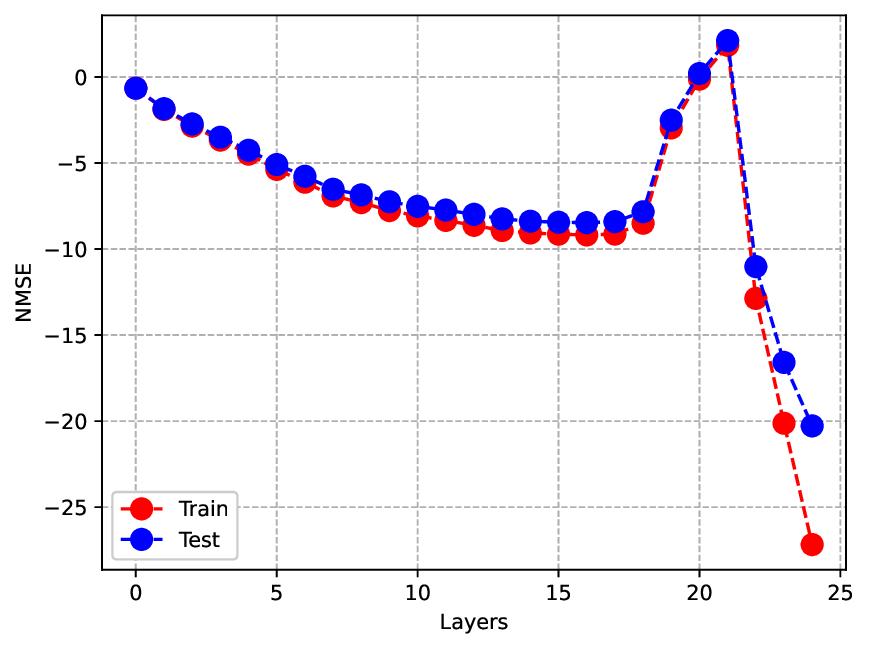}}\quad
    \subfloat[]
		{\includegraphics[width=0.3\columnwidth]{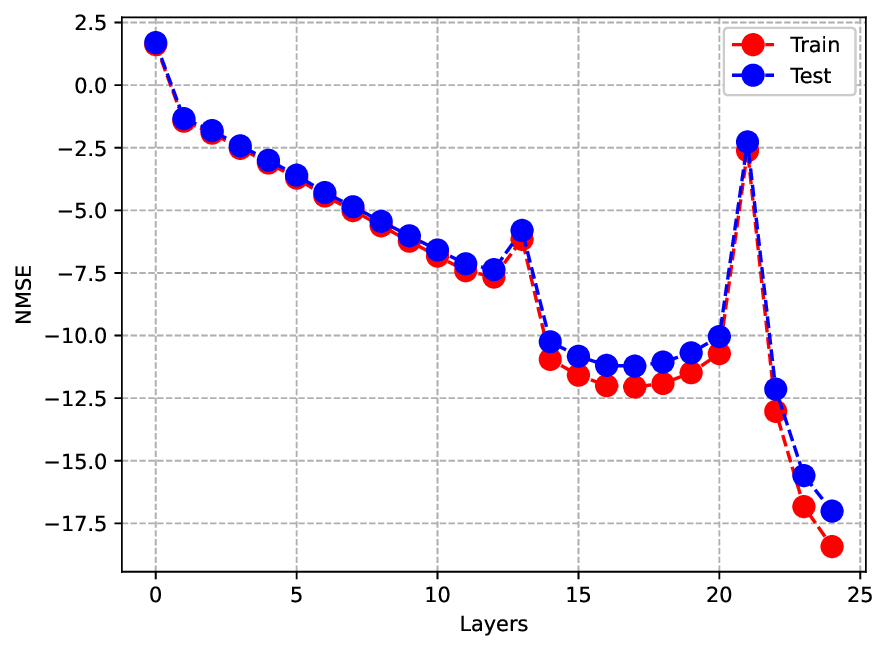}}\quad
    \subfloat[]
		{\includegraphics[width=0.3\columnwidth]{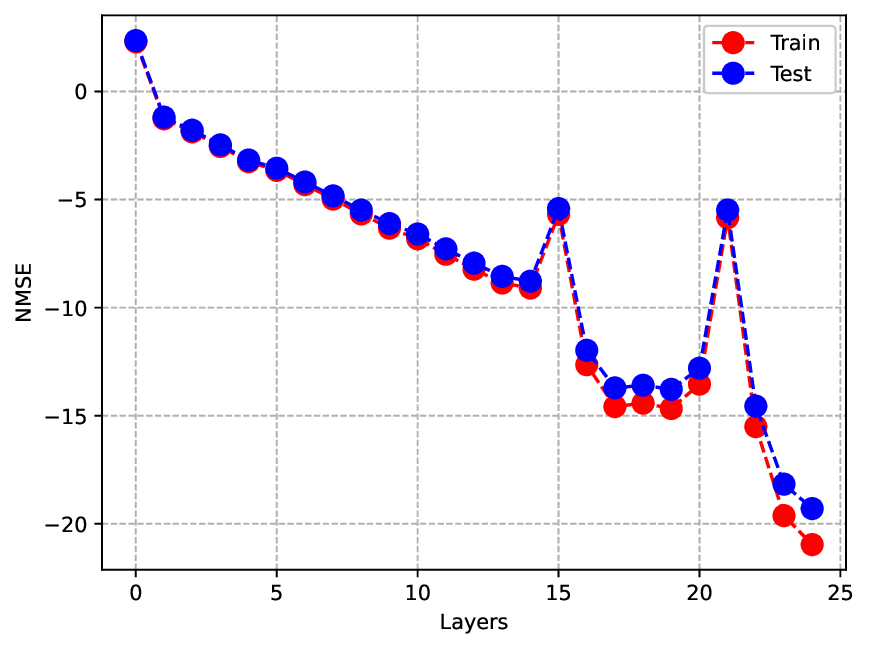}}
	\caption{Comparison of per-layer error across both training and test stages for three models: (a) DUN with high-resolution weights, (b) one-bit DUN utilizing lazy projection, and (c) one-bit DUN employing $\ell_1$-regularization.}
\label{figure_2}
\end{figure*}
\begin{table}[t]
\caption{Summary of the storage bit formulations for the FCN with ReLU, FCN with ST, DUN, and One-Bit DUN models, as presented in Table~\ref{table_1}.}
\centering
\setlength{\tabcolsep}{4pt}
\begin{tabular}{  c || c }
\hline
 &  $\#$ Bits \\[0.5 ex]
\hline
FCN with ReLU & $32(mn+Kn^2)$ \\
FCN with ST & $32(mn+Kn^2)+32K$ \\
DUN & $32K(mn+1)$ \\
One-Bit DUN & $K(mn+32)$ \\
\hline
\end{tabular}
\label{table_numbits}
\end{table}
\begin{table}[ht]
\caption{The CPU time $(\text{min}.\text{sec})$ of different stages of the proposed one-bit algorithm unrolling method.}
\centering
\setlength{\tabcolsep}{4pt}
\begin{tabular}{  c || c | c | c | c| c | c}
\hline
$\#$ Layers &  $5$ & $10$ & $15$& $20$& $22$& $25$ \\[0.5 ex]
\hline
Pretraining & $1.21$ & $1.59$& $2.45$  & $3.33$& $3.51$& $4.00$ \\
QAT & $1.17$ & $2.12$& $3.00$  & $3.31$& $3.48$  & $3.59$\\
Scale update & $0.30$ & $0.34$& $0.32$  & $0.35$ & $0.34$  & $0.33$\\
\hline
\end{tabular}
\label{table_5}
\end{table}
\begin{table}[ht]
\caption{Training/Test NMSE comparison between learned and fixed scale values, with and without \textbf{Stage~II} of the proposed one-bit unrolling method.}
\centering
\setlength{\tabcolsep}{4pt}
\begin{tabular}{  c || c | c | c | c}
\hline
$\lambda_0\lambda$ &  $0.0604$ & $0.02$ & $0.05$& $0.1$ \\[0.5 ex]
\hline
 Training NMSE (dB)& $-20.96$ & $-15.01$& $-15.94$  & $-8.03$ \\
 Test NMSE (dB)& $-19.30$ & $-13.20$& $-13.83$  & $-6.01$\\
\hline
\end{tabular}
\label{table_60}
\end{table}
\begin{table}[ht]
\caption{Scale values obtained after \textbf{Stage~II} of the proposed one-bit algorithm unrolling method, shown as a function of the number of layers.}
\centering
\setlength{\tabcolsep}{4pt}
\begin{tabular}{  c || c | c | c | c | c | c }
\hline
$\#$ Layers &  $5$ & $10$ & $15$& $20$  & $22$ & $25$ \\[0.5 ex]
\hline
$\lambda_0\lambda$ & $0.0501$ & $0.0594$& $0.0592$  & $0.0618$ & $0.0734$$25$ & $0.0604$\\
\hline
\end{tabular}
\label{table_20}
\end{table}
\subsection{Computational Complexity Analysis}
\label{App_F3}
Although QAT enables more effective quantization by adapting to the model structure, it typically incurs higher computational cost due to the required pretraining. However, in our case, this overhead is minimal since the DUN architecture has significantly fewer parameters than standard FCNs and requires less training data to achieve strong performance. As shown in Table~\ref{table_5}, the QAT stage takes roughly the same time as pretraining, and the scale parameter update is fast—taking only a few seconds—as it involves learning a single value. The experimental setting for these results is identical to that used in Table~\ref{table_1}. All experiments are conducted using 2 vCPUs and 1 GPU. We report the average NMSE over 15 random seeds, using the ADAM optimizer for training.

\subsection{Ablation Study of \texorpdfstring{$\lambda$}{lambda}}
\label{App_F40}
To demonstrate the impact of the data-driven design of the quantization scale parameter, we report performance results in Table~\ref{table_60}, using the same experimental setting as in Table~\ref{table_1}. As observed, the learned scale parameter significantly improves final performance. Notably, the best result was obtained with a scale value of $0.0604$, which emerged from the scale update process in Stage~II of our proposed one-bit unrolling algorithm, starting from an initial value of $\lambda_0 = 0.02$. In contrast, the other values remained fixed and were not updated during this process.

Additionally, Table~\ref{table_20} reports the learned scale parameter for each DUN configuration with varying layer counts, corresponding to the performance results presented in Table~\ref{table_1}.
\begin{figure*}[t]
	\centering
	\subfloat[$K=5,\delta=1$.]
		{\includegraphics[width=0.2\columnwidth]{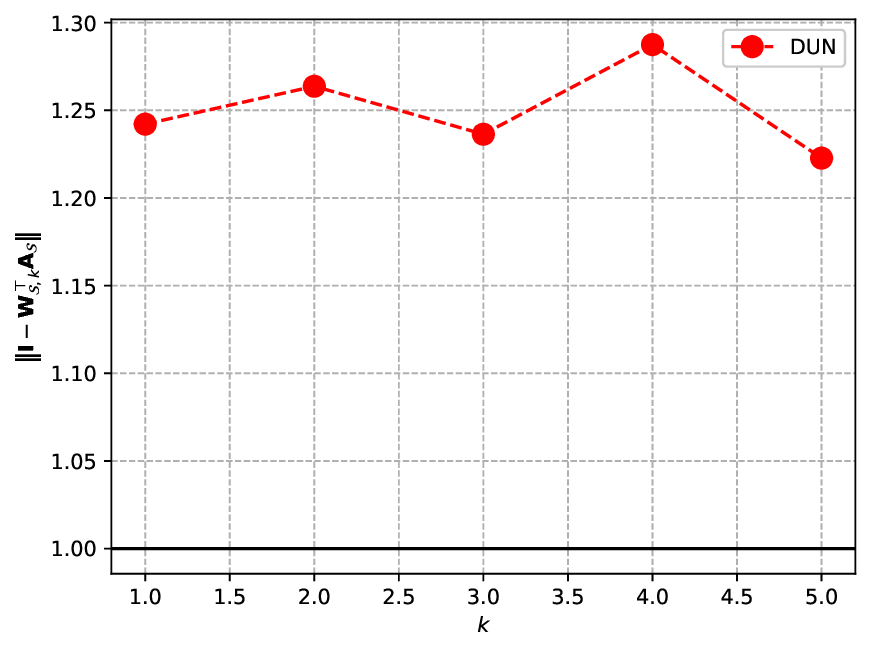}}\quad
    \subfloat[$K=5,\delta=1,\lambda=0.02$.]
		{\includegraphics[width=0.2\columnwidth]{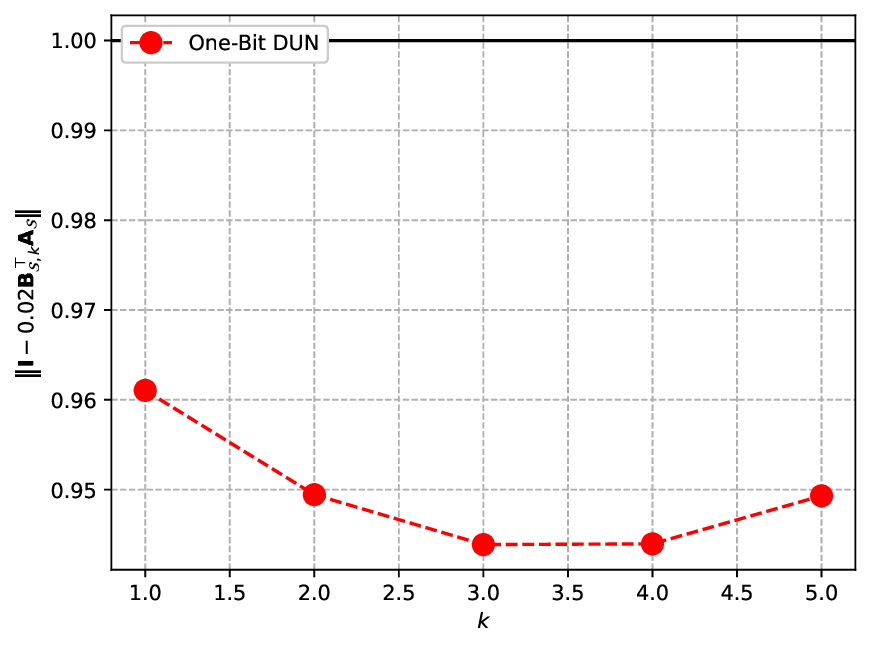}}\quad
    \subfloat[$K=10,\delta=1$.]
		{\includegraphics[width=0.2\columnwidth]{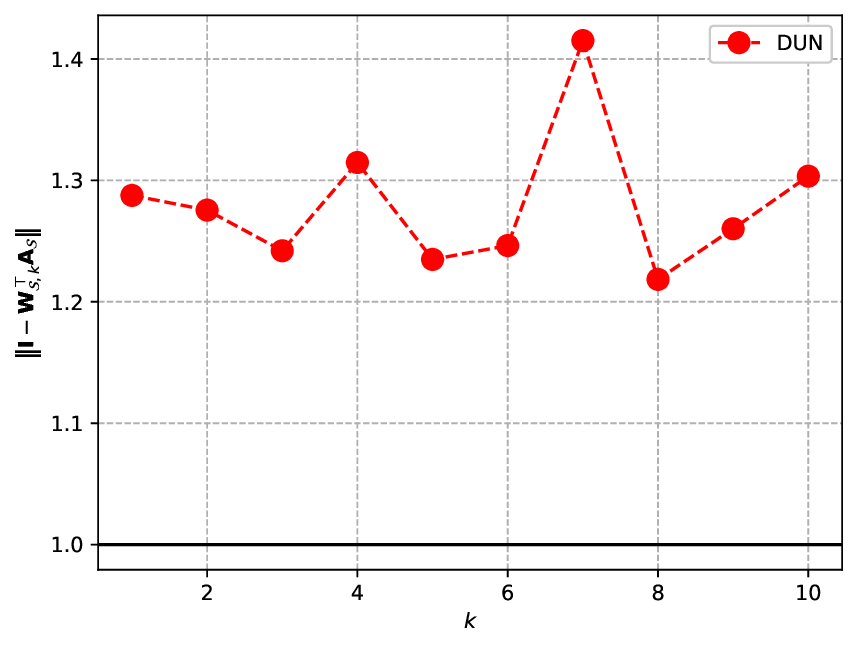}}\quad
    \subfloat[$K=10,\delta=1,\lambda=0.02$.]
		{\includegraphics[width=0.2\columnwidth]{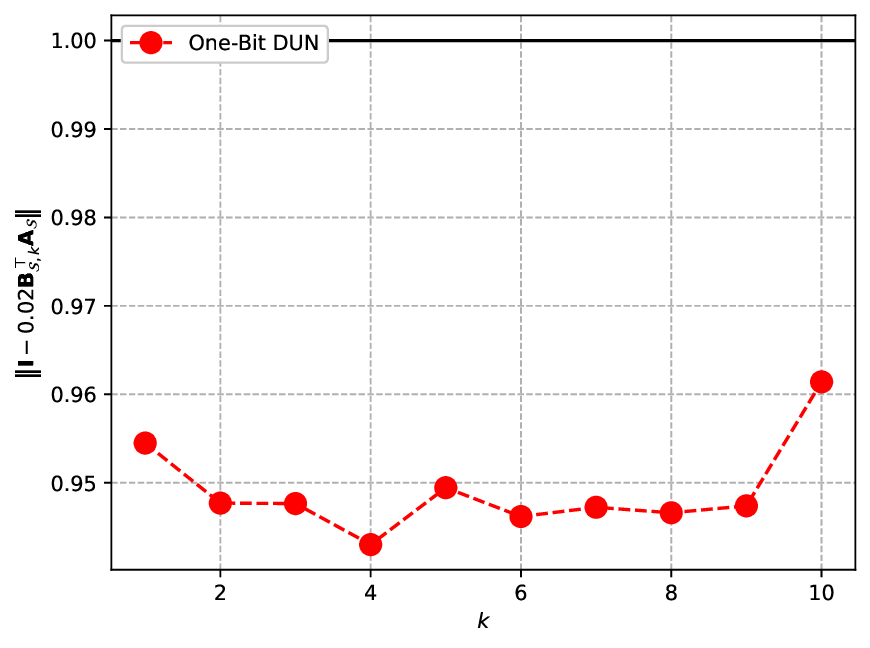}}
	\caption{Spectral plots of the HT update process for (a) DUN with $K=5,\delta=1$, (b) one-bit DUN with $K=5,\delta=1,\lambda=0.02$, (c) DUN with $K=10,\delta=1$, and (d) one-bit DUN with $K=10,\delta=1,\lambda=0.02$. These results illustrate that the training process naturally learns binary weights from the set defined in \eqref{ht1}.}
\label{figure_3}
\end{figure*}
\subsection{Ablation Study of \texorpdfstring{$\delta$}{delta}}
\label{App_F4}

In this section, we present ablation experiments to systematically examine the impact of $\delta$ on convergence and generalization bounds. 
Specifically, we use the same data settings as in Fig.~\ref{figure_1}(a), but with a fixed sparse set $\mathcal{S}$ of size $|\mathcal{S}|=10$ for the HT operator and $|\mathcal{S}|=5$ for the ST operator, where the non-zero indices of $\mbx^{\mathrm{opt}}$ are drawn from $\mathcal{S}$. 
The learning rates for \textbf{Stage~I} and \textbf{Stage~II} of our proposed one-bit algorithm unrolling remain consistent with previous settings.

\textbf{Hard-Thresholding:} As established in Theorem~\ref{theorem_1}, the generalization ability of the HT update process is determined by three key factors: convergence, stability, and sensitivity. In Appendices~\ref{App_C}, \ref{App_D2}, and \ref{App_E2}, we demonstrated that the key parameters governing these characteristics for DUN and one-bit DUN are given by $f_k\triangleq\left\|\delta\mbI-\mbW_{\mathcal{S},k}^{\top}\mbA_{\mathcal{S}}\right\|$ for DUN and $f_k\triangleq\left\|\delta\mbI-\lambda\mbB_{\mathcal{S},k}^{\top}\mbA_{\mathcal{S}}\right\|$ for one-bit DUN, respectively\footnote{The guarantees are established for the HT update process using the Gram matrix $\mbQ$. However, the theoretical results can also be extended to the HT update process with the fixed sensing matrix $\mbA$.}. Fig.~\ref{figure_3}(a) and (b) present the values of $f_k$ for $k\in[5]$ in DUN and one-bit DUN, respectively. Similarly, Fig.~\ref{figure_3}(c) and (d) depict the corresponding values for $k\in[10]$. In these experiments, we set $\delta=1$, aligning with the classical update process proposed in \cite{chen2018theoretical} with a slight modification incorporating the HT operator. As observed, DUN (without binarization) inherently does not learn the weight matrices in the set \eqref{ht1}. In contrast, the one-bit DUN, with an appropriately chosen scale value $\lambda$, naturally learns the binary weight matrices belonging to \eqref{ht1}. This finding is remarkable, as it suggests that the HT update process with binary weights inherently learns these structures without the need for any additional parameters, such as $\delta$. As extensively discussed in Appendices~\ref{App_C}, \ref{App_D2}, and \ref{App_E2}, the condition $f_k<1$ for all $k\in[K]$ ensures a bounded generalization gap. This condition is confirmed in Fig.~\ref{figure_3}(b) and (d). The corresponding results for train/test NMSE and the generalization gap are reported in Table~\ref{table_7}(left panel) and Table~\ref{table_8}(right panel) for $K=5$ and $K=10$, respectively.

In the second part of our analysis, we examine the impact of $\delta$ on convergence and generalization ability. As previously discussed, setting $\delta=1$ enables the one-bit DUN with the HT update process to naturally learn binary weight matrices conforming to \eqref{ht1}. Under this condition, Appendix~\ref{App_C} confirms that the convergence of the update process is guaranteed.
In contrast, when $\delta\neq 1$, convergence generally deteriorates due to the emergence of an additional term in the convergence upper bound, as characterized in \eqref{ht5}. Nonetheless, smaller values of $\delta$ lead to reduced spectral terms $f_k$ during training. According to Theorem~\ref{theorem_1}, this introduces a trade-off in the generalization ability of the HT update process: while a smaller $\delta$ improves the spectral terms that govern stability and sensitivity, it simultaneously impairs the convergence parameter that also appears in their upper bounds.
We examined this phenomenon in Table~\ref{table_7}(left panel) and Table~\ref{table_8}(right panel), which report results for the one-bit DUN with $K=5$ and $K=10$, respectively. The corresponding spectral values $f_k$ are visualized in Fig.~\ref{figure_5}, based on the results presented in these tables. As evident from the figure, reducing the value of $\delta$ can improve the spectral terms $f_k$ (by decreasing their magnitude); however, this does not necessarily translate to better generalization performance, as it may simultaneously hinder convergence.

\begin{figure*}[t]
	\centering
	\subfloat[$\delta=1$]
		{\includegraphics[width=0.22\columnwidth]{spectral_plot_binary_5_delta_1_lambda_0.02.eps}}\quad
    \subfloat[$\delta=0.95$]
		{\includegraphics[width=0.22\columnwidth]{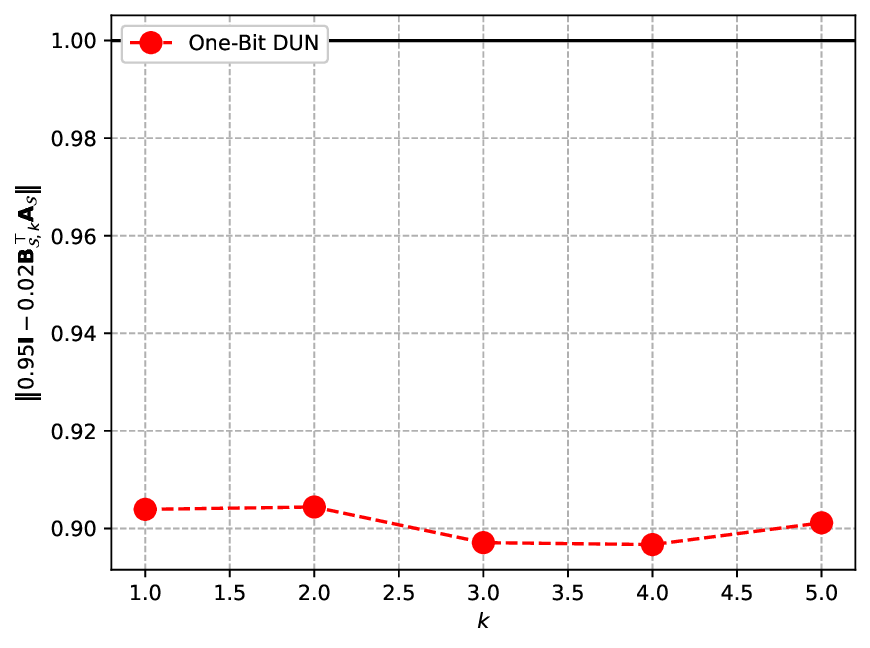}}
    \subfloat[$\delta=0.9$]
		{\includegraphics[width=0.22\columnwidth]{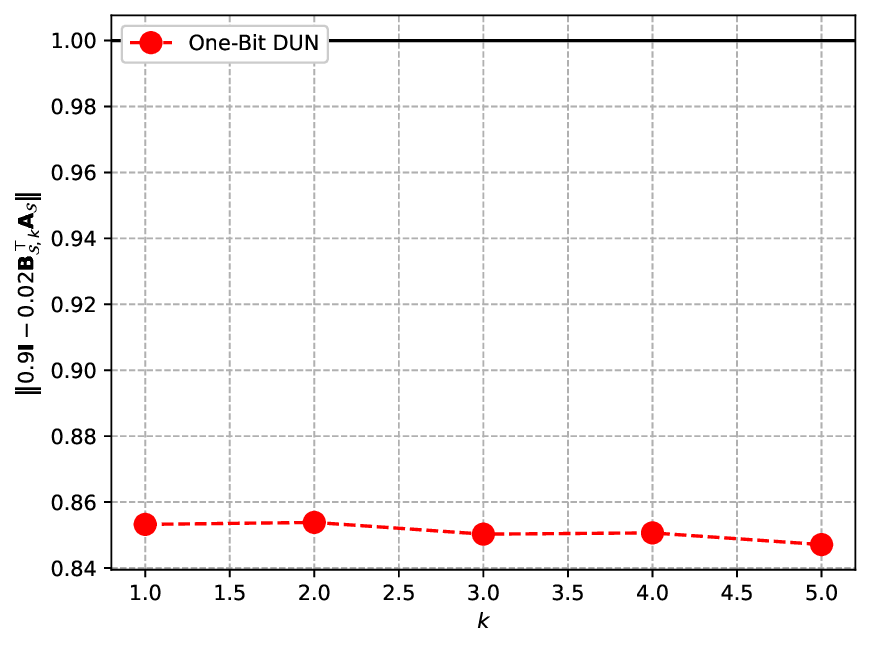}}\quad
    \subfloat[$\delta=0.85$]
		{\includegraphics[width=0.22\columnwidth]{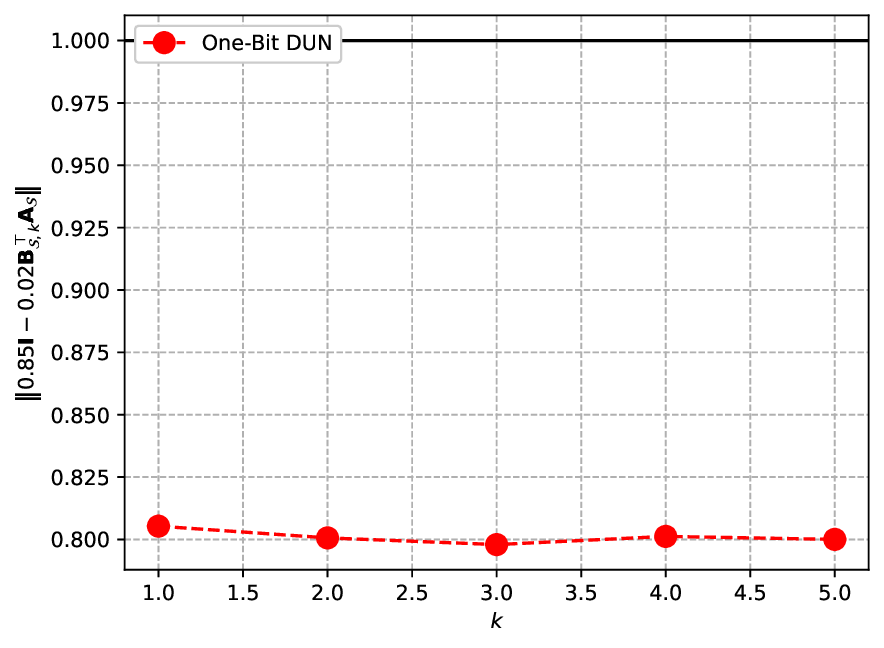}}\\
    \subfloat[$\delta=1$]
		{\includegraphics[width=0.22\columnwidth]{spectral_plot_binary_10_delta_1_lambda_0.02.eps}}\quad
    \subfloat[$\delta=0.95$]
		{\includegraphics[width=0.22\columnwidth]{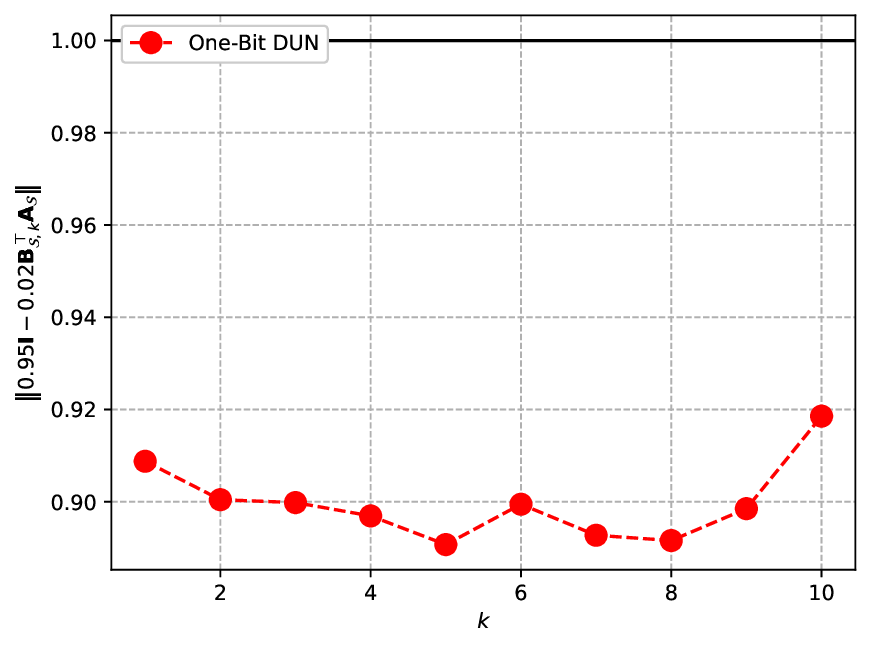}}
    \subfloat[$\delta=0.9$]
		{\includegraphics[width=0.22\columnwidth]{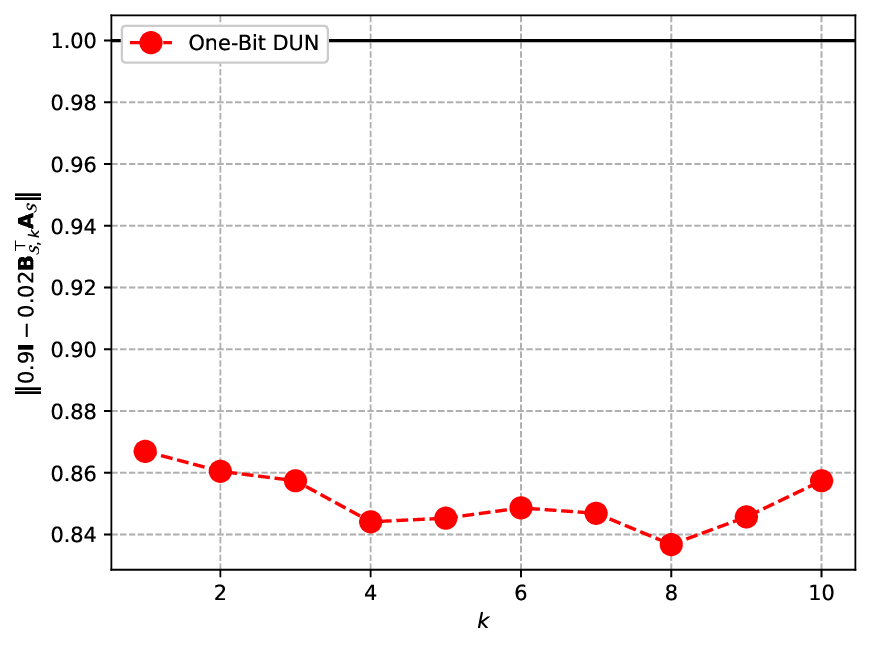}}\quad
    \subfloat[$\delta=0.85$]
		{\includegraphics[width=0.22\columnwidth]{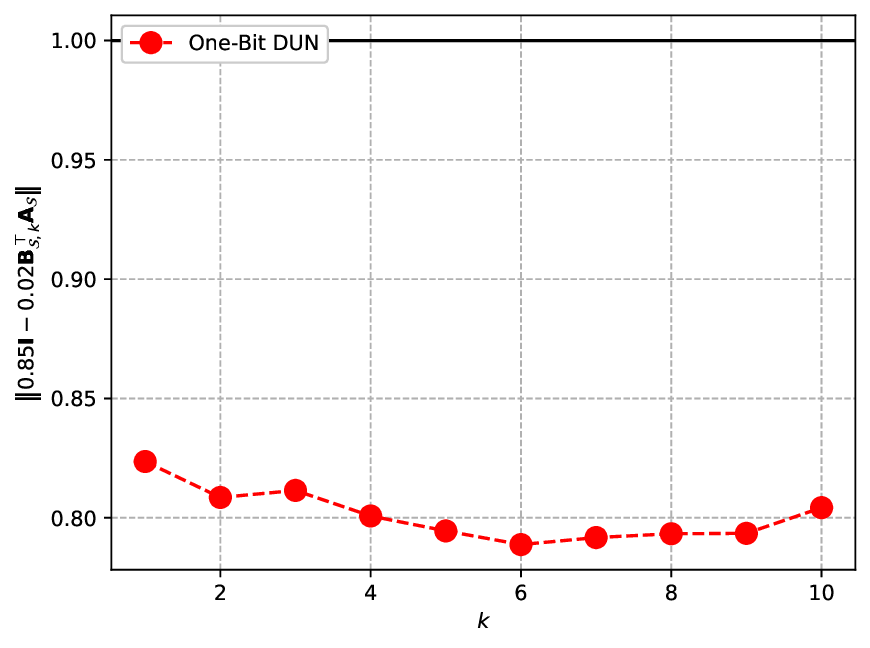}}
	\caption{Spectral plots of the HT update process for the one-bit DUN with 
    (a) $K=5,\delta=1$, (b) $K=5,\delta=0.95$, (c) $K=5,\delta=0.9$, (d) $K=5,\delta=0.85$, (e) $K=10,\delta=1$, (f) $K=10,\delta=0.95$, (g) $K=10,\delta=0.9$, and (h) $K=10,\delta=0.85$. All results are obtained with a fixed scale value of $\lambda=0.02$.}
\label{figure_5}
\end{figure*}
\begin{figure}[t]
	\centering
    \subfloat[DUN with $K=10,\delta=1$.]
		{\includegraphics[width=0.3\columnwidth]{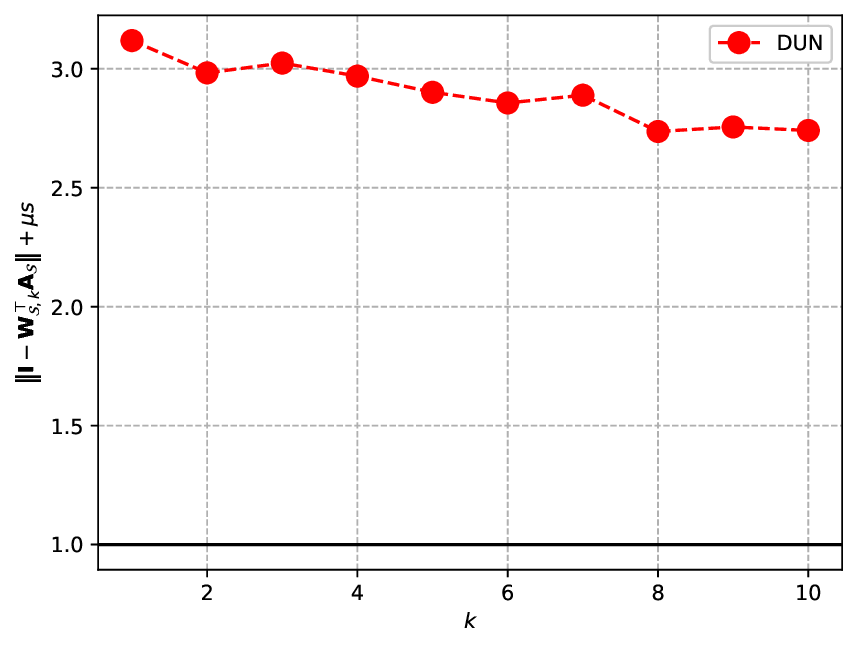}}\quad
	\subfloat[One-Bit DUN with $K=10,\delta=1,\lambda=0.015$.]
		{\includegraphics[width=0.3\columnwidth]{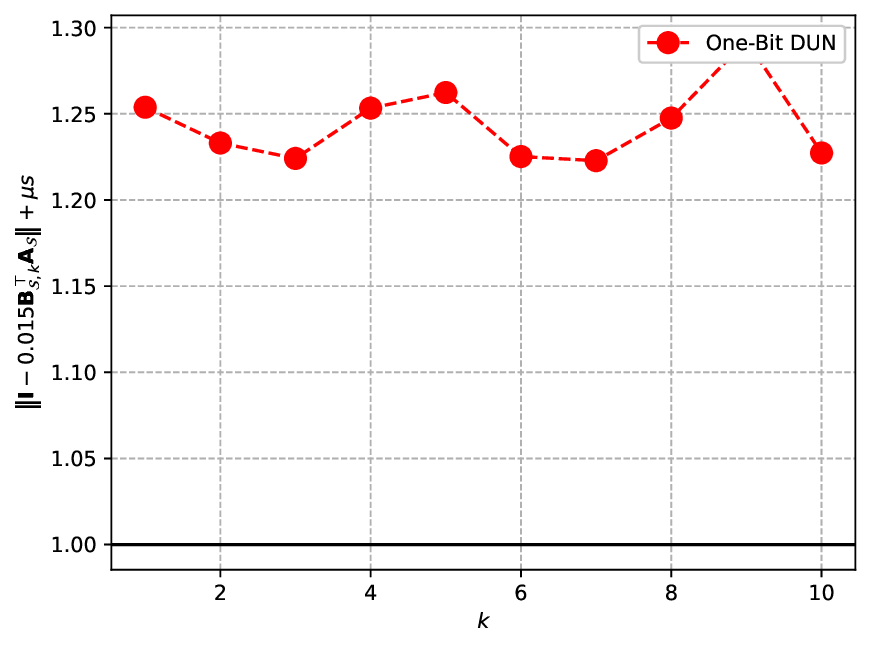}}\quad
    \subfloat[One-Bit DUN with $K=10,\delta=0.3,\lambda=0.015$.]
		{\includegraphics[width=0.3\columnwidth]{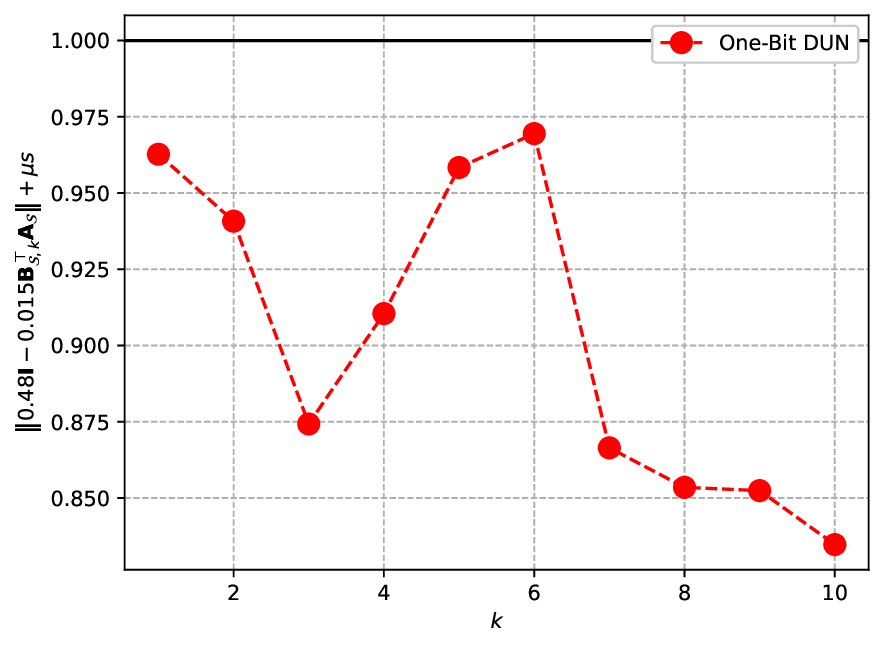}}
	\caption{Spectral plots of the ST update process for (a) DUN with $K=10,\delta=1$, (b) one-bit DUN with $K=10,\delta=1,\lambda=0.015$, and (c) one-bit DUN with $K=10,\delta=0.48,\lambda=0.015$. The results indicate that neither DUN nor one-bit DUN with $\delta=1$ learns weight matrices from the set defined in \eqref{to10}. However, by selecting an appropriate $\delta$ (e.g., $\delta=0.48$), the one-bit DUN is able to learn weights that belong to the set \eqref{to10}.}
\label{figure_4}
\end{figure}

\begin{figure}[t]
	\centering
    \subfloat[One-Bit DUN with $K=5,\delta=1,\lambda=0.02$.]
		{\includegraphics[width=0.3\columnwidth]{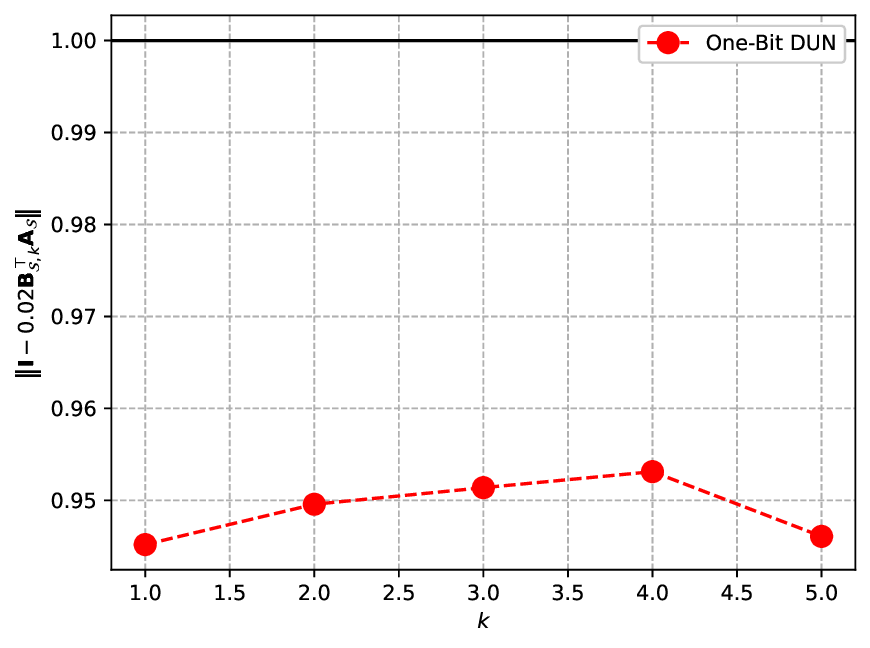}}\quad
	\subfloat[One-Bit DUN with $K=10,\delta=1,\lambda=0.02$.]
		{\includegraphics[width=0.3\columnwidth]{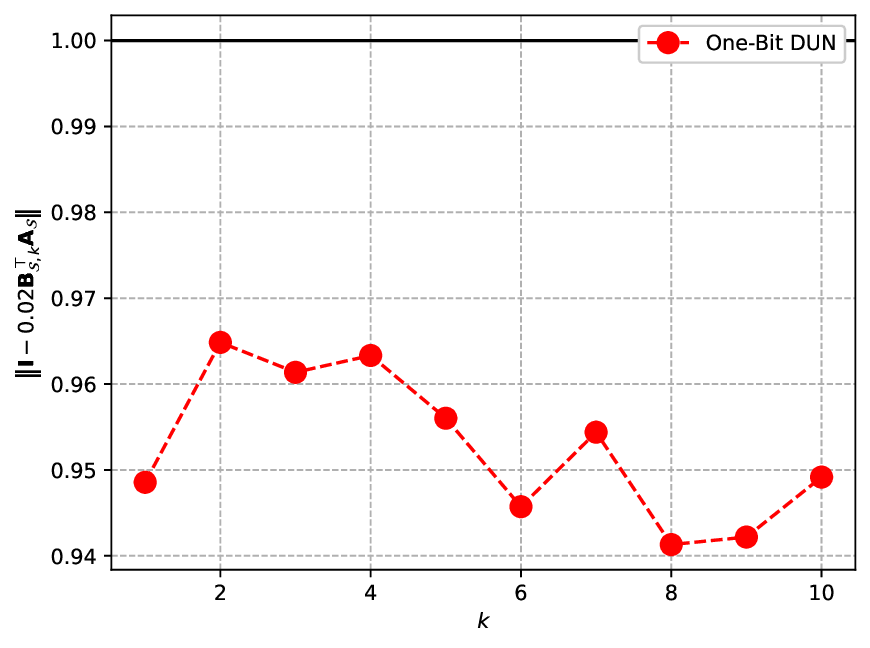}}
	\caption{Spectral plots of the form $f_k=\left\|\delta\mbI-\lambda\mbB_{\mathcal{S},k}^{\top}\mbA_{\mathcal{S}}\right\|$ for the ST update process in the one-bit DUN, shown for (a) $K=5,\delta=1$, (b) $K=10,\delta=1$. All results are obtained with a fixed scale value of $\lambda=0.02$.}
\label{figure_st_new}
\end{figure}
\begin{table*}[t]
\centering
\caption{Training NMSE, test NMSE, and generalization gap for one-bit DUN using the HT update process with (left) $K=5$ and (right) $K=10$ layers.}
\begin{minipage}{0.48\linewidth}
\centering
\resizebox{\linewidth}{!}{
\begin{tabular}{  c || c | c | c }
\hline
$\delta$ &  Training NMSE (dB) & Test NMSE (dB) & Generalization Gap (dB) \\[0.5 ex]
\hline
$1$ & $-6.27$ & $-6.27$& $0.0044$ \\
$0.95$ & $-6.27$ & $-6.27$& $0.0062$ \\
$0.9$ & $-6.06$ & $-6.06$& $0.0017$ \\
$0.85$ & $-5.79$ & $-5.78$& $0.0086$ \\
\hline
\end{tabular}
}
\label{table_7}
\end{minipage}
\hfill
\begin{minipage}{0.48\linewidth}
\centering
\resizebox{\linewidth}{!}{
\begin{tabular}{  c || c | c | c }
\hline
$\delta$ &  Training NMSE (dB) & Test NMSE (dB) & Generalization Gap (dB) \\[0.5 ex]
\hline
$1$ & $-8.47$ & $-8.45$& $0.0121$ \\
$0.95$ & $-7.51$ & $-7.49$& $0.0187$ \\
$0.9$ & $-6.73$ & $-6.73$& $0.0033$ \\
$0.85$ & $-6.57$ & $-6.55$& $0.0217$ \\
\hline
\end{tabular}
}
\label{table_8}
\end{minipage}
\end{table*}
\begin{table}[ht]
\caption{Training NMSE, test NMSE, and generalization gap for one-bit DUN using the ST update process with $K=10$ layers.}
\centering
\setlength{\tabcolsep}{4pt}
\begin{tabular}{  c || c | c | c }
\hline
$\delta$ &  Training NMSE (dB) & Test NMSE (dB) & Generalization Gap (dB) \\[0.5 ex]
\hline
$1$ & $-8.49$ & $-8.46$ & $0.03$ \\
$0.48$ & $-5.48$ & $-5.47$ & $0.01$ \\
\hline
\end{tabular}
\label{table_9}
\end{table}
\begin{table}[ht]
\caption{Training NMSE, test NMSE, and generalization gap for one-bit DUN using the ST update process with $K=5$ and $K=10$ layers.}
\centering
\setlength{\tabcolsep}{4pt}
\begin{tabular}{  c || c | c | c }
\hline
$\#$ Layers &  Training NMSE (dB) & Test NMSE (dB) & Generalization Gap (dB) \\[0.5 ex]
\hline
$5$ & $-8.30$ & $-8.29$ & $0.0143$ \\
$10$ & $-9.19$ & $-9.16$ & $0.0296$ \\
\hline
\end{tabular}
\label{table_st_new}
\end{table}
\begin{table}[ht]
\caption{Training NMSE, test NMSE, and generalization bound for high-resolution DUN using the ST update process with $K=20$ on the BSD500 dataset.}
\centering
\setlength{\tabcolsep}{4pt}
\begin{tabular}{  c || c | c | c }
\hline
$\delta$ &  Training NMSE (dB) & Test NMSE (dB) & Generalization Gap (dB) \\[0.5 ex]
\hline
$1$ & $-17.15$ & $-16.70$ & $0.45$ \\
$0.9$ & $-17.10$ & $-17.06$ & $0.04$ \\
$0.8$ & $-17.40$ & $-17.08$ & $0.32$ \\
\hline
\end{tabular}
\label{table_10}
\end{table}
\begin{table*}[ht]
\caption{Training NMSE, test NMSE, and generalization bound for one-bit DUN using the ST update process with $K=20$ on the BSD500 dataset.}
\centering
\setlength{\tabcolsep}{4pt}
\begin{tabular}{  c || c | c | c }
\hline
$\delta$ &  Training NMSE (dB) & Test NMSE (dB) & Generalization Gap (dB) \\[0.5 ex]
\hline
$1$ & $-15.99$ & $-15.68$ & $0.31$ \\
$0.9$ & $-14.16$ & $-14.15$ & $0.01$ \\
$0.8$ & $-10.51$ & $-10.50$ & $0.01$ \\
\hline
\end{tabular}
\label{table_11}
\end{table*}
\begin{table}[ht]
\caption{Training NMSE, test NMSE, and generalization bound for high-resolution DUN using the HT update process with $K=20$ on the BSD500 dataset.}
\centering
\setlength{\tabcolsep}{4pt}
\begin{tabular}{  c || c | c | c }
\hline
$\delta$ &  Training NMSE (dB) & Test NMSE (dB) & Generalization Gap (dB) \\[0.5 ex]
\hline
$1$ & $-14.63$ & $-14.28$ & $0.35$ \\
$0.9$ & $-15.52$ & $-15.15$ & $0.37$ \\
$0.8$ & $-15.55$ & $-15.17$ & $0.38$ \\
\hline
\end{tabular}
\label{table_12}
\end{table}
\begin{table}[ht]
\caption{Training NMSE, test NMSE, and generalization bound for one-bit DUN using the HT update process with $K=20$ on the BSD500 dataset.}
\centering
\setlength{\tabcolsep}{4pt}
\begin{tabular}{  c || c | c | c }
\hline
$\delta$ &  Training NMSE (dB) & Test NMSE (dB) & Generalization Gap (dB) \\[0.5 ex]
\hline
$1$ & $-12.61$ & $-12.60$ & $0.01$ \\
$0.9$ & $-11.14$ & $-10.96$ & $0.18$ \\
$0.8$ & $-9.04$ & $-9.01$ & $0.03$ \\
\hline
\end{tabular}
\label{table_13}
\end{table}
\textbf{Soft-Thresholding:} 
In Appendices~\ref{App_B}, \ref{App_D1}, and \ref{App_E2}, we established that the key parameters governing the convergence, stability, and sensitivity of the ST update process are given by $f_k\triangleq\left\|\delta\mbI-\mbW^{\top}_{\mathcal{S},k}\mbA_{\mathcal{S}}\right\|+\mu s$ for DUN and $f_k\triangleq\left\|\delta\mbI-\lambda\mbB^{\top}_{\mathcal{S},k}\mbA_{\mathcal{S}}\right\|+\mu s$ for one-bit DUN, respectively. Fig.~\ref{figure_4}(a) and (b) show these spectral terms for $k \in [10]$ with $\delta=1$ for the ST update process in DUN and one-bit DUN, respectively. Unlike the HT update process, neither DUN nor one-bit DUN learns weights belonging to the set \eqref{to10}. However, it is notable that the spectral terms $f_k$ are consistently smaller for the one-bit DUN. Table~\ref{table_9} presents the corresponding results for train/test NMSE and the generalization gap. It is important to emphasize that Theorem~\ref{theorem_1} provides a sufficient condition for ensuring a bounded generalization error. Specifically, if a learning algorithm demonstrates bounded convergence, stability, and sensitivity, then Theorem~\ref{theorem_1} guarantees that it will also exhibit bounded generalization ability. This is not the case for $\delta=1$ according to Appendices~\ref{App_B}, \ref{App_D1}, and \ref{App_E2} since the spectral terms $f_k$ are not bounded by one.

In the next experiment, we examine the effect of the parameter $\delta$ on the convergence and generalization ability of the ST update process. Similar to our argument for the HT update process, decreasing $\delta$ generally leads to degraded convergence performance. This trend is supported both theoretically by the convergence upper bound in \eqref{to6}, and empirically by the results presented in Table~\ref{table_9}. However, reducing $\delta$ also leads to smaller spectral terms $f_k$, as illustrated in Fig.~\ref{figure_4}(c). In particular, for $\delta=0.48$, all spectral terms $f_k$ are bounded by $1$. According to Theorem~\ref{theorem_1}, this reduction in spectral terms could improve the generalization ability of the ST update process. This is because the spectral terms directly influence the upper bounds on stability and sensitivity, as analyzed in Appendices~\ref{App_D1} and \ref{App_E2}, respectively.

Building on our earlier findings with the HT update process where the one-bit DUN learns binary weights from the set \eqref{ht1} when $\delta=1$, we now examine the spectral terms of the form $f_k=\left\|\delta\mbI-\lambda\mbB_{\mathcal{S},k}^{\top}\mbA_{\mathcal{S}}\right\|$ for the one-bit DUN using the ST update process. As shown in Fig.~\ref{figure_st_new}(a) and (b), it is noteworthy that the one-bit DUN with the ST update process also learns binary weights consistent with the set \eqref{ht1} when $\delta=1$. The corresponding results for train/test NMSE and the generalization gap are reported in Table~\ref{table_st_new} for both $K=5$ and $K=10$ layers. Note that in this experiment, we set $\left|\mathcal{S}\right| = 10$ and the scale parameter $\lambda=0.2$.

Herein, we conduct another numerical ablation study to examine the effect of $\delta$ on the performance and generalization ability of the one-bit DUN using the BSD500 dataset. Specifically, we compare both high-resolution and one-bit DUNs under two proximal operators—ST and HT—whose theoretical convergence guarantees were previously discussed. The experimental setting is identical to that used in the results reported in Table~\ref{table_14} for a $50\%$ CS ratio. As shown in Table~\ref{table_10}, introducing $\delta = 0.9$ slightly degrades training performance but notably improves generalization. With $\delta = 0.8$, the generalization gap is larger than that of $\delta = 0.9$, yet still better than the case without $\delta$. Interestingly, for the high-resolution network, the performance degradation from adding $\delta$ is negligible.

For the one-bit DUN with the ST operator, whose results are reported in Table~\ref{table_11}, applying $\delta = 0.9$ reduces the generalization gap to $0.01$, marking a significant improvement despite a slight drop in training performance. With $\delta = 0.8$, the degradation in both training and test accuracy becomes more noticeable, yet it yields the lowest generalization gap among all settings.

For the case where the high-resolution DUN employs the HT operator, Table~\ref{table_12} shows that introducing $\delta = 0.9$ or $\delta = 0.8$ surprisingly improves both training and test performance. However, this enhancement comes at the cost of a larger generalization gap. It is important to note that the upper bound on the generalization gap provided in Theorem~\ref{theorem_1} is statistical and reflects a worst-case scenario. As this experiment demonstrates, reducing the spectral norm $\alpha_{\phi}$ via $\delta$ does not always lead to improved generalization. Nonetheless, in all cases, adding $\delta$ does not degrade performance and, in some instances, can even improve it.

For the one-bit DUN with the HT operator, whose results are reported in Table~\ref{table_13}, setting $\delta = 0.9$ leads to degradation in both generalization gap and performance across training and test stages, indicating that this choice is suboptimal for the experiment. While $\delta = 0.8$ yields a smaller generalization gap than $\delta = 0.9$, it still performs worse than the $\delta = 1$ case in terms of both generalization and overall accuracy.

Based on this experiment using the BSD500 dataset, we observe that introducing the parameter $\delta$ generally enhances the generalization performance of high-resolution DUNs. However, in the case of one-bit DUNs, which naturally drive the spectral norm $\alpha_{\phi}$ below $1$ even without the use of $\delta$, quantization alone often suffices to enhance generalization. In fact, introducing $\delta$ in the one-bit setting can lead to degradation in both training and test performance. While the theoretical guarantees provided in this work are statistical and reflect worst-case scenarios, our empirical results offer a practical insight: incorporating $\delta$ is beneficial for high-resolution DUNs, but is typically unnecessary for one-bit DUNs.

\section{Overlap Analysis of Sparsity Patterns}
\label{sss}
Beyond overall sparsification ratios, it is also important to assess how well different schemes align with the problem-driven sparse structure. We compute the overlap between the zero positions introduced by ternary quantization and those predefined by the physics of the problem. The results show an overlap of $48\%$ on the large dataset and $63\%$ on the BSD500 dataset. These values indicate that while both methods induce sparsity, a significant portion of the zeros selected by ternary do not coincide with the problem’s inherent structure. For the large model, ternary zeros overlap with PIBiNN’s physics-driven zeros by only $48\%$, indicating that ternary often removes different (and potentially essential) couplings. In contrast, for BSD500 the overlap is $63\%$, suggesting ternary partially aligns with the physics structure in this setting. These results highlight that PIBiNN achieves systematic, problem-consistent sparsity, whereas ternary’s sparsity pattern is dataset-dependent and less predictable.

\section{Limitations and Future Work}
\label{I}
A key limitation of our current study lies in latency. While the proposed physics-inspired binarization achieves strong compression and accuracy trade-offs, our implementation relies on CPU-based reference kernels for packed inference. As a result, the latency results do not yet reflect the full efficiency potential of the approach. We emphasize, however, that this is primarily a systems-level concern: the core architectural and theoretical contributions of this work remain valid, and practical speedups can be realized by integrating optimized GPU or hardware kernels, which we leave to future systems research.

More broadly, our study is intentionally scoped to inverse problems with well-defined physics, where problem-driven sparsity provides meaningful structure. Extending the framework to more general domains will require additional validation. Likewise, our choice of a single data-driven scale per model emphasizes conceptual clarity and minimal parameter overhead. Future work may explore mixed-precision strategies or hybrid quantization schemes (e.g., combining one-scale and channel-wise quantization depending on layer importance), to balance performance and flexibility.

Overall, while our focus has been on demonstrating architectural and theoretical advantages, complementary efforts in systems optimization and broader application domains present natural next steps.

\end{document}